\documentclass{article} 
\usepackage{iclr2026_conference,times}


\usepackage{amsmath,amsfonts,bm}









\def\eqref#1{equation~\ref{#1}}









\def\1{\bm{1}}










\DeclareMathAlphabet{\mathsfit}{\encodingdefault}{\sfdefault}{m}{sl}
\SetMathAlphabet{\mathsfit}{bold}{\encodingdefault}{\sfdefault}{bx}{n}













\usepackage{xcolor} 
\definecolor{marineblue}{RGB}{3,78,162} 
\definecolor{BrickRed}{RGB}{203, 65, 84}
\usepackage[colorlinks=true,
            linkcolor=BrickRed,
,      
            urlcolor=marineblue,        
        citecolor=marineblue  
]{hyperref}

\usepackage{url}

\usepackage[utf8]{inputenc} 
\usepackage[T1]{fontenc}    
\usepackage{hyperref}       
\usepackage{url}            
\usepackage{booktabs}       
\usepackage{amsfonts}       
\usepackage{nicefrac}       
\usepackage{microtype}      
\usepackage{enumitem}

\usepackage[export]{adjustbox}

\newcommand{\rebuttal}[1]{\textcolor{black}{#1}}

\usepackage{amsmath}
\usepackage{amssymb}
\usepackage{mathtools}
\usepackage{amsthm}
\usepackage{subcaption}

\usepackage{amsthm}

\newtheorem{thm}{Theorem}

\newtheorem*{thm*}{Theorem}

\newtheorem{defn}{Definition}

\newtheorem*{proposition*}{Proposition} 

\usepackage{overpic}

\usepackage[most]{tcolorbox}   
\usepackage[table,xcdraw]{xcolor}

\tcbset{
  takeaway/.style={
    colback=gray!12,        
    colframe=gray!60!black, 
    arc=2mm,                
    boxrule=0pt,            
    left=6pt,right=6pt,     
    top=4pt,bottom=4pt,
  }
}

\title{Navigating the Latent Space Dynamics \\ of Neural Models}

%

\author{%
 Marco Fumero\thanks{Corresponding email: marco.fumero@ist.ac.at
} \\
IST Austria
 \\
\And
 Luca Moschella\thanks{Currently at Apple.} \\
 Sapienza \\
 \\
\And
 Emanuele Rodolà\thanks{Equally advising} \\
 Sapienza / Paradigma \\
 \\
\And
 Francesco Locatello\footnotemark[3] \\
 IST Austria 
 \\
}

\theoremstyle{plain}
\newtheorem{theorem}{Theorem}[section]
\newtheorem{proposition}[theorem]{Proposition}
\newtheorem{lemma}[theorem]{Lemma}

\theoremstyle{definition}

\newtheorem{assumption}[theorem]{Assumption}
\theoremstyle{remark}

\usepackage[acronym]{glossaries}


\newacronym[
    longplural={Neural Networks}, 
    shortplural={NNs}             
]{nn}{NN}{Neural Network}
\newacronym[
    longplural={AutoEncoders}, 
    shortplural={AEs}             
]{ae}{AE}{AutoEncoder}

%

\iclrfinalcopy 
\begin{document}

\maketitle

\begin{abstract}
Neural networks transform high-dimensional data into compact, structured representations, often modeled as elements of a lower dimensional latent space. In this paper, we present an alternative interpretation of neural models as dynamical systems acting on the latent manifold. Specifically, we show that autoencoder models implicitly define a \emph{latent vector field} on the manifold, derived by iteratively applying the encoding-decoding map, without any additional training. We observe that standard training procedures introduce inductive biases that lead to the emergence of attractor points within this vector field.
Drawing on this insight, we propose to leverage the vector field as a \emph{representation} for the network, providing a novel tool to analyze the properties of the model and the data. This representation enables to: $(i)$ analyze the generalization and memorization regimes of neural models, even throughout training; $(ii)$ extract prior knowledge encoded in the network's parameters from the attractors, without requiring any input data; $(iii)$ identify out-of-distribution samples from their trajectories in the vector field.
We further validate our approach on vision foundation models, showcasing the applicability and effectiveness of our method in real-world scenarios.
\end{abstract}

\everypar{\looseness=-1}
\section{Introduction}
\begin{figure}[h]
        \centering
        \begin{subfigure}{0.198\linewidth}
    \centering
    \includegraphics[trim=120pt 0pt 120pt 0pt, clip, width=\linewidth]{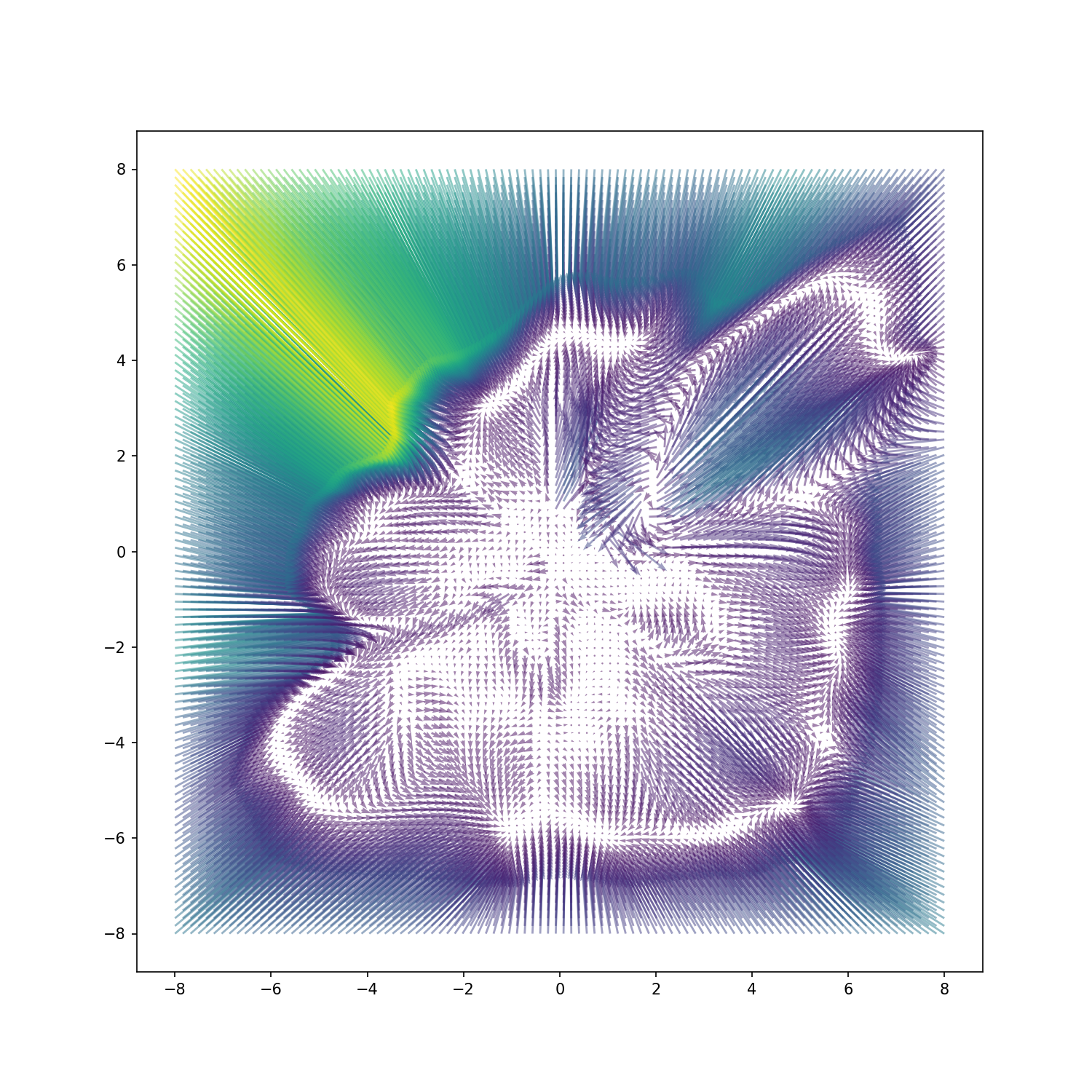}
\end{subfigure}%
\begin{subfigure}{0.198\linewidth}
    \centering
    \includegraphics[trim=120pt 0pt 120pt 0pt, clip, width=\linewidth]{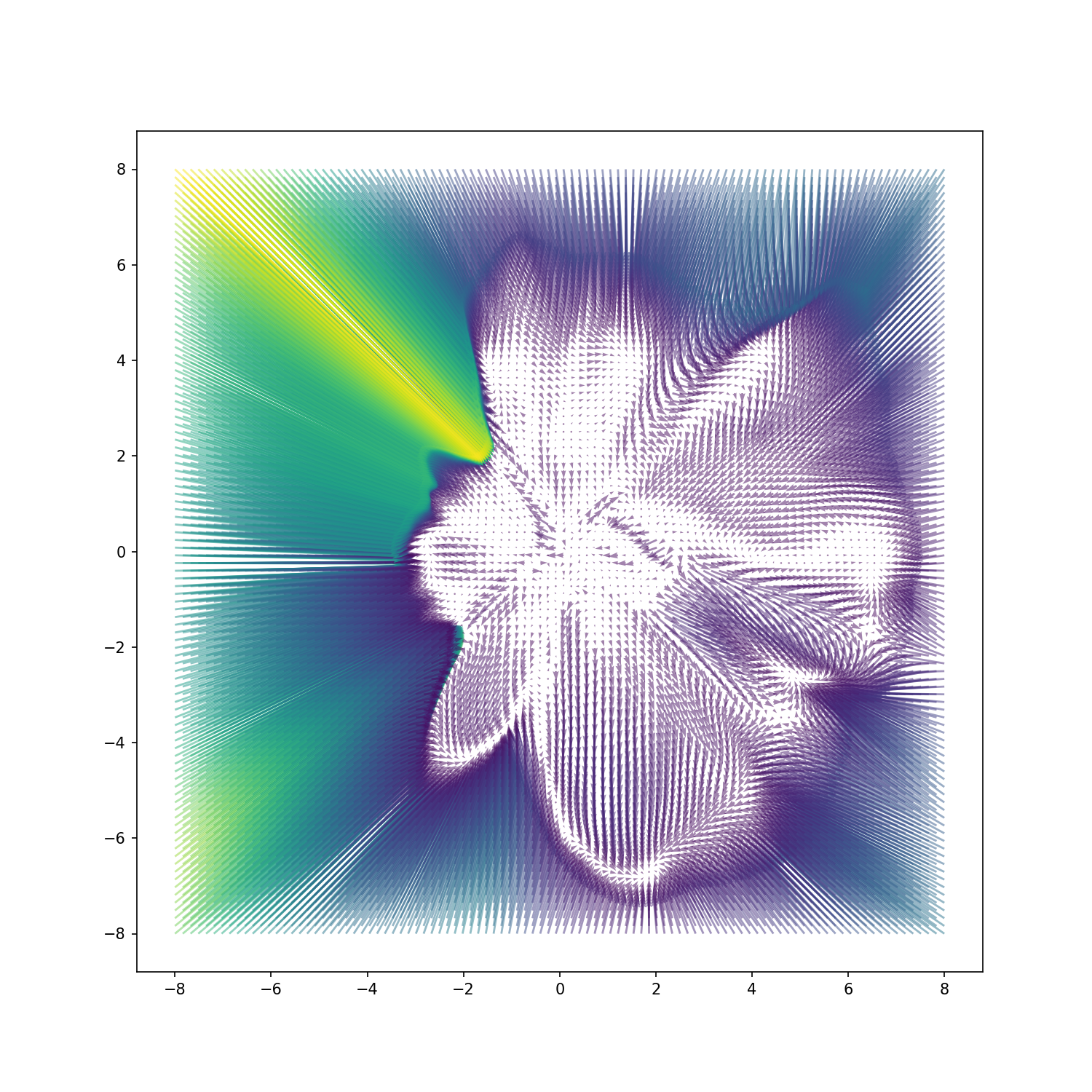}
\end{subfigure}%
\begin{subfigure}{0.198\linewidth}
    \centering
    \includegraphics[trim=120pt 0pt 120pt 0pt, clip, width=\linewidth]{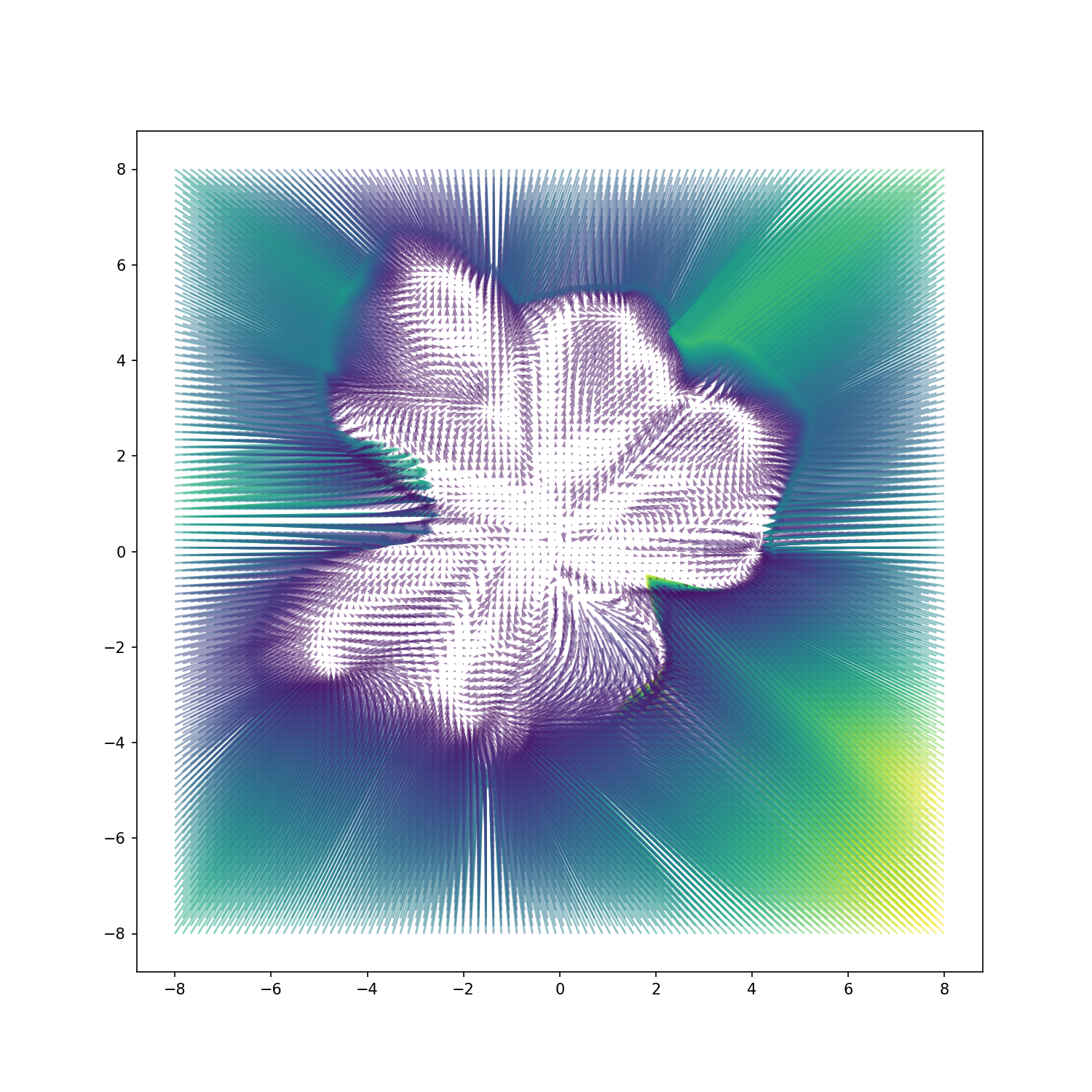}
\end{subfigure}%
\begin{subfigure}{0.198\linewidth}
    \centering
    \includegraphics[trim=120pt 0pt 120pt 0pt, clip, width=\linewidth]{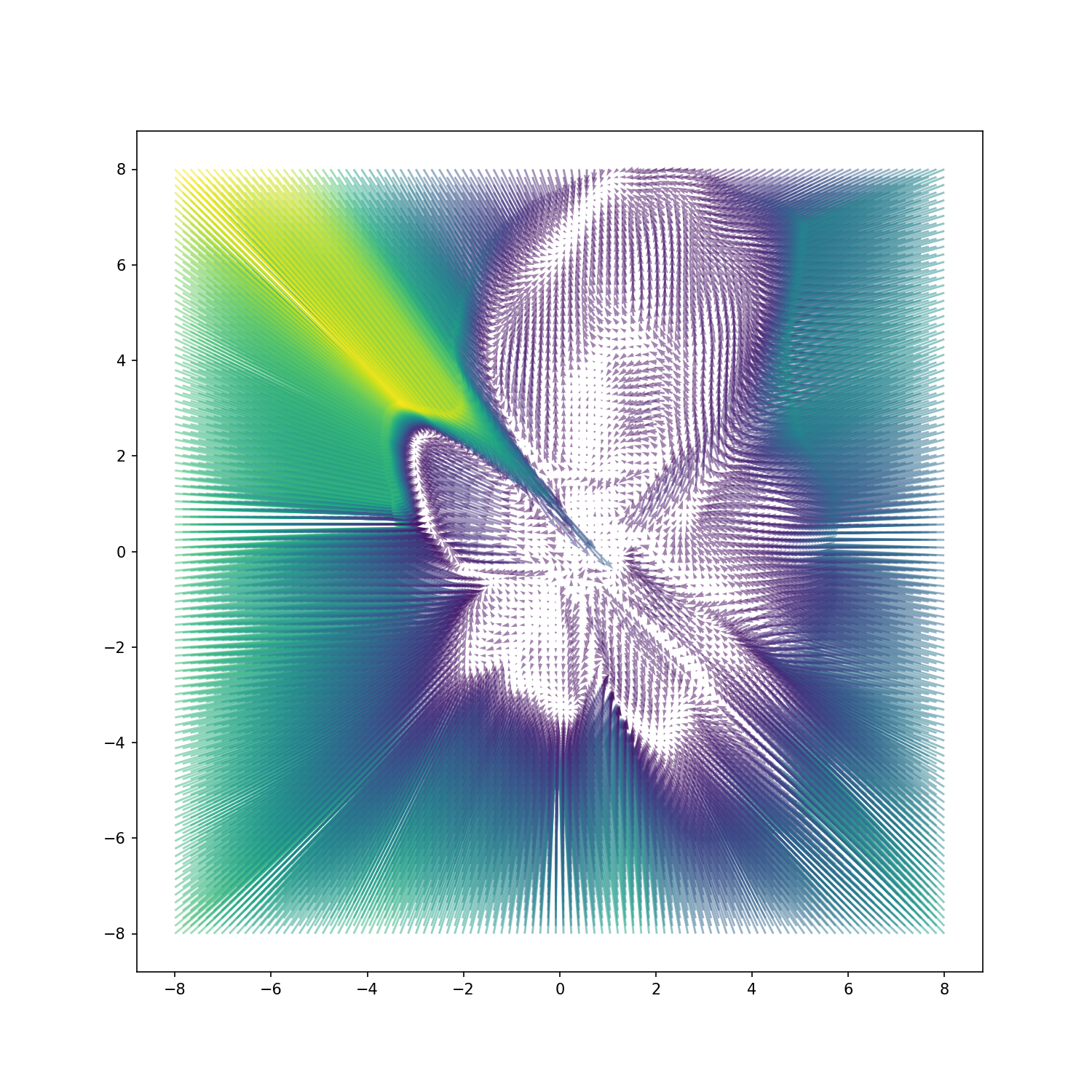}
\end{subfigure}%
\begin{subfigure}{0.198\linewidth}
    \centering
    \includegraphics[trim=120pt 0pt 120pt 0pt, clip, width=\linewidth]{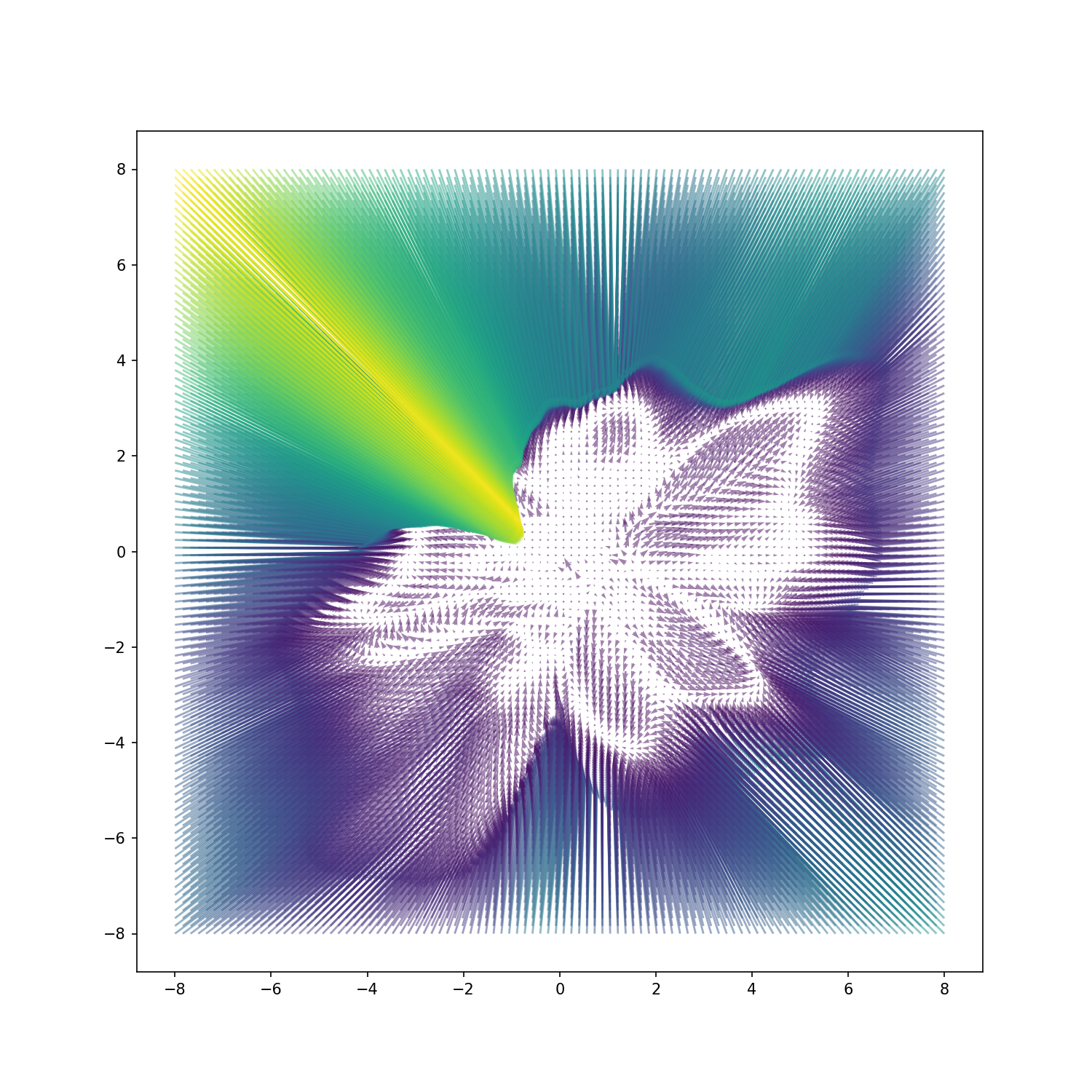}
\end{subfigure}
    \caption{\emph{Latent dynamics of AEs}. Latent vector fields induced by autoencoders with bottleneck $k=2$, trained on \texttt{MNIST}, with $\mathbf{z}_0 \sim \mathcal{U}[-8,8]$. Models with different initializations are shown. Colors (viridis colormap) represent vector norms ranging from violet (low) to yellow (high). The shape of the latent manifold identifies with the encoder's support. White regions indicate where the vector field vanishes, revealing attractors aligned with high-density areas of the data distribution.}
    \label{fig:vfields_seed}
\end{figure}

Neural networks are powerful function approximators, capable of solving complex tasks across a wide range of domains~\citep{jumper2021highly,radford2021learning}. A core component of their success lies in their ability to transform high-dimensional inputs into compact, structured representations~\citep{bengio2013representation}, typically modeled as points in a lower-dimensional latent space. 

In this work, we propose a novel perspective: for any given autoencoder (AE) architecture, there exists an associated \emph{latent vector field} (see Figure~\ref{fig:vfields_seed}) that characterizes the model's behavior in representation space. This field arises naturally by iterating the encoder-decoder map in the latent space, requiring no additional training. 
Intuitively, such dynamics often settle into \emph{attractors}: stable states toward which nearby trajectories converge, summarizing the long-term behavior of the system.

While prior work has shown that overparameterized autoencoders can memorize training data under strong assumptions~\citep{radhakrishnan2020overparameterized,jiang2020associative}, we consider a broader setting. We show that memorization emerges as just {\em one} type of attractor in the latent vector field, and that the structure of this field reflects deeper properties of the network, such as generalization behavior and sensitivity to distribution shifts.

Our contributions can be summarized as follows:
\begin{itemize}
\item We show that every AE implicitly defines a latent vector field, whose trajectories and fixed points encode properties of both the model and the data.
\item We demonstrate that most neural mappings are contractive, leading naturally to the emergence of fixed points and attractors in the latent space.
\item We empirically connect attractors to the network's memorization and generalization regimes, showing how they evolve during training.
\item We show experimentally that vision foundation models can be probed in a data-free manner: initializing with noise, we recover attractors that reveal semantic information embedded in the model's weights.
\item Finally, we show that trajectories in the latent vector field can capture the learned data distribution and serve as a signal for detecting distribution shifts.
\end{itemize}

\section{Method}\label{sec:method}
\paragraph{Notation and background}
We consider neural models $F$ as compositions of encoder-decoder modules, defined as $F_\Theta=D_{\theta_2} \circ E_{\theta_1}$ parametrized by $\Theta=[\theta_1,\theta_2]$. The encoder $E_{\theta_1}$ maps inputs $\mathbf{x} \sim p(\mathbf{x})$ supported on $ \mathcal{X}\subset \mathbb{R}^m$ to a typically lower-dimensional space $\mathcal{Z}\subset \mathbb{R}^k$, and the decoder $D_{\theta_2}$ reconstructs the input. The model is trained to minimize the mean squared error $\mathcal{L}_{MSE}$: 
\begin{align}\label{eq:mse}
     \mathcal{L}_{MSE}(\mathbf{x}) =\sum_{\mathbf{x} \in X}\|\mathbf{x} -F_\Theta(\mathbf{x})\|_2^2 + \lambda \mathcal{R}(\Theta)\,,
\end{align} 
 where $\mathcal{R}$ is either an explicit or implicit regularization term encouraging contractive behavior in $F$. 
 
 We posit that minimizing this objective leads to a reduction in the spectral norm of the Jacobian, $\| J_F(\mathbf{x})\|_\sigma$ for $\mathbf{x} \sim p(\mathbf{x})$. Examples of regularization include weight decay, where $\mathcal{R}= \| \Theta \|_2^F$, or data augmentation by sampling transformations $T \sim p(T)$ and minimizing: 
 \begin{align}
     \mathcal{L}_{MSE}(\mathbf{x}) =\sum_{\mathbf{x} \in \mathcal{X}}\|\mathbf{x} -F(\mathbf{x})\|_2^2 + \sum_{T \in p(T)}\|\mathbf{x} -F(T\mathbf{x})\|_2^2\,.
\end{align} 
Such augmentations include additive Gaussian noise in denoising AEs~\citep{vincent2008extracting} and input masking in masked AEs~\citep{he2022masked}.  Another form of contractive pressure is the bottleneck dimension $k=\dim(\mathcal{Z})$, which places a hard upper bound on the rank of the encoder Jacobian, $\mathrm{rank}(J_E)<=k$ and, by the chain rule, constrains the full model Jacobian $J_F(\cdot)=J_E(\cdot) \odot J_D(\cdot)$.
Table~\ref{tab:autoencoders-contractive} (Appendix) lists many AE variants, including denoising AEs (DAEs)~\citep{vincent2008extracting}, sparse AEs (SAEs)~\citep{ng2011sparse}, variational AEs  (VAEs)~\citep{kingma2013auto}, and others~\citep{rifai2011contractive,alain2014regularized,gao2024scaling}, highlighting how their objectives promote local contractive behavior around training data. 

Given a possibly pretrained AE model, we define the map $f(\mathbf{z})= E \circ D (\mathbf{z})$ and study its repeated application $f(\ldots f(f(\mathbf{z})))$, which can be modeled as a differential equation.

\subsection{The Latent Dynamics of Neural Models}
Given a sample $\mathbf{z} \in \mathcal{Z} \subseteq \mathbb{R}^k$, we study the effect of repeatedly applying the map $f$, i.e., $f \circ f \circ \ldots f(\mathbf{z})$. By introducing a discrete time parameter $t$, this iterative process defines the discrete ODE: 
\begin{align} \label{eq:ode}
    \begin{cases}
        & \mathbf{z}_{t+1}=f(\mathbf{z}_t) \\
            &\mathbf{z}_{0}=\mathbf{z}
    \end{cases}
\end{align}
which discretizes the following continuous differential equation:
\begin{align}
    \frac{\partial \mathbf{z}}{\partial t}= f(\mathbf{z}) - \mathbf{z}
\end{align}
In Eq.~\ref{eq:ode}, the map $f$ defines the pushforward of a \emph{latent vector field} $V: \mathbb{R}^k \mapsto \mathbb{R}^k$, tracing nonlinear trajectories in latent space (Fig.~\ref{fig:vfields_seed}). A natural question is whether the ODE has well-defined and unique solutions. By the Banach fixed-point theorem, this holds if and only if $f$ is Lipschitz-continuous \rebuttal{with Lipschitz constant $C$<1}.


\begin{defn}
    A function $f: \mathcal{Z} \mapsto \mathcal{Z}$ is Lipschitz-continuous if there exists a constant $C$ s.t. for every pair of points $\mathbf{z_1},\mathbf{z_2}$: \begin{align}
        d(f(\mathbf{z_1}),f(\mathbf{z_2}))_\mathcal{Z} \leq C  \ d(\mathbf{z_1},\mathbf{z_2})_\mathcal{Z}\,.
    \end{align}
When $C<1$, $f$ is called \emph{contractive}, for a given metric $d$ on $\mathcal{Z}$. 
\end{defn}

For any contractive map, Eq.~\ref{eq:ode} admits fixed-point solutions, i.e., repeatedly applying the map $f$ will converge to a unique solution $\mathbf{z}^*$ satisfying $\mathbf{z}^*=f(\mathbf{z}^*)$. 
The fixed points $\mathbf{z}^*$ can act as attractors, capturing and summarizing the system’s long-term dynamics.
\begin{defn}
    A fixed point $\mathbf{z}=f(\mathbf{z})$ is an attractor of a differentiable map $f$ if all eigenvalues of the Jacobian $J$ of $f$ at $\mathbf{z}$ are strictly less than one in absolute value.
\end{defn}
When $f$ is nonlinear, the ODE in Eq.~\ref{eq:ode} can have multiple solutions depending on the initial conditions $\mathbf{z_0}$.
The previous definition allows for a definition of Lipschitz continuity, which is inherently local: 
\begin{defn}
Let $f$ : $\mathcal{Z} \mapsto \mathcal{Z}$ be differentiable and $C$-Lipschitz continuous. Then the Lipschitz constant $C$ is given by: $C$ = $sup_{\mathbf{z} \in \mathcal{Z}} \|J_f(\mathbf{z})\|_\sigma$ , where $J_f(\mathbf{z}) $ is the Jacobian of $f$ evaluated at $\mathbf{z}$ and $\| \cdot\|_\sigma$ is the spectral norm.
\end{defn}

The set of initial conditions $\mathbf{z}_0$ leading to the same attractor $\mathbf{z}^*$, is denoted as \emph{basin of attraction}. 


\textbf{Why are neural mappings contractive in practice?}
We argue that mappings learned by neural AEs and their variants tend to be \emph{locally contractive}, i.e.,their Jacobians have small eigenvalues near training examples. This behavior emerges naturally from several explicit and implicit inductive biases present in modern training pipelines. Below, we outline the main factors promoting contractive behavior:
\begin{itemize}[leftmargin=*]
    \item \textbf{Initialization bias.}
Standard initialization schemes~\citep{lecun2002efficient,glorot2010understanding,he2015delving} are designed to preserve activation variance at the start of training to avoid vanishing or exploding gradients. Theoretical arguments in support assume i.i.d. inputs and weights, and are often tied to specific activation functions (e.g., ReLU in~\citep{he2015delving}). However, real-world training data is typically correlated, and architectural features like residual connections~\citep{He_2016_CVPR} break weight independence. As a result, networks often exhibit a bias toward mappings that are globally expansive or contractive, empirically skewed toward the latter~\citep{poole2016exponential}.

We illustrate this empirically in Figure~\ref{fig:variance_activations_init} (Appendix), showing that various vision backbones exhibit contractive behavior at initialization. Figure~\ref{fig:dynamics_field} (left) shows a 2D latent vector field which is globally contractive at initialization towards a single attractor at the origin. This contractive behavior also holds in higher dimensions (see the number of attractors at epoch 0 in Figure~\ref{fig:dynamics_num}).
    \item \textbf{Explicit regularization.}  
Common regularization methods like weight decay~\citep{d2024we} encourage contraction by penalizing the norm of model parameters. In the linear case, minimizing parameter norms directly reduces the Jacobian's spectral norm, making the map contractive. While this link is less direct for nonlinear models, the effect due to weight decay persists in practice. Regularization is often integrated into optimizers used in large-scale models~\citep{loshchilov2017decoupled}. Additional architectural constraints, such as small bottleneck dimensions, also limit the rank of \( J_F(\mathbf{z}) \). Soft constraints include KL divergence in VAEs or sparsity penalties in SAEs (see Table~\ref{tab:autoencoders-contractive} in the Appendix for a complete overview).
%
    \item \textbf{Implicit regularization.}  
Data augmentations introduce local perturbations around training examples, effectively defining a neighborhood structure in input space. For instance, Gaussian noise in DAEs or masking in MAEs perturbs inputs along specific directions. These augmentations implicitly regularize the Jacobian \( J_f(\mathbf{x}) \) by penalizing sensitivity to those perturbations. Unlike parametric regularization, this effect is inherently local and nonparametric.
\end{itemize}

\section{Theoretical remarks}\label{sec:theoretical}
In this section, we study the behavior of the latent vector field, focusing on its trajectories (Section~\ref{sec:theory_traj}) as well as its fixed points and attractors (Section~\ref{sec:characterizing}), when they exist. Formal statements and proofs of all theorems and propositions are provided in Appendix~\ref{app:proofs}.

\subsection{Trajectories of the latent vector field} \label{sec:theory_traj}
Prior work~\citep{miyasawa1961empirical,robbins1992empirical,alain2014regularized} has shown that, under ideal conditions, the residual of a denoising AE trained with noise variance $\sigma^2$ approximates the score function, i.e., the gradient of the log-density of the data, as $\sigma \rightarrow 0$.
Interestingly,~\cite{alain2014regularized} shows a connection between denoising and contractive AEs, where the Jacobian norm of the input-output mapping is explicitly penalized.

We build on this by observing a more general phenomenon: when an autoencoding map is locally contractive relative to a chosen neighborhood structure (e.g., Gaussian noise, masking) and sufficiently approximates the input distribution $p(\mathbf{x})$, the induced latent vector field pushes points in the direction of the score of the corresponding prior in latent space. 

Informally, this implies that the vector field acts to nonlinearly project samples toward regions of high probability on the data manifold.

\begin{thm}[informal, proof in Appendix~\ref{app:proof_thm1}]  \label{thm:thm_1}
Let $F$ be a trained autoencoder and let $q(\mathbf{z}) = \int p(\mathbf{x}) q(\mathbf{z}|\mathbf{x}) d\mathbf{x}$  be the marginal distribution induced in latent space. Assume $q(\mathbf{z})$ is smooth and that there exists an open neighborhood $\Omega\supseteq\mathrm{supp}\,q$ and a constant $L<1$ such that
$\sup_{\mathbf{z}\in\Omega}\bigl\lVert J_f(z)\bigr\rVert_{\sigma} \;\le\; L$. Then, latent dynamics $f(\mathbf{z})-\mathbf{z}$ in $\Omega$ is \rebuttal{locally} proportional to the score function $\nabla_\mathbf{z} \log q(\mathbf{z})$.
\end{thm}

This result establishes a general link between the vector field's trajectories and the score function, under the assumption of local contractivity. In practice, neural networks often strike a balance between the reconstruction loss (Eq.~\ref{eq:mse}) and regularization on the Jacobian, which contributes to the emergence of attractors in the latent space. 

An important implication of Theorem~\ref{thm:thm_1} is that integrating the vector field $f(\mathbf{z})-\mathbf{z} $ effectively estimates the log-density, i.e., $\int_\mathbf{z} \nabla \log q(\mathbf{z}) dz =\log q(\mathbf{z}) +C$ in unbounded domains.
If the Jacobian $J_f$ is symmetric (e.g., in AEs with tied weights~\citep{alain2013gated}), then the latent vector field is \emph{conservative} and corresponds to the gradient of a potential $V_f=\nabla E$. In this case, the AE defines an energy-based model with  $q(\mathbf{z})=e^{-E(\mathbf{z})}$ \citep{zhai2016deep,song2021train}. In the general (non-conservative) case, we show empirically in Section~\ref{sec:traj_ood} that the trajectories still reflect the learned prior distribution.

However, attractors are generally not accessible solving the fixed-point equation with first order methods, unless initialized very close to them. The following result formalizes this:
\begin{proposition}[informal, proof in Appendix~\ref{app:proof_prop_3_1}]
\label{prop:prop_3_1}
    Iterations of the map $f$ in Eq.~\ref{eq:ode} correspond to gradient descent on a potential $L(\mathbf{z})$ (i.e., $f \approx \mathbf{z}-\alpha L(\mathbf{z})$) if and only if $f$ is an isometry (its eigenvalues are near 1), or the dynamics occur near attractors, where $J_f$ vanishes. 
\end{proposition}

This underscores the role of higher-order dynamics in the vector field: repeated applications of $f$ trace complex, nonlinear trajectories that can escape spurious or unstable fixed points.


\subsection{Characterizing the attractors of the latent vector field}\label{sec:characterizing}

In this section, we aim to characterize what the fixed point solutions of Eq.~\ref{eq:ode} represent. In Appendix~\ref{app:base_cases} we consider the two cases of linear and homogeneous networks as examples, to build intuition on the characterization of attractors and latent vector field.




\textbf{Between memorization and generalization.}
\begin{figure}[t]
\begin{subfigure}
    {0.3\linewidth}    \centering
        \begin{overpic}
        [trim=0cm 0cm 0cm 0cm,clip,width=\linewidth, grid=false]{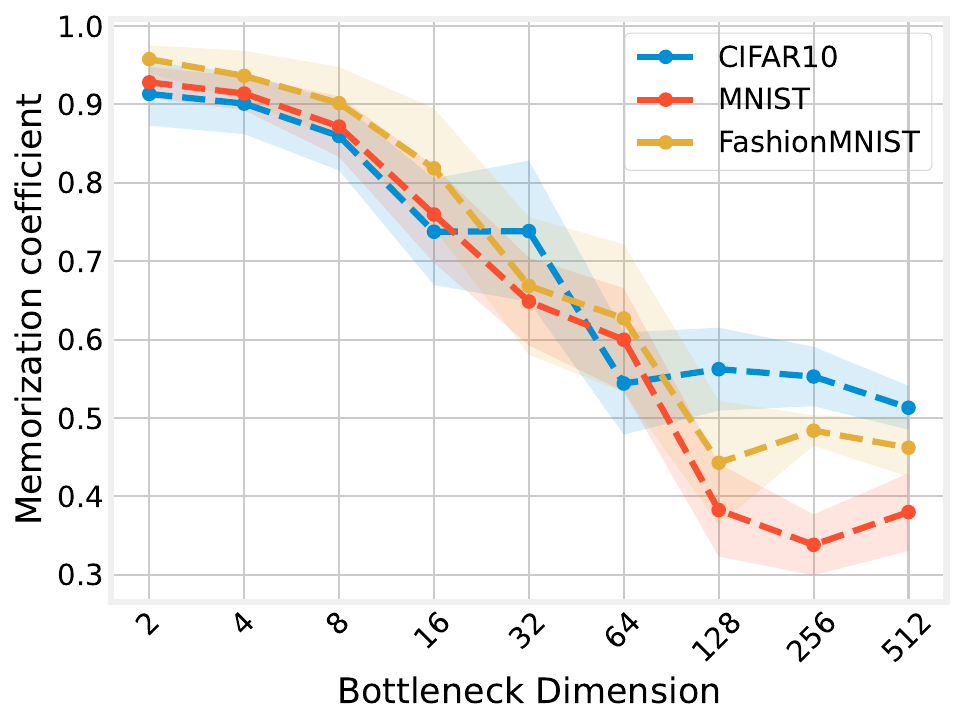}
        \end{overpic}
\end{subfigure}
\begin{subfigure}
    {0.3\linewidth}    \centering
        \begin{overpic}
        [trim=0cm 0cm 0cm 0cm,clip,width=\linewidth, grid=false]{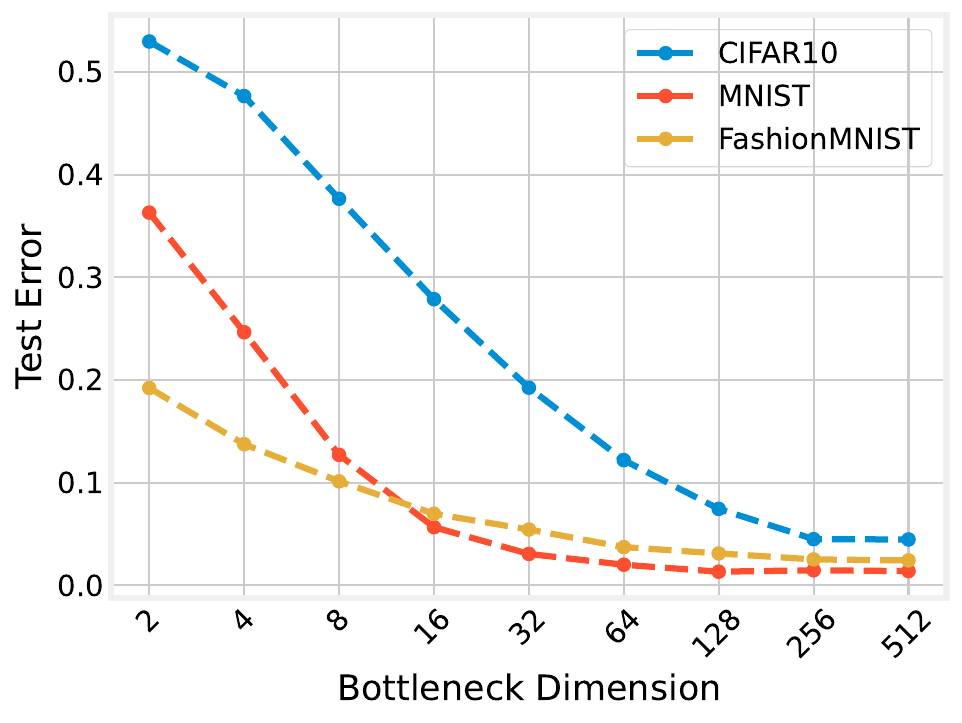}
        \end{overpic}
\end{subfigure}
\hspace{.225cm}
\begin{subfigure}{0.3\linewidth}    \centering
\begin{subfigure}{\linewidth}    \centering
        \begin{overpic}
        [trim=0cm 0cm 0cm 1cm,clip,width=\linewidth, grid=false]{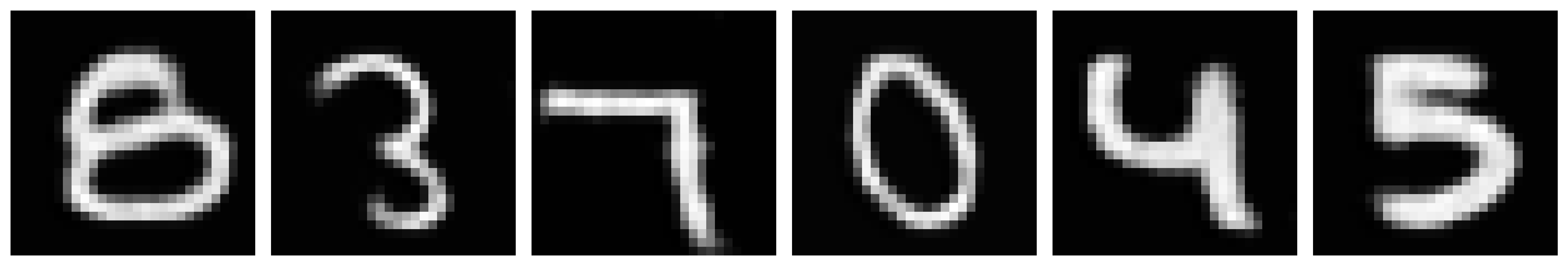}
        \put(-5,6){\rotatebox{90}{\tiny$4$}}
        \end{overpic}
\end{subfigure}
\begin{subfigure}{\linewidth}    \centering
        \begin{overpic}
        [trim=0cm 0cm 0cm 0cm,clip,width=\linewidth, grid=false]{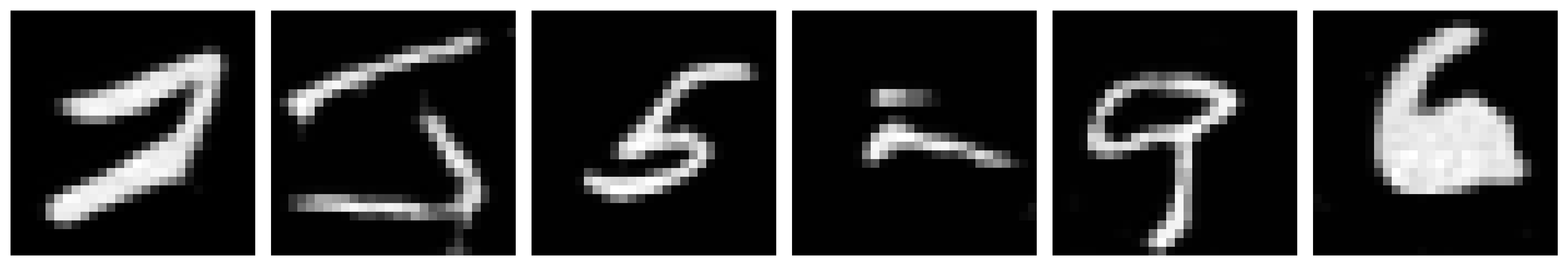}
        \put(-5,6){\rotatebox{90}{\tiny$16$}}
        \end{overpic}
\end{subfigure}
\begin{subfigure}{\linewidth}    \centering
        \begin{overpic}
        [trim=0cm 0cm 0cm 0cm,clip,width=\linewidth, grid=false]{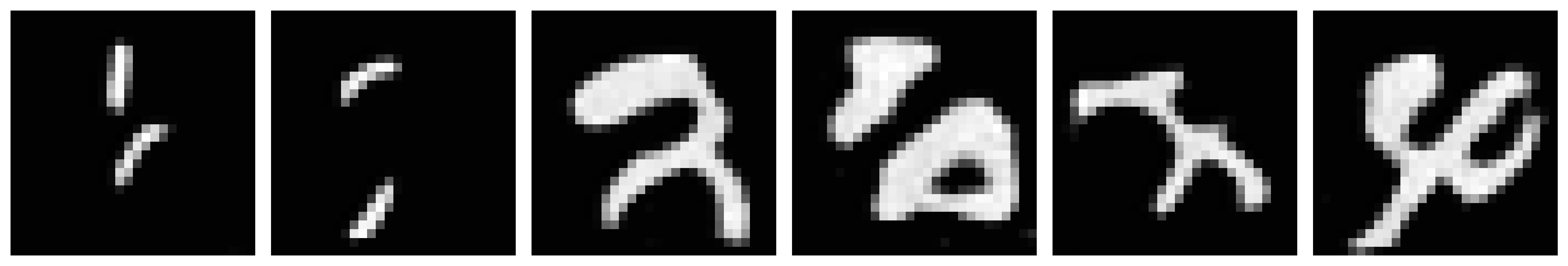}
        \put(-5,6){\rotatebox{90}{\tiny$32$}}
        \end{overpic}
\end{subfigure}
\begin{subfigure}{\linewidth}    \centering
        \begin{overpic}
        [trim=0cm 0cm 0cm 0cm,clip,width=\linewidth, grid=false]{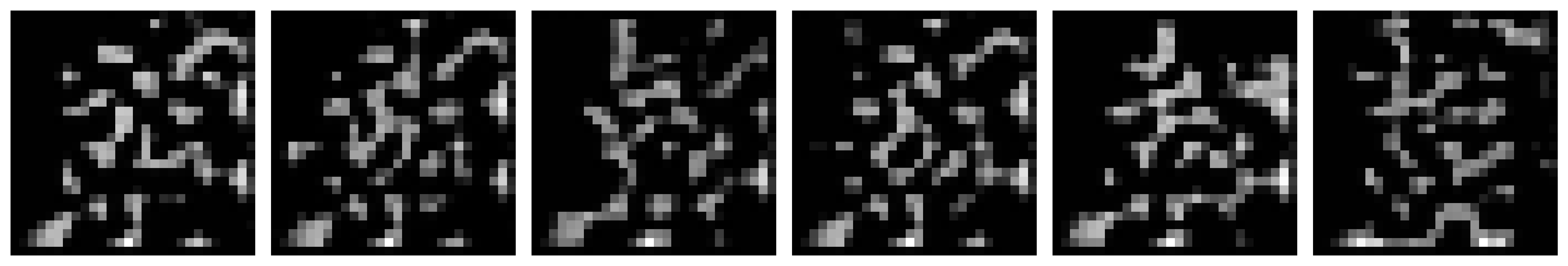}
        \put(-5,0){\rotatebox{90}{\tiny k=256}}
        \put(105,65){\linethickness{1pt}\color{black}\vector(0,-1){65}}
        
        \put(105,68){\makebox(0,0)[b]{\tiny\bfseries Memorization}}
        
        \put(105,-2){\makebox(0,0)[t]{\tiny\bfseries Generalization}}
        \end{overpic}
\end{subfigure}
\subcaption{Attractors}
\end{subfigure}
\vfill
\caption{\emph{Memorization vs Generalization}. Attractors memorize the training data as a function of the rank of $J_f(\mathbf{z})$ by adjusting the bottleneck dimension $k$ (\emph{left}) which is inversely proportional to the amount of generalization attained by the model (\emph{center}); On the \emph{right} we show example of \rebuttal{decoded} attractors transitioning from a strong memorization model \emph{(first row)} to good generalization \emph{(last row)}.  }
\label{fig:memVsgen}
\end{figure}
Our goal is to characterize the properties of the latent vector field and the learned attractors in the general setting. We argue that different models can fit training points reaching similar low loss, but interpolate \emph{differently} outside the training support, depending on the regularization strength and the amount of overparametrization. 

In this context, a key question arises: what information do the attractors $\mathbf{z}^*$ encode? We argue that neural models lie in a \emph{spectrum} between memorization and generalization, depending on the strength of the regularization term in Eq.~\ref{eq:mse}. We hypothesize that attractors fully characterize where a model falls on this spectrum. To do so, we first connect attractors to the notion of generalization via the following proposition:

\begin{proposition}[Informal, proof in Appendix~\ref{app:attr_gen_err}]
\label{prop:attractor-generalization}
Let $\mathbf{Z}^*$ be a dictionary of attractors of $f=E\circ D$ in a neighborood $\Omega$ of the latent space, and let 
$\Pi(\mathbf{z})$ denote the projection onto the nearest attractor to a latent code $\mathbf{z}=E(\mathbf{x})$.
If $D$ is $L_D$-Lipschitz on $\Omega$, then for any test point $\mathbf{x}$:
\begin{equation*} 
\|{\mathbf{x}-F(\mathbf{x})}\|_2^2 
\;\le\; \underbrace{\|{\mathbf{x}-D(\Pi(E(\mathbf{x})))}\|_2^2}_{\text{error to prototype}}
+ \underbrace{L_D^2 \,\|E(\mathbf{x})-\Pi(E(\mathbf{x})))\|_2^2}_{\text{coverage error}}.
\end{equation*}


\end{proposition}

In short, attractors define a dictionary for the data: when they coincide with training points, prototype error on these points vanishes but coverage is narrow (memorization regime), whereas generalization requires attractors that both cover the latent space and serve as good prototypes for unseen data.
  
We consider attractors as \emph{representations summarizing the information stored in the weights of the network}. This interpretation is in line with the convergence of paths in the vector field to modes of the learned distribution of Theorem~\ref{thm:thm_1}, and can be easily seen in the case of memorization. 

\textbf{Extreme overparametrization case.} When the capacity of a network exceeds the number of training examples by far, AEs enter an overfitting regime leading to data memorization~\citep{zhang2019identity,radhakrishnan2020overparameterized,jiang2020associative,kadkhodaie2023generalization}. The network in this case learns a constant function or an approximation thereof, which can be retrieved through the iterations of Eq.~$\ref{eq:ode}$.

\subsubsection{The role of regularization} 
We empirically demonstrate that the balance between memorization and generalization is governed by the strength of regularization. In Figure~\ref{fig:memVsgen}, we show how increasing regularization on the Jacobian $J_f(\mathbf{z}^*)$ drives the model from a memorization regime, where many training examples are stored as attractors, toward generalization. We remark that this memorization arises in an \emph{over-regularized} regime, as opposed to the \emph{overfitting} regime of extremely overparametrized networks.

\textbf{Setting.}
We trained 30 convolutional AEs on the \texttt{CIFAR}, \texttt{MNIST}, and \texttt{FashionMNIST} datasets, varying the  
bottleneck dimension $k$ from 2 to 512. This acts as a hard regularizer on $J_f(\mathbf{z}^*)$, by constraining its rank to be $\leq k$.
We compute attractors $\mathbf{Z}^*$ from elements of the training set $\mathcal{X}_{train}$  by iterating $f$ till convergence. We measure the degree of memorization by defining a \emph{memorization coefficient}, which is given by the cosine similarity of each decoded attractor
$\mathbf{x}^*=D(\mathbf{z}^*)$ to its closest point in the training set, $\mathrm{mem}(\mathbf{z}^*)= \min_{x \in \mathcal{X}_{train}} \cos(D(\mathbf{z}^*),\mathbf{x})$ and we report the mean and standard deviation over attractors. We measure generalization by simply reporting the error on the test set. In Figure~\ref{fig:rank_generalization} in the Appendix, we also report the rank explaining $90\%$ of the variance of the matrix of decoded attractors $\mathbf{X}^*$, showing that attractors corresponding to good generalization models span more directions in the input space. For additional information on the model, hyperparameters, and settings, we refer to Appendix~\ref{app:implementation}.

\textbf{Analysis of results.}
In Figure~\ref{fig:memVsgen} on the left, we observe that the more regularized networks ($k$ from 2 to 16) tend to memorize data, trading off generalization performance (middle plot). We remark that this kind of memorization is not due to overfitting, as was shown in~\citep{kadkhodaie2023generalization} for diffusion models, but it happens in the underfit regime, due to the strong regularization constraint on $J_f(\mathbf{x})$. We show in Figure~\ref{fig:mem_overfit} in the Appendix results in the overfitting regime, by training the models with different sample sizes and observing a similar pattern. 

\begin{tcolorbox}[takeaway, top=0mm, bottom=0mm, before skip=1.25ex, after skip=1.25ex]
\textbf{Takeaway.} Attractors capture the interplay between generalization and memorization of neural models, which corresponds to the trade-off between the reconstruction performance and regularization term of the AE model.
\end{tcolorbox}

\begin{figure}[t]
  \centering

  \begin{subfigure}{\linewidth}
    \centering
    \begin{overpic}[trim=0 30pt 0 0,clip,width=\linewidth]{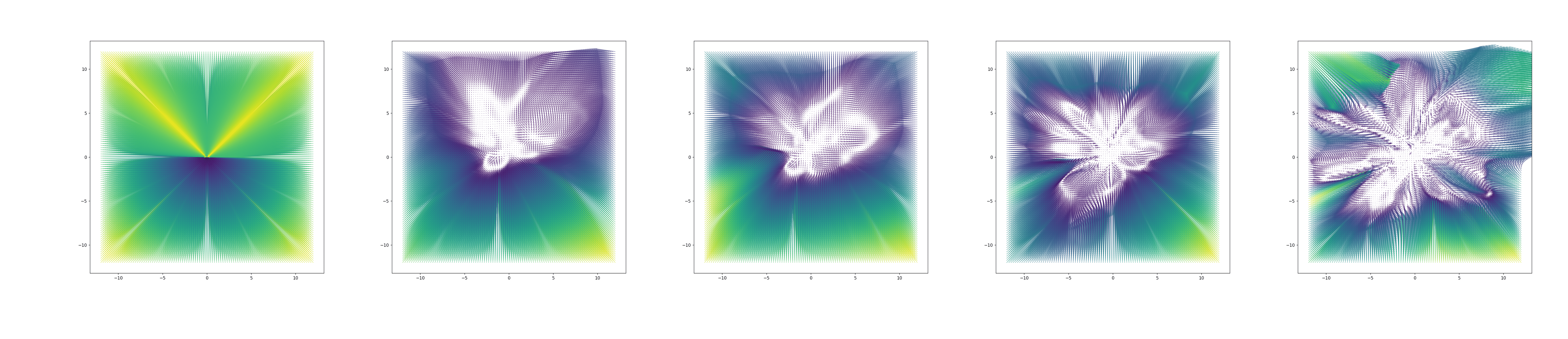}
    \end{overpic}
    \caption{Evolution of a 2D latent vector field over training iterations.}
    \label{fig:dynamics_field}
  \end{subfigure}

  \vspace{2ex} 

  \begin{subfigure}{0.33\linewidth}
    \centering
    \includegraphics[width=\linewidth,trim=0 0 0 0,clip]%
      {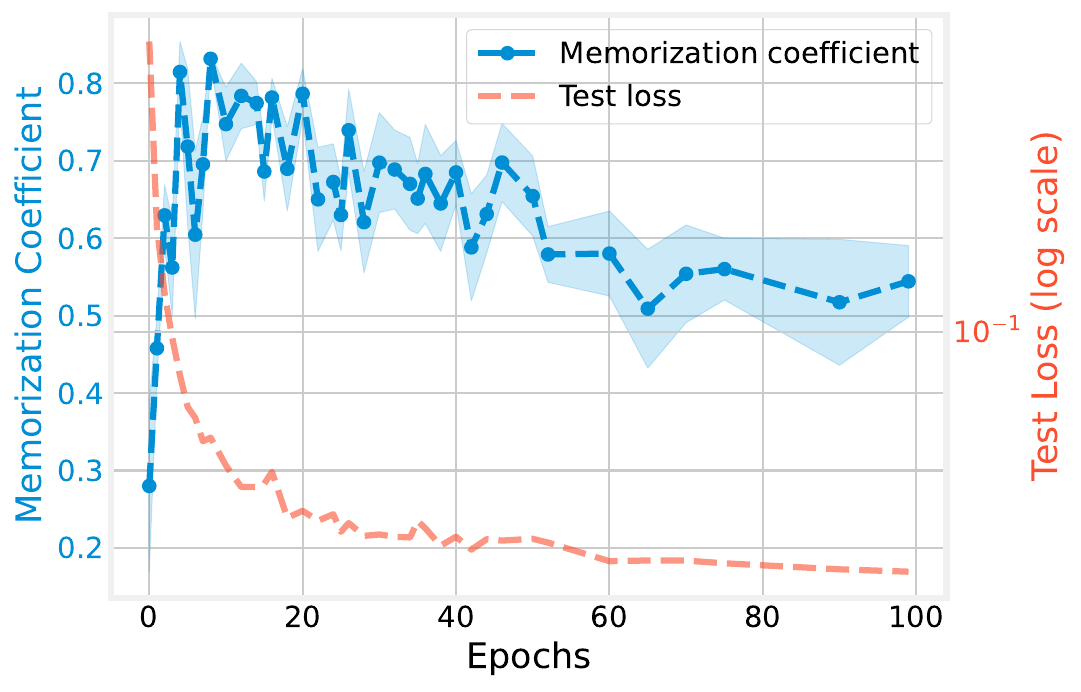}
    \caption{Memorization to generalization.}
    \label{fig:dynamics_loss}
  \end{subfigure}\hfill
  \begin{subfigure}{0.33\linewidth}
    \centering
    \includegraphics[width=\linewidth,trim=0 0 0 0,clip]%
      {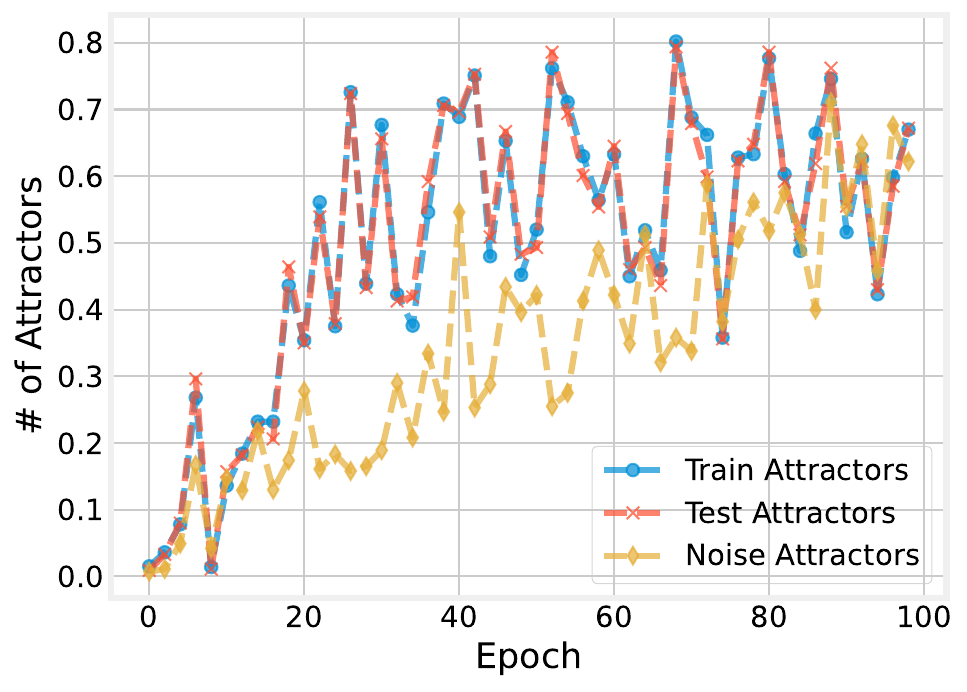}
    \caption{Number of attractors.}
    \label{fig:dynamics_num}
    
  \end{subfigure}
  \begin{subfigure}{0.33\linewidth}
    \centering
    \includegraphics[width=\linewidth,trim=0 0 0 0,clip]%
      {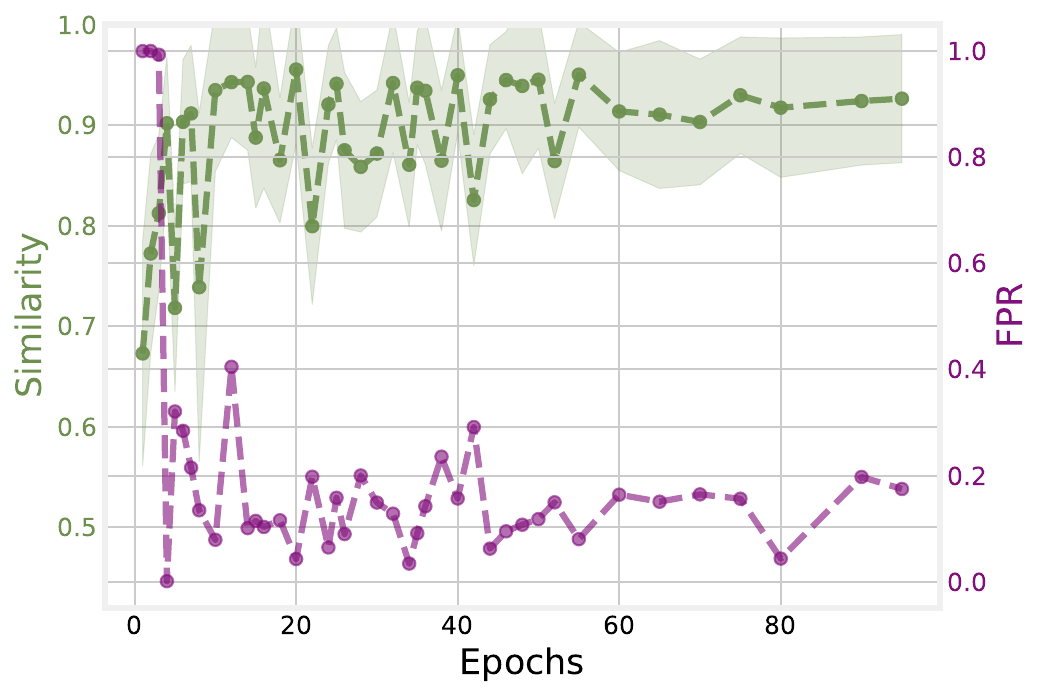}
    \caption{Similarity $\mathbf{Z}_{noise}^*$ and $\mathbf{Z}_{train}^*$}
    \label{fig:dynamics_noise}
    
  \end{subfigure}
  \caption{\emph{Latent vector field dynamics.}  
  \textit{(a)} The 2D vector field ($k=2$) expands from a single attractor, eventually stabilizing and over-fitting because of capacity limits. \textit{Bottom:} Evolution of larger capacity AEs ($k=128)$ across training.
  \textit{(b)} Throughout training, the network first memorizes the data with a high memorization coefficient (in blue) and then generalizes, achieving a low test error (red). 
  \textit{(c):} Evolution of attractor count for training (blue), test (red), and noise (yellow) samples; \emph{(d)} Attractors computed from training and from gaussian noise converge during training (\emph{green}), while the separability of the trajectories (measured as FPR95, the lower the better) increase (\emph{purple}). }
  \label{fig:dynamics}
\end{figure}
\subsubsection{Memorization and generalization across training} In the experiment in Figure~\ref{fig:dynamics} we show that a similar transition from a memorization regime to generalization occurs across training.  

 \textbf{Setting.} We monitor the latent vector field and attractors statistics across the training dynamics of a convolutional AE trained on \texttt{MNIST} with bottleneck dimension $k=128$ (bottom row plots in the Figure). To show qualitatively the evolution of a latent vector field, we also plot it across training epochs for AEs with bottleneck dimension $k=2$ in Figure~\ref{fig:dynamics_field}.
 
\textbf{Analysis of results.}
In Figure~\ref{fig:dynamics_loss} we show the transition from memorization, occurring at the first epochs of training, to generalization, by plotting the memorization coefficient and the test error across training, observing a trade-off between the two. In the center plot, we plot the fraction of distinct attractors, computed from 5000 random elements respectively from the $P_{train}, P_{test}$ and $\mathcal{N}(0,I)$, where we consider two attractors equal if $\cos(\mathbf{z}_1^*,\mathbf{z}_2^*)>0.99$.
Initially, the model converges to a single attractor (as also seen in the 2D example in Figure~\ref{fig:dynamics_field}). Over time, the number of attractors increases, stabilizing for training and test data in tandem with the test loss, while attractors from noise inputs converge more slowly.
The right plot shows the Chamfer symmetric similarity between attractors from the training and noise distributions, which increases over training as the two distributions of attractors match. 

Importantly, while attractors derived from noise and training data become increasingly similar over the course of training, the \emph{trajectories} leading to them differ significantly. To quantify this, we compute the \rebuttal{False Positive Rate at $95\%$ True Positive Rate (FPR95)} scores for distinguishing trajectories originating from noise versus those from training data: \rebuttal{FPR95 measures what percentage of noise latent trajectories are falsely classified as training trajectories, setting a threshold such that $95\%$ of training trajectories are classified correctly}.  We define the trajectory score as $\mathrm{score}(\mathbf{z})= \frac{1}{N}\sum_{\mathbf{z}_i \in \pi(\mathbf{z}) } d(\mathbf{z}_i, \mathbf{Z}_{train}^*)$ where $\pi(\mathbf{z})=[\mathbf{z}_0,...,\mathbf{z}_N]$ is the trajectory. The FPR95 decreases sharply during training, showing that the network learns to separate the two types of trajectories.
%
\begin{tcolorbox}[takeaway, top=0mm, bottom=0mm, before skip=1.25ex, after skip=1.25ex]
\textbf{Takeaway.} As the latent vector field evolves during training, the model transitions from memorization to generalization, forming similar attractors from different input distributions, while retaining information of the source distribution in the latent trajectories. 
\end{tcolorbox}
\section{Experiments on vision foundation models}\label{sec:foundation}
\looseness=-1In this section we demonstrate the existence and applicability of the latent dynamics on vision foundation models, including the AE backbones of the Stable Diffusion model~\citep{rombach2022high} and vision transformer masked AEs~\citep{he2022masked}, showing; \emph{(i)} how information stored in the weights of a pretrained model can be recovered by computing the attractors from noise \emph{(ii)} how trajectories in the latent dynamics are informative to characterize the learned distribution
and detect distribution shifts.   
\subsection{Data free weight probing}\label{sec:datafree}

\begin{figure}[t]
\begin{subfigure}[c]{0.6\linewidth}
     \centering
    \includegraphics[width=\linewidth]{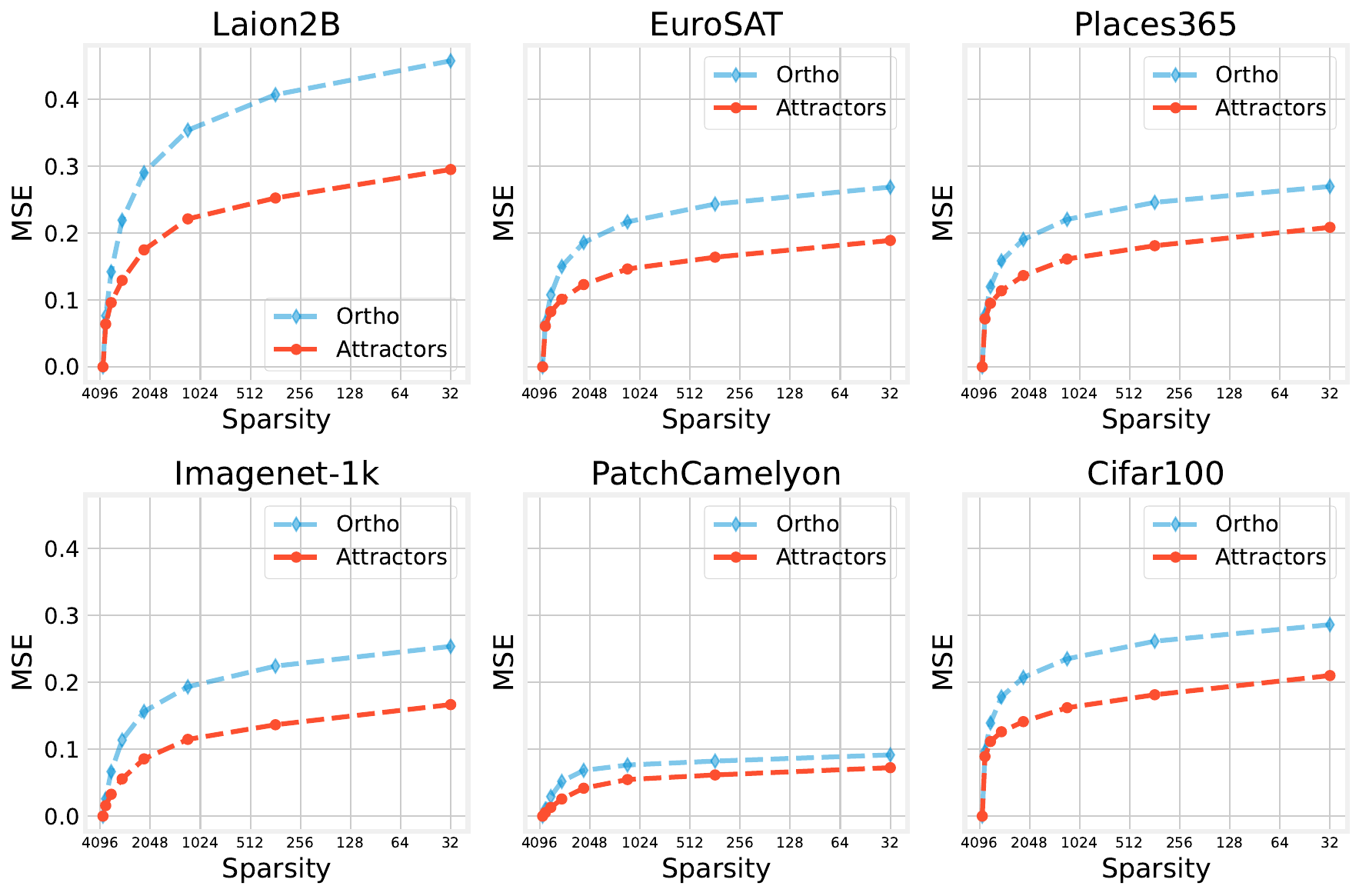}
\end{subfigure}
\hfill
\begin{subfigure}[c]{0.375\linewidth}  
    \begin{overpic}[width=\linewidth]{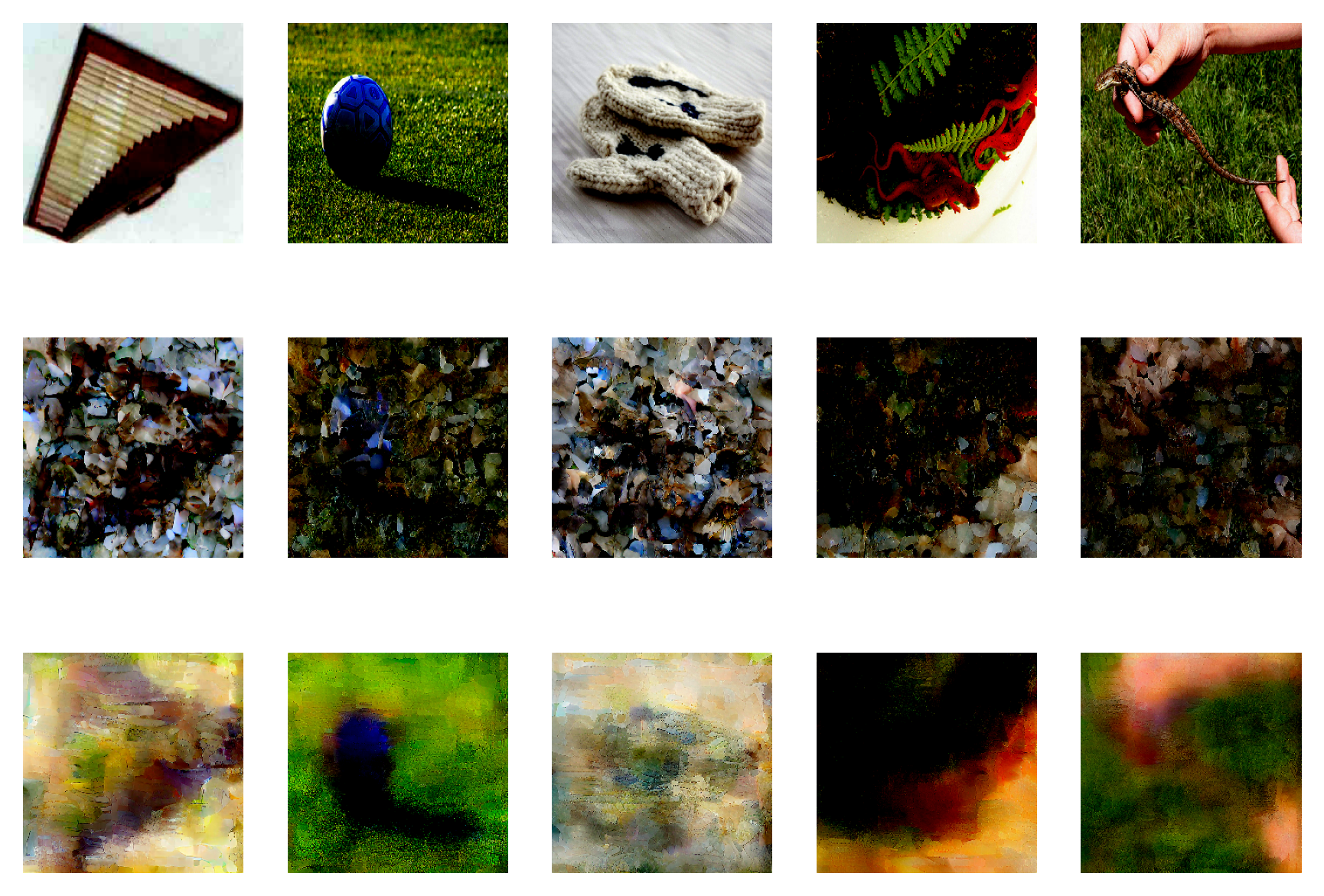}



        \put(-3, 56.5){\rotatebox[origin=c]{90}{\scriptsize Input}}
        \put(-3, 32){\rotatebox[origin=c]{90}{\scriptsize Ortho}}
        \put(-3, 8.85){\rotatebox[origin=c]{90}{\scriptsize Attractors}}


    \end{overpic}
 \subcaption{Reconstructions with $5\%$ atoms. \emph{First row}: Input; \emph{second row}: Orthogonal basis; \emph{last row}: Attractors}
 \label{fig:noise_probe_qualitative}
 \end{subfigure}
    \caption{\emph{Data-free weight information probing of Stable Diffusion model}. We plot the error (MSE) vs sparsity (number of atoms) used to reconstruct samples from diverse dataset respectively from (i) an orthonormal random basis of the latent space (\emph{blue}); (ii) attractors computed from gaussian noise (\emph{red}), showing that  attractors consistently reconstructs samples better on all datasets. (\textit{Right)} Reconstructions using $5\%$ of the atoms on \texttt{ImageNet} }
    \label{fig:noise_probe}
\end{figure}
\textbf{Setting.}
In this experiment, we investigate how much information stored in a neural network’s weights can be recovered purely from attractors computed on Gaussian noise, \emph{without} access to any input data. We focus on AE component of Stable Diffusion~\citep{rombach2022high} pretrained on the large scale \texttt{Laion2B} dataset~\citep{schuhmann2022laion}. 

We sample $\mathbf{Z}_n \sim \mathcal{N}(\mathbf{0},I)$ and compute the corresponding attractors as solutions to $f(\mathbf{Z}_n^*)=\mathbf{Z}_n^*$. We generate $N=4096$ such points, matching the latent dimensionality $k$ of the model. We evaluate the reconstruction performance on 6 diverse datasets \texttt{Laion2B}~\citep{schuhmann2022laion}, \texttt{Imagenet}~\citep{deng2009imagenet}, \texttt{EuroSAT}~\citep{helber2019eurosat}, \texttt{CIFAR100}~\citep{krizhevsky2009learning}, \texttt{PatchCamelyon}~\citep{litjens20181399}, and \texttt{Places365}~\citep{zhou2017anomaly}. These datasets span general, medical, and satellite image domains. 

For each dataset, we randomly sample 500 test examples and reconstruct them in latent space using Orthogonal Matching Pursuit (OMP)~\citep{Mallat_omp}, with varying sparsity levels (i.e., number of atoms used). 
Reconstructions are decoded via $D$, and performance is compared against a baseline reconstruction using the initial orthogonal samples $\mathbf{Z}_0^*$, which, by design, fully span the latent space $\mathcal{Z}$.

\textbf{Analysis of results.}
\looseness=-1 In Figure~\ref{fig:noise_probe}, we plot the error as a function of the number of elements (atoms) chosen to reconstruct the test samples using OMP. We compare by building a random orthogonal basis sampled in the latent space. 
On all considered datasets, noise attractors recover test samples with lower reconstruction error, representing a better dictionary of signals. In Figure~\ref{fig:noise_probe_qualitative}, we show qualitative reconstruction using only 5 $\%$ of the atoms. In Figures ~\ref{fig:qualitative_lion},~\ref{fig:qualitative_imnet},~\ref{fig:qualitative_camelyon} in the Appendix, we include additional qualitative evidence of this phenomenon, visualizing reconstructions of random samples of the datasets as a function of the number of atoms used. In Appendix~\ref{app:additional_exp}, we report additional results on variants of the AE of different sizes.

\begin{tcolorbox}[takeaway]
\textbf{Takeaway.} Attractors of foundation models computed from noise can serve as a dictionary of signals to represent diverse datasets, demonstrating that it is possible to probe the information stored in the weights of foundation models in a black box way, without requiring any input data. 
\end{tcolorbox}

\subsection{Latent trajectories characterize the learned distribution} \label{sec:traj_ood}

In the following experiment, we test the hypothesis that trajectories are informative on the distribution learned by the model, testing how well Theorem~$\ref{thm:thm_1}$ holds in practice. Our goal is to evaluate whether these trajectories can be used to detect distribution shifts in the input data. To classify a sample as out-of-distribution (OOD), we focus on two key questions: (i) does the sample trajectory converge to one of the attractors of the training data, i.e., does it \emph{share the same basin} of attraction? (ii) If so, \emph{how fast} does it converge?

Notably, an OOD sample may still lie within a shared attractor basin. To capture both scenarios, we track the distance from each point along a test sample’s trajectory to the nearest attractor from the training set.
In the former case, we expect that an OOD sample converges faster, while in the latter case, the distance term to the attractors will dominate the score.

\begin{figure}[t]
\centering
\begin{subtable}[c]{0.58\linewidth}
\centering
\resizebox{\linewidth}{!}{
\begin{tabular}{lcccccccc}
\toprule
 & \multicolumn{2}{c}{\textit{SUN397}} & \multicolumn{2}{c}{\textit{Places365}} & \multicolumn{2}{c}{\textit{Texture}} & \multicolumn{2}{c}{\textit{iNaturalist}} \\
 \rowcolor[gray]{0.9}
Method & FPR $\downarrow$& AUROC $\uparrow$ & FPR $\downarrow$ & AUROC $\uparrow$ & FPR $\downarrow$ & AUROC $\uparrow$ & FPR $\downarrow$ & AUROC $\uparrow$ \\
\midrule
\textit{d(Attractors)} & \textbf{29.60} & \textbf{91.20} & \textbf{29.95} & \textbf{90.99} & \textbf{25.85} & \textbf{92.63} & \textbf{29.85} & \textbf{91.29} \\
\textit{KNN} & 100.00 & 42.59 & 100.00 & 32.36 & 34.50 & 89.41 & 86.35 & 68.60 \\
\bottomrule
\end{tabular}}
\label{tab:ood}
\caption{\emph{FPR and AUROC scores}}
\end{subtable}%
\hfill
\begin{subfigure}[c]{0.205\linewidth} 
\includegraphics[width=\linewidth,valign=T]{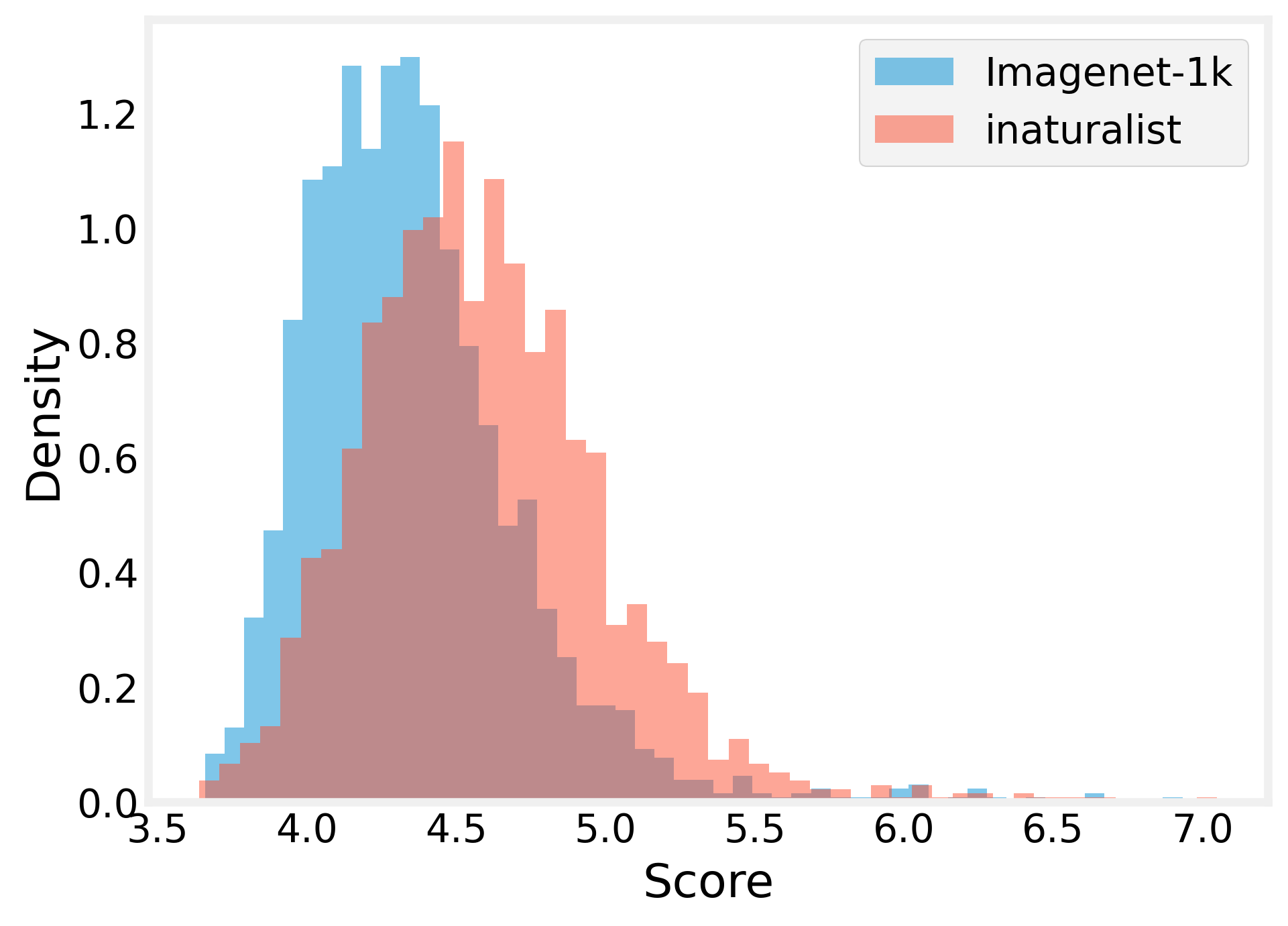}
\caption{\rebuttal{\emph{KNN baseline}}}
\end{subfigure}%
\begin{subfigure}[c]{0.205\linewidth}   
\includegraphics[width=\linewidth,valign=T]{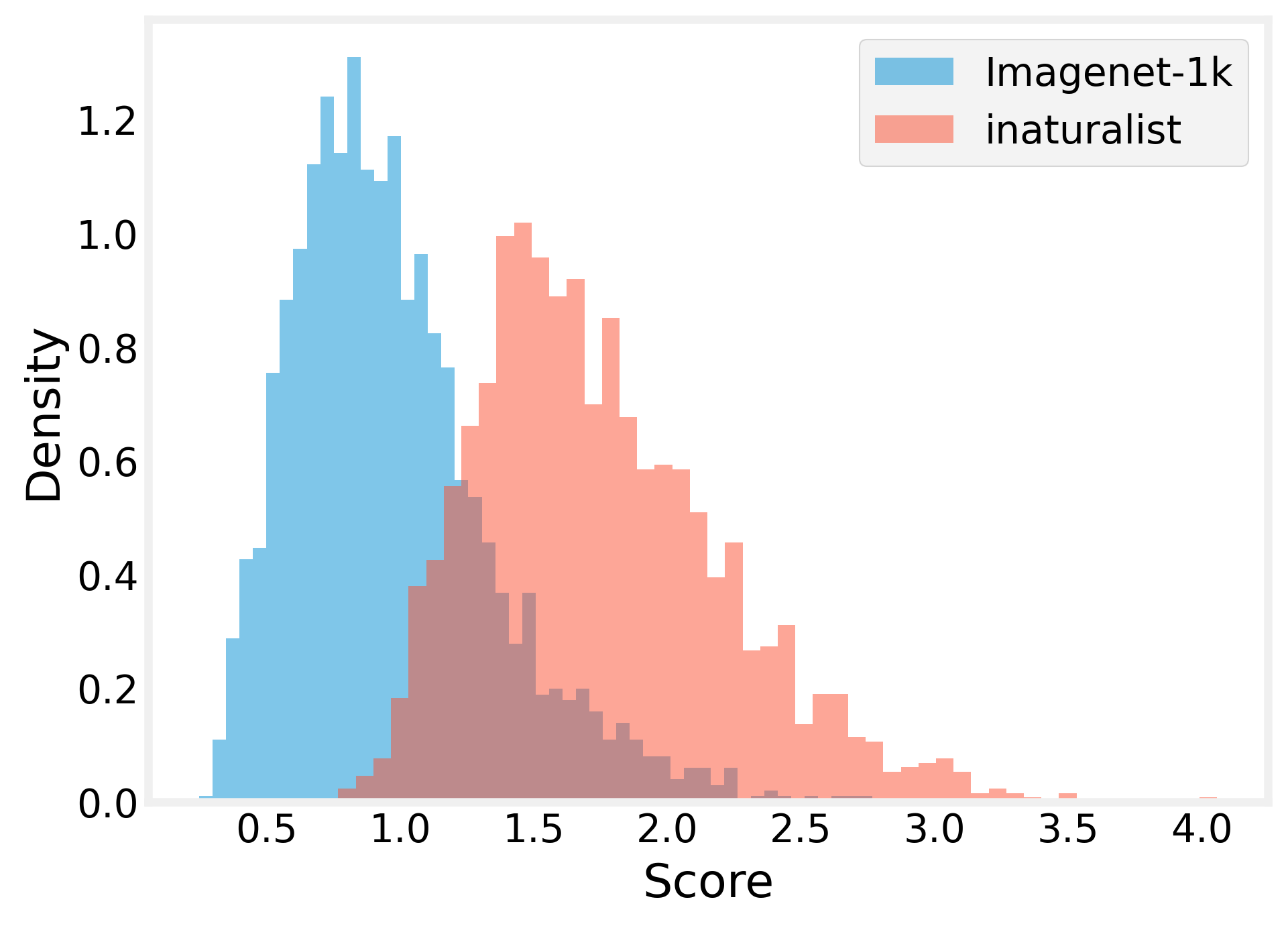}
\caption{\rebuttal{\emph{Latent trajectories}}}
\end{subfigure}
\caption{\emph{Trajectories in the latent vector field characterize distribution shifts} We measure out-of-distribution detection performance on \texttt{ViTMAE}: On the \emph{left} we report scores for 4 different datasets, highly outperforming the KNN baseline. On the \emph{right}, histograms of scores on the INaturalist dataset, demonstrating much better separability between in-distribution and out-of-distribution \rebuttal{when probing latent trajectories distances to in-distribution attractors (c), as opposed to measure distances to in-distribution features (b).}}
\label{fig:ood_vitmae}
\end{figure}

\textbf{Setting.}
We test this hypothesis using the \texttt{ViT-MAE}~\citep{he2022masked} architecture pretrained on \texttt{ImageNet}. We sample 2000 training images and compute their attractors, stopping when convergence reaches a tolerance of $10^{-5}$ or a maximum number of iterations. 
We test on samples from \texttt{SUN397}~\citep{xiao2016sun}, \texttt{Places365}~\citep{zhou2017anomaly},  \texttt{Texture}~\citep{cimpoi2014describing}, and \texttt{iNaturalist}~\citep{van2018inaturalist}, standard benchmarks for OOD detection~\citep{yang2024generalized}. We report two metrics: FPR95, and Area Under the Receiver Operating Characteristic Curve (AUROC). FPR95 \rebuttal{here} measures what percentage of OOD data we falsely classify as ID, \rebuttal{setting a threshold s.t. $95\%$ of ID data is correctly classified.}

\looseness=-1\textbf{Analysis of results.} In Figure~\ref{fig:ood_vitmae} we report OOD detection performance by using the following score function: for a test sample $\mathbf{z}_{test}$  we compute its trajectory $\pi_{\mathbf{z}_{test}}=[\mathbf{z}_0,...,\mathbf{z}_N^{*}]$ towards attractors and compute the distance of $\pi_{\mathbf{Z}_{test}}$ to the set of training attractors $\mathbf{Z}_{train}^*$, i.e. $\mathrm{score}(\mathbf{z})=d(\pi_{\mathbf{z}_{test}}, \mathbf{Z}_{train}^*)$. As a distance function, we employ Euclidean distance, and we aggregate the score by computing the mean distance over the trajectory $\mathrm{score}(\mathbf{z})= \frac{1}{N}\sum_i d(\mathbf{z}_i, \mathbf{Z}_{train}^*)$. We compare with a $K$-Nearest neighbor baseline \rebuttal{(adapted from \cite{sun2022out})}, where the score for a test sample is obtained by taking the mean distance over the $K$-NN on the training dataset, where $K=2000$ in the experiments. The score proposed demonstrates how informative the latent vector field is on the training distribution. In Figure~\ref{fig:ood_vitmae} on the right, we show histograms of scores for the distance to attractors and the KNN, showing again that the former method is able to tell apart in-distribution and out-of-distribution data correctly.

\begin{tcolorbox}[takeaway, top=0mm, bottom=0mm, before skip=1.25ex, after skip=1.25ex]
\textbf{Takeaway.} Trajectories in the latent vector field characterize the source distribution and are informative to detect distribution shifts. 
\end{tcolorbox}

\section{Related Work}
\textbf{Memorization and generalization in neural networks (NNs).}
\looseness=-1 NNs exhibit a rich spectrum of behaviors between memorization and generalization, depending on model capacity, regularization, and data availability~\citep{arpit2017closer,zhang2021understanding, power2022grokking}. In the case of extreme overparametrization, namely networks trained on few data points, it has been shown experimentally in~\citep{radhakrishnan2020overparameterized,zhang2019identity} and theoretically for sigmoid shallow AEs in~\citep{jiang2020associative} that AEs can memorize examples and implement associative memory mechanisms. 
\citep{alain2014regularized,vincent2011connection},
A similar phenomenon has been observed in diffusion models in~\citep{somepalli2023diffusion,dar2023investigating} and analyzed in~\citep{kadkhodaie2023generalization}  Similarly, non gradient-based approaches such as Hopfield networks and their modern variants~\citep{hopfield1982neural,ramsauer2020hopfield} extend classical attractor dynamics to neural systems that interpolate between memory-based and generalizing regimes.
In our work, we show that AEs fall in general in the spectrum between memorization and generalization, depending on inductive biases that enforce contraction.


\looseness=-1\textbf{Contractive neural models.} Different approaches have been proposed to regularize NNs in order to make them smoother and more robust to input perturbations and less prone to overfitting.  Many of these regularization techniques either implicitly or explicitly promote contractive mappings in AEs: for example, sparse AEs~\citep{ng2011sparse,gao2024scaling}, their denoising variant~\citep{vincent2008extracting}, and contractive AEs~\citep{rifai2011contractive,alain2014regularized} losses enforce learned maps to be contractive through the loss. Regularization strategies such as weight decay~\citep{krogh1991simple} favor as well contractive solutions for neural models, and they are incorporated in standardized optimizers such as AdamW~\citep{loshchilov2017decoupled}. All these approaches enforce either directly or indirectly the existence of fixed points and attractors in the proposed latent vector field representation. 


\looseness=-1\textbf{Neural networks as dynamical systems.} Distinct lines of work have interpreted neural networks as dynamical systems. Neural ODEs~\citep{chen2018neural} view depth as continuous time and model hidden states via differential equations, while deep equilibrium models~\citep{bai2019deep} characterize predictions as fixed points of implicit dynamics. 
Closer to our work, \citet{radhakrishnan2020overparameterized} interpret overparameterized autoencoders as dynamical systems acting in the input space, implementing associative memory. In contrast, we show that \emph{any} autoencoder induces a \emph{latent} vector field, and we link its properties to generalization, memorization, and the characterization of the learned distribution.  


\looseness=-1\textbf{Nonlinear operators spectral analysis.} In the context of image processing and 3D graphics, previous work has inspected generalization of spectral decompositions to nonlinear operators~\citep{bungert2021nonlinear,gilboa2016nonlinear,fumero2020nonlinear}, focusing on one homogeneous operator. 
In our work, fixed points of Eq.~\ref{eq:ode} can be interpreted as the decomposition of the NN into a dictionary of signals.


\section{Conclusions and Discussion}\label{sec:conclusion}
\looseness=-1 In this work we proposed to represent neural AEs as vectors fields, implicitly defined by iterating the autoencoding map in the latent space. We showed that \emph{(i)} attractors in the latent vectors field exists in practice due to inductive biases in the training regime which enforce local contractions; \emph{(ii)} they retain key properties of the model and the data, linking to memorization and generalization regimes of the model; \emph{(iii)} knowledge stored in the weights can be retrieved without access to input data in vision foundation models; \emph{(iv)} paths in the vector field inform on the learned distribution and its shifts.

\subsection{Limitations and future works}
\textbf{Generalizing to arbitrary models.}
Eq.~\ref{eq:ode} cannot be directly generalized to model trained with discriminative objectives such as a deep classifiers, or self supervised models, as the network is not invertible.
However we note that \rebuttal{(i) our theory holds in general for any self map which is locally contractive (ii)}  the vector field is still defined in the output space, and can be simply obtained by computing the residual $F(\mathbf{x}) - y$ and in the neighborhood of an attractor, the relation in proposition \ref{prop:prop_3_1} can still hold for different objectives. 
\rebuttal{One idea is to train a decoder on top of the frozen encoder-only model. We give preliminary evidence of the existence of latent vector fields in self-supervised models  and in next token predictors (e.g. LLMs) in Appendix \ref{app:beyond}, showing that extending our framework to arbitrary models holds promise}. An alternative intriguing idea is the one to train a \emph{surrogate AE} model in the latent space of the model of interest, which would be agnostic from the pretraining objective. 
Sparse AEs for mechanistic interpretability of large language models (LLMs)~\citep{gao2024scaling} fit in this category. Analyzing the associated latent vector field can shed light on features learned by SAEs and biases stored in their weights.
%

\textbf{Learning dynamics.} Characterizing how attractors forms during training, under which conditions noise attractors converge to the training attractors, holds promise to use the proposed representation to study the learning dynamics of neural models to inspect finetuning of AE modules, such as low-rank adapters~\citep{hu2022lora} and double descent~\citep{nakkiran2021deep}.
%

\textbf{Alignment of latent vector fields.} Finally, following recent findings in representation alignment~\citep{moschella2022relative,fumero2024latentfunctionalmapsspectral,huh2024platonic} inspecting how latent vector fields of networks trained are related is an open question for future works, to possibly use this representations to compare different neural models.


\section*{Acknowledgments}
This work is supported by the MUR FIS2 grant n. FIS-2023-00942 "NEXUS" (cup B53C25001030001), and partly by Sapienza University of Rome via the Seed of ERC grant "MINT.AI" (cup B83C25001040001).This research was also funded in whole or in part by the Austrian Science Fund (FWF) 10.55776/COE12. For open access purposes, the author has applied a CC BY public copyright license to any author accepted manuscript version arising from this submission. MF is supported as well by the MSCA IST-Bridge fellowship which has received funding from the European Union’s Horizon 2020 research and innovation program under the Marie Skłodowska-Curie grant agreement No 101034413. M.F. thanks C.Domine, V.Maiorca, I.Cannistraci, R.Cadei, B.Demirel, S.Vedula for insightful discussions.

\bibliography{main}
\bibliographystyle{plainnat}

\newpage
\appendix

\section{Proofs and derivation}
\label{app:proofs}
\subsection{Theorem~\ref{thm:thm_1}}
\label{app:proof_thm1}
We report here a detailed formulation of Theorem~\ref{thm:thm_1} alongside its proof. 
\begin{assumption}[Latent marginal]
Let \( p_{\mathrm{data}}(\mathbf{x}) \) be the data distribution and \( q_\phi(\mathbf{z} \mid \mathbf{x}) \) an encoder mapping inputs \( \mathbf{x} \in \mathbb{R}^n \) to latent codes \( \mathbf{z} \in \mathbb{R}^d \).  
The latent marginal density is defined as:
\[
q(\mathbf{z}) = \int p_{\mathrm{data}}(\mathbf{x})\,q_\phi(\mathbf{z} \mid \mathbf{x})\,d\mathbf{x}.
\]
We assume that \( q(\mathbf{z}) \) is continuously differentiable, and that its gradient \( \nabla \log q(\mathbf{z}) \) and Hessian \( \nabla^2 \log q(\mathbf{z}) \) are well-defined and continuous on an open domain containing \( \Omega \subseteq \mathbb{R}^d \).
\end{assumption}

\begin{assumption}[Fixed-point manifold]
Let \( E: \mathbb{R}^n \to \mathbb{R}^d \) be an encoder and \( D: \mathbb{R}^d \to \mathbb{R}^n \) a decoder, both continuously differentiable. Define the composite map
\[
f(\mathbf{z}) = E(D(\mathbf{z})) \in \mathbb{R}^d.
\]
We define the fixed-point manifold of \( f \) as
\[
\mathcal{M} = \left\{ \mathbf{z} \in \mathbb{R}^d \;:\; f(\mathbf{z}) = \mathbf{z} \right\}.
\]
\end{assumption}

\begin{assumption}[Local contraction]
There exists an open, convex set \( \Omega \subseteq \mathbb{R}^d \), with \( \mathcal{M} \subseteq \Omega \), and a constant \( 0 < L < 1 \), such that:
\[
\sup_{\mathbf{z} \in \Omega} \left\| J_f(\mathbf{z}) \right\|_{\sigma} \leq L,
\]
where \( J_f(\mathbf{z}) \in \mathbb{R}^{d \times d} \) denotes the Jacobian of \( f \) at \( \mathbf{z} \), and \( \|\cdot\|_{\sigma} \) denotes the spectral norm (i.e., largest singular value).  
Therefore, \( f \) is a contraction mapping on \( \Omega \).
\end{assumption}

\begin{assumption}[Training optimality with Jacobian regularization]
Let \( f(\mathbf{z}) = E(D(\mathbf{z})) \) be as above, with \( E, D \) trained to minimize the objective:
\[
\mathbb{E}_{\mathbf{x} \sim p_{\mathrm{data}}} \left[ \| D(E(\mathbf{x})) - \mathbf{x} \|^2 + \lambda \| J_f(E(\mathbf{x})) \|_F^2 \right]
\quad \text{for } \lambda > 0.
\]
For example, assume the encoder is deterministic, i.e., \( q_\phi(\mathbf{z}|\mathbf{x}) = \delta(\mathbf{z} - E(\mathbf{x})) \), so the latent marginal satisfies:
\[
q(\mathbf{z}) = \int p_{\mathrm{data}}(\mathbf{x}) \, \delta(\mathbf{z} - E(\mathbf{x})) \, d\mathbf{x}.
\]
Then \( q(\mathbf{z}) \) is supported and concentrated on the fixed-point manifold \( \mathcal{M} \), and \( f \) is locally contractive around \( \mathcal{M} \).
\end{assumption}

\begin{lemma}[Directional ascent of the residual field]
\label{lem:ascent}
Under the above assumptions, for every \( \mathbf{z} \in \Omega \setminus \mathcal{M} \), define the residual field:
\[
\mathbf{v}(\mathbf{z}) := f(\mathbf{z}) - \mathbf{z}.
\]
Then the directional derivative of \( \log q \) in direction \( \mathbf{v}(\mathbf{z}) \) is strictly positive:
\[
\left\langle \nabla \log q(\mathbf{z}), \mathbf{v}(\mathbf{z}) \right\rangle > 0.
\]
\end{lemma}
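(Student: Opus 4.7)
The plan is to reduce the statement to the informal Theorem \ref{thm:thm_2}, which asserts that on a neighborhood of the fixed-point manifold $\mathcal{M}$ the residual field $\mathbf{v}(\mathbf{z}) = f(\mathbf{z}) - \mathbf{z}$ is proportional to the latent score $\nabla \log q(\mathbf{z})$. If I can make the proportionality precise in the form
\begin{equation*}
f(\mathbf{z}) - \mathbf{z} \;=\; A(\mathbf{z})\,\nabla \log q(\mathbf{z}),
\end{equation*}
with $A(\mathbf{z})$ symmetric positive semidefinite on $\Omega$, then the lemma is immediate: the inner product equals $\nabla\log q(\mathbf{z})^\top A(\mathbf{z}) \nabla\log q(\mathbf{z}) \ge 0$, and it is strict on $\Omega\setminus\mathcal{M}$ because $\mathbf{z}\notin\mathcal{M}$ forces $\mathbf{v}(\mathbf{z})\neq 0$, which via the proportionality forces $\nabla\log q(\mathbf{z})\neq 0$.

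The derivation of the proportionality would follow the contractive-autoencoder / Miyasawa--Robbins analysis used in~\cite{alain2014regularized}. I would write the Euler--Lagrange condition for the Jacobian-regularized objective in Assumption~4, expand $D\circ E$ around points of $\mathcal{M}$, and identify the leading-order term of the residual with a positively weighted score, producing $f(\mathbf{z})-\mathbf{z} = \lambda\,\Sigma(\mathbf{z})\,\nabla\log q(\mathbf{z}) + o(\lambda)$ with $\Sigma(\mathbf{z})\succeq 0$. The uniform contraction bound $\|J_f(\mathbf{z})\|_{\mathrm{op}}\le L<1$ on $\Omega$ would then be used to control the remainder uniformly, so that $A(\mathbf{z}):=\lambda\Sigma(\mathbf{z})+o(\lambda)$ stays PSD on all of $\Omega$, not just infinitesimally close to $\mathcal{M}$.

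As a parallel geometric route that avoids committing to the exact score identity, I would rely on two inward-pointing facts. First, contractivity gives $\|f(\mathbf{z}) - \mathbf{z}^\star\| \le L\|\mathbf{z} - \mathbf{z}^\star\|$ for the nearest fixed point $\mathbf{z}^\star\in\mathcal{M}$, and expanding the square yields
\begin{equation*}
\bigl\langle f(\mathbf{z}) - \mathbf{z},\ \mathbf{z}^\star - \mathbf{z}\bigr\rangle \;\ge\; \tfrac{1}{2}(1 - L^2)\|\mathbf{z} - \mathbf{z}^\star\|^2 + \tfrac{1}{2}\|f(\mathbf{z}) - \mathbf{z}\|^2 \;>\; 0,
\end{equation*}
so $\mathbf{v}(\mathbf{z})$ has a strictly positive component along the inward normal to $\mathcal{M}$. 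Second, since $q$ is concentrated on $\mathcal{M}$ and $C^1$ on $\Omega$, $-\log q$ is locally minimized on $\mathcal{M}$, so a transverse Hessian / Morse-type argument shows that $\nabla\log q(\mathbf{z})$ is also strictly inward-pointing off $\mathcal{M}$. Both vectors therefore lie in the same open half-space determined by the normal cone at $\mathbf{z}^\star$, and their inner product is strictly positive.

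I expect the main obstacle to be upgrading the score identity of Theorem~\ref{thm:thm_2} from an infinitesimal/asymptotic statement to one that holds uniformly on all of $\Omega\setminus\mathcal{M}$, since the Miyasawa--Robbins argument is tight only to leading order in the regularization strength. Propagating it globally requires using the uniform contraction constant $L$ together with smoothness of $q$. Should that path be too delicate, the geometric route above is the safety net: it produces the same sign conclusion using only the contraction hypothesis and the concentration of $q$ on $\mathcal{M}$, at the cost of yielding inequality rather than the proportionality itself.
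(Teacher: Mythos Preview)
Your first route inverts the paper's logical order: in the paper, Lemma~\ref{lem:ascent} is the \emph{input} to the proof of the formal version of Theorem~\ref{thm:thm_2} (convergence to latent-space modes), so invoking that theorem---or re-deriving the score identity $f(\mathbf{z})-\mathbf{z}=A(\mathbf{z})\nabla\log q(\mathbf{z})$---to obtain the lemma is circular. Even setting circularity aside, the Miyasawa--Robbins derivation gives the score identity only to leading order in the regularization strength, and you yourself flag that propagating it uniformly over $\Omega$ is the hard part; the paper never attempts this.

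Your second (geometric) route has a genuine gap at the final step. From contractivity you correctly derive $\langle \mathbf{v}(\mathbf{z}),\,\mathbf{z}^\star-\mathbf{z}\rangle>0$, and from concentration of $q$ on $\mathcal{M}$ you can argue $\langle\nabla\log q(\mathbf{z}),\,\mathbf{z}^\star-\mathbf{z}\rangle>0$. But ``both vectors lie in the same open half-space'' does \emph{not} imply $\langle \mathbf{v}(\mathbf{z}),\nabla\log q(\mathbf{z})\rangle>0$: two vectors with positive inner product against a common direction can still be mutually obtuse (take $(1,10)$ and $(1,-10)$ against $(1,0)$). To salvage this you would need to control the tangential components of both vectors relative to the normal of $\mathcal{M}$, which is additional work not present in your sketch.

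The paper's argument is shorter and bypasses both issues. It uses contractivity plus concentration to assert the \emph{scalar} inequality $\log q(f(\mathbf{z}))>\log q(\mathbf{z})$, and then Taylor-expands $\log q$ at $\mathbf{z}$ along $\mathbf{v}(\mathbf{z})=f(\mathbf{z})-\mathbf{z}$:
\[
\log q(f(\mathbf{z}))-\log q(\mathbf{z})
=\langle\nabla\log q(\mathbf{z}),\mathbf{v}(\mathbf{z})\rangle
+\tfrac12\,\mathbf{v}(\mathbf{z})^\top\nabla^2\log q(\xi)\,\mathbf{v}(\mathbf{z}).
\]
Since the left-hand side is positive and the second-order remainder is treated as lower order (the paper argues $\|\mathbf{v}(\mathbf{z})\|$ is small by contractivity), the first-order term must be positive. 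This avoids both the score-identity machinery and the half-space fallacy; the price is that the remainder control is itself informal, but that matches the level of rigor the paper works at.
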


\begin{proof}
Let \( \mathbf{z} \in \Omega \setminus \mathcal{M} \) be arbitrary. Then \( f(\mathbf{z}) \ne \mathbf{z} \), so \( \mathbf{v}(\mathbf{z}) = f(\mathbf{z}) - \mathbf{z} \ne \mathbf{0} \).  
By contractiveness and training optimality, \( f(\mathbf{z}) \) is closer to the fixed-point set \( \mathcal{M} \) than \( \mathbf{z} \) is. Since \( q(\mathbf{z}) \) is concentrated on \( \mathcal{M} \), we have:
\[
q(f(\mathbf{z})) > q(\mathbf{z}) \quad \Rightarrow \quad \log q(f(\mathbf{z})) > \log q(\mathbf{z}).
\]

We now Taylor-expand \( \log q \) at \( \mathbf{z} \) in direction \( \mathbf{v}(\mathbf{z}) \):
\[
\log q(f(\mathbf{z})) = \log q(\mathbf{z}) + \left\langle \nabla \log q(\mathbf{z}), \mathbf{v}(\mathbf{z}) \right\rangle + R,
\]
where \( R = \frac{1}{2} \mathbf{v}^\top(\mathbf{z}) \nabla^2 \log q(\xi) \mathbf{v}(\mathbf{z}) \), for some \( \xi \) between \( \mathbf{z} \) and \( f(\mathbf{z}) \).  
Since \( f \) is contractive, \( \|\mathbf{v}(\mathbf{z})\| \) is small and \( R = o(\|\mathbf{v}(\mathbf{z})\|) \). Therefore:
\[
\left\langle \nabla \log q(\mathbf{z}), \mathbf{v}(\mathbf{z}) \right\rangle > -R \quad \Rightarrow \quad \left\langle \nabla \log q(\mathbf{z}), \mathbf{v}(\mathbf{z}) \right\rangle > 0.
\]
\end{proof}

\begin{theorem}[Convergence to latent-space modes]

Let Assumptions 1–4 hold. Let \( \mathbf{z}_0 \in \Omega \), and define the iterative sequence:
\[
\mathbf{z}_{t+1} := f(\mathbf{z}_t) = E(D(\mathbf{z}_t)) \quad \text{for all } t \geq 0.
\]
Then:
\begin{enumerate}
    \item The sequence \( \{ \mathbf{z}_t \} \) converges exponentially fast to a unique fixed point \( \mathbf{z}^* \in \mathcal{M} \), satisfying \( f(\mathbf{z}^*) = \mathbf{z}^* \).
    \item At the limit point \( \mathbf{z}^* \), we have \( \nabla \log q(\mathbf{z}^*) = 0 \).
    \item Moreover, \( \mathbf{z}^* \) is a local maximum of the density \( q \): the Hessian satisfies
    \[
    \nabla^2 \log q(\mathbf{z}^*) \prec 0,
    \]
    meaning that the Hessian is negative definite.
\end{enumerate}
\end{theorem}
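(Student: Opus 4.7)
The plan is to handle the three claims separately, relying on (i) the Banach fixed point theorem for convergence, (ii) the directional ascent lemma (Lemma~\ref{lem:ascent}) for stationarity of $\log q$ at the limit, and (iii) a linearisation of $f$ around the fixed point, combined with the contraction bound $\|J_f\|_{\mathrm{op}}\le L<1$, for strict negative definiteness of the Hessian.

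For part 1, the plan is routine. By Assumption~3, $f$ is $L$-Lipschitz on the open convex set $\Omega$ with $L<1$. Restricting to a closed ball $\overline{B}\subseteq\Omega$ containing $\mathbf{z}_0$ (and verifying by the contraction estimate that the iterates never escape $\overline{B}$), the Banach fixed point theorem gives a unique $\mathbf{z}^*\in\overline{B}$ with $f(\mathbf{z}^*)=\mathbf{z}^*$, and exponential convergence $\|\mathbf{z}_t-\mathbf{z}^*\|\le L^t\|\mathbf{z}_0-\mathbf{z}^*\|$. Uniqueness of $\mathbf{z}^*$ within $\Omega$ follows from the global contraction on $\Omega$, and $\mathbf{z}^*\in\mathcal{M}$ by definition of $\mathcal{M}$.

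For part 2, I would apply Lemma~\ref{lem:ascent} along perturbations of $\mathbf{z}^*$. Pick any unit direction $\mathbf{u}$ and consider $\mathbf{z}_\epsilon=\mathbf{z}^*+\epsilon\mathbf{u}$. Taylor expanding both factors at $\mathbf{z}^*$ gives
\begin{align*}
\mathbf{v}(\mathbf{z}_\epsilon) &= (J_f(\mathbf{z}^*)-I)\,\epsilon\mathbf{u} + O(\epsilon^2), \\
\nabla\log q(\mathbf{z}_\epsilon) &= \nabla\log q(\mathbf{z}^*) + \epsilon\,\nabla^2\log q(\mathbf{z}^*)\mathbf{u} + O(\epsilon^2).
\end{align*}
The lemma forces $\langle\nabla\log q(\mathbf{z}_\epsilon),\mathbf{v}(\mathbf{z}_\epsilon)\rangle>0$ for every small $\epsilon\ne 0$, and the $O(\epsilon)$ coefficient of this inner product is $\langle\nabla\log q(\mathbf{z}^*),(J_f(\mathbf{z}^*)-I)\mathbf{u}\rangle$. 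Since this coefficient must be sign-stable under $\epsilon\mapsto-\epsilon$ (and under $\mathbf{u}\mapsto-\mathbf{u}$), it must vanish for every $\mathbf{u}$, hence $(J_f(\mathbf{z}^*)-I)^{\top}\nabla\log q(\mathbf{z}^*)=0$. The bound $\|J_f(\mathbf{z}^*)\|_{\mathrm{op}}\le L<1$ makes $J_f(\mathbf{z}^*)-I$ invertible, so $\nabla\log q(\mathbf{z}^*)=0$.

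For part 3 I would push the same expansion to second order and combine with the infinitesimal form of the score relation from Theorem~\ref{thm:thm_2}. With $\nabla\log q(\mathbf{z}^*)=0$, the leading non-trivial term in $\langle\nabla\log q(\mathbf{z}_\epsilon),\mathbf{v}(\mathbf{z}_\epsilon)\rangle$ is $\epsilon^2\,\mathbf{u}^{\top}\nabla^2\log q(\mathbf{z}^*)(J_f(\mathbf{z}^*)-I)\mathbf{u}$, which the lemma forces to be strictly positive for every $\mathbf{u}\ne 0$. Writing $H=\nabla^2\log q(\mathbf{z}^*)$ and $A=J_f(\mathbf{z}^*)-I$, the symmetric part of $HA$ is positive definite; since $H=H^{\top}$, using the concrete relation $f(\mathbf{z})-\mathbf{z}\propto\nabla\log q(\mathbf{z})$ from Theorem~\ref{thm:thm_2} (which at $\mathbf{z}^*$ gives $A=c(\mathbf{z}^*)H$ with $c(\mathbf{z}^*)>0$), the contraction bound implies $A$ has symmetric part with eigenvalues in $[-1-L,\,L-1]\subset(-\infty,0)$, hence $A\prec 0$, and therefore $H\prec 0$.

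The main obstacle I expect is in part 3, specifically in going cleanly from the directional ascent inequality to strict definiteness of $H$ without invoking Theorem~\ref{thm:thm_2} in its stronger (and only informally stated) form. If one only has the positivity of $\mathbf{u}^{\top}HA\mathbf{u}$ with $A\prec 0$ but without $A$ being a positive multiple of $H$, then $H\prec 0$ requires a simultaneous diagonalisation or a Lyapunov-type argument for the pair $(H,A)$. I would handle this by invoking the symmetry of $J_f$ at the fixed point (holding under Assumption~4 with tied-weight or Jacobian-regularised encoders, as discussed after Theorem~\ref{thm:thm_2}), which makes $A$ symmetric and reduces the algebra to a commuting spectral argument; alternatively, I would strengthen Assumption~4 to include the proportionality $f(\mathbf{z})-\mathbf{z}=M(\mathbf{z})\nabla\log q(\mathbf{z})$ with $M(\mathbf{z}^*)\succ 0$, so that part 3 follows immediately from $A=M(\mathbf{z}^*)H$ and $A\prec 0$.
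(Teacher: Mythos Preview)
Your Part~1 matches the paper exactly (Banach on the contraction $f$ over the convex set $\Omega$).

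For Parts~2 and~3 you take a genuinely different route. The paper argues qualitatively: for Step~2 it supposes $\nabla\log q(\mathbf{z}^*)\ne 0$, notes that $\mathbf{v}(\mathbf{z})\to 0$ as $\mathbf{z}\to\mathbf{z}^*$ while Lemma~\ref{lem:ascent} keeps $\langle\nabla\log q,\mathbf{v}\rangle>0$ nearby, and declares a contradiction; for Step~3 it observes that $q(\mathbf{z}_t)$ is strictly increasing along iterates and argues that a saddle or minimum at $\mathbf{z}^*$ would contradict convergence ``through ascent directions.'' Your approach instead Taylor-expands the ascent inequality at $\mathbf{z}^*$ and extracts constraints order by order: the $O(\epsilon)$ term, together with invertibility of $J_f(\mathbf{z}^*)-I$ (from $\|J_f\|<1$), forces $\nabla\log q(\mathbf{z}^*)=0$; the $O(\epsilon^2)$ term then constrains $H=\nabla^2\log q(\mathbf{z}^*)$. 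This is more explicit and algebraic than the paper's dynamical picture, and your Part~2 argument is in fact sharper than the paper's somewhat loose contradiction.

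Your stated ``main obstacle'' in Part~3 is not actually one: the Lyapunov-type argument you list as a fallback already works, without symmetry of $J_f$ and without the informal proportionality from Theorem~\ref{thm:thm_2}. Since $\|J_f(\mathbf{z}^*)\|_{\mathrm{op}}\le L<1$, every eigenvalue of $B:=I-J_f(\mathbf{z}^*)=-A$ has real part at least $1-L>0$, so $B$ is positive-stable. Your second-order inequality $\mathbf{u}^{\top}HA\,\mathbf{u}\ge 0$ for all $\mathbf{u}$ reads $B^{\top}(-H)+(-H)B\succeq 0$, and the Lyapunov representation $-H=\int_0^\infty e^{-B^{\top}t}\bigl(B^{\top}(-H)+(-H)B\bigr)e^{-Bt}\,dt$ immediately gives $-H\succeq 0$, i.e.\ $H\preceq 0$. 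So you can drop the appeals to tied weights and to Theorem~\ref{thm:thm_2}. (As in the paper's own argument, the passage from $H\preceq 0$ to the \emph{strict} $H\prec 0$ claimed in the statement is not fully pinned down by either proof, since the limit of the $O(\epsilon^2)$ inequality is only non-strict.)
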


\begin{proof}
\textbf{Step 1: Contraction and convergence.}  
By Assumption 3, \( f: \Omega \to \Omega \) is a contraction mapping with contraction constant \( L < 1 \). Since \( \Omega \) is convex and hence complete under \( \|\cdot\| \), Banach's fixed-point theorem applies. Therefore:
\begin{itemize}
    \item There exists a unique fixed point \( \mathbf{z}^* \in \Omega \) such that \( f(\mathbf{z}^*) = \mathbf{z}^* \).
    \item For any initial \( \mathbf{z}_0 \in \Omega \), the iterates satisfy:
    \[
    \|\mathbf{z}_t - \mathbf{z}^*\| \le L^t \|\mathbf{z}_0 - \mathbf{z}^*\| \to 0 \quad \text{as } t \to \infty.
    \]
\end{itemize}

\textbf{Step 2: Stationarity of the limit.}  
At the fixed point \( \mathbf{z}^* \), we have:
\[
\mathbf{v}(\mathbf{z}^*) = f(\mathbf{z}^*) - \mathbf{z}^* = \mathbf{0}.
\]
Assume for contradiction that \( \nabla \log q(\mathbf{z}^*) \ne \mathbf{0} \).  
By continuity of \( \nabla \log q \), there exists a neighborhood \( U \ni \mathbf{z}^* \) where \( \nabla \log q(\mathbf{z}) \) stays close to \( \nabla \log q(\mathbf{z}^*) \).  
Then for nearby \( \mathbf{z} \), Lemma~\ref{lem:ascent} implies:
\[
\left\langle \nabla \log q(\mathbf{z}), \mathbf{v}(\mathbf{z}) \right\rangle > 0.
\]
But continuity of \( f \) and the residual \( \mathbf{v}(\mathbf{z}) \to \mathbf{0} \) implies the ascent direction vanishes at \( \mathbf{z}^* \), which contradicts the assumption that \( \nabla \log q(\mathbf{z}^*) \ne \mathbf{0} \).  
Hence, we conclude:
\[
\nabla \log q(\mathbf{z}^*) = \mathbf{0}.
\]

\textbf{Step 3: Local maximality.}  
By Lemma~\ref{lem:ascent} and Assumption 4, the sequence \( q(\mathbf{z}_t) \) is strictly increasing and converges to \( q(\mathbf{z}^*) \).  
If \( \mathbf{z}^* \) were a saddle point or local minimum, there would exist a direction \( \mathbf{u} \in \mathbb{R}^d \) such that the second-order Taylor expansion yields:
\[
\left. \frac{d^2}{dt^2} \log q(\mathbf{z}^* + t\mathbf{u}) \right|_{t=0} = \mathbf{u}^\top \nabla^2 \log q(\mathbf{z}^*) \mathbf{u} \geq 0,
\]
which contradicts the fact that \( \mathbf{z}_t \to \mathbf{z}^* \) through ascent directions.  
Therefore, all second directional derivatives must be negative, implying:
\[
\nabla^2 \log q(\mathbf{z}^*) \prec 0.
\]
That is, \( \mathbf{z}^* \) is a strict local maximum of \( q \).
\end{proof}

\subsection{Proposition~\ref{prop:prop_3_1}}
\label{app:proof_prop_3_1}

We report here a detailed formulation of Proposition~\ref{prop:prop_3_1} alongside its proof. 

\begin{proposition}
\label{prop:gradient_alignment}

Let \( f = E \circ D : \mathbb{R}^d \to \mathbb{R}^d \) be the composition of an autoencoder's decoder \( D \) and encoder \( E \), and define the residual vector field \( \mathbf{v}(\mathbf{z}) := f(\mathbf{z}) - \mathbf{z} \). Consider the reconstruction loss
\[
L(\mathbf{z}) := \| f(\mathbf{z}) - \mathbf{z} \|^2.
\]
Then, the iteration \( \mathbf{z}_{t+1} = f(\mathbf{z}_t) \) corresponds locally to gradient descent on \( L \) (i.e., \( \mathbf{v}(\mathbf{z}) \propto -\nabla_{\mathbf{z}} L(\mathbf{z}) \)) if either of the following conditions holds:
\begin{enumerate}
    \item The Jacobian \( J_f(\mathbf{z}) \approx \mathrm{Id} \) (i.e., \( f \) is locally an isometry).
    \item \( \mathbf{z} \) is near an attractor point \( \mathbf{z}^* \) such that \( f(\mathbf{z}^*) = \mathbf{z}^* \) and \( J_f(\mathbf{z}^*) \approx 0 \).
\end{enumerate}
\end{proposition}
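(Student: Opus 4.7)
The plan is to write both vector fields explicitly and compare them. Applying the chain rule to $L(\mathbf{z}) = \mathbf{v}(\mathbf{z})^\top \mathbf{v}(\mathbf{z})$ with $\mathbf{v}(\mathbf{z}) = f(\mathbf{z}) - \mathbf{z}$ gives
\[
-\nabla_{\mathbf{z}} L(\mathbf{z}) \;=\; 2\bigl(I - J_f(\mathbf{z})\bigr)^{\!\top}\mathbf{v}(\mathbf{z}).
\]
The iteration step is $\mathbf{z}_{t+1}-\mathbf{z}_t = \mathbf{v}(\mathbf{z}_t)$, so the target proportionality $\mathbf{v}(\mathbf{z})\propto -\nabla_{\mathbf{z}} L(\mathbf{z})$ reduces to requiring that the operator $(I-J_f(\mathbf{z}))^\top$ act as a scalar multiple of the identity on the residual vector $\mathbf{v}(\mathbf{z})$. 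This algebraic reformulation is the key: it explains \emph{why} only two Jacobian regimes qualify, and reduces the proposition to checking them.

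Case~(2) is the direct one. Near an attractor $\mathbf{z}^*$ with $J_f(\mathbf{z}^*)\approx 0$, continuity of $J_f$ yields $(I-J_f(\mathbf{z}))^\top \approx I$ in a neighborhood, so $-\nabla_{\mathbf{z}} L(\mathbf{z}) \approx 2\mathbf{v}(\mathbf{z})$ and the proportionality constant is $-\tfrac{1}{2}$ in the limit $\mathbf{z}\to\mathbf{z}^*$. I would quantify this by bounding the alignment error by $\|J_f(\mathbf{z})\|_{\mathrm{op}}\,\|\mathbf{v}(\mathbf{z})\|$, which vanishes as we approach the attractor.

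Case~(1), the isometric regime, is where I expect the main obstacle. If the eigenvalues of $J_f(\mathbf{z})$ are concentrated near $1$, then $(I-J_f)^\top$ is close to zero, so $-\nabla_{\mathbf{z}} L(\mathbf{z})\to 0$; simultaneously $J_{f-\mathrm{id}}\approx 0$ forces the residual $\mathbf{v}$ to be locally constant up to higher-order terms, so Taylor-expanding $f$ around a reference point where $J_f = I$ makes both $\mathbf{v}(\mathbf{z})$ and $\nabla_{\mathbf{z}} L(\mathbf{z})$ of the same order in the perturbation scale. Their ratio then stabilizes, giving the required proportionality. The subtle point is that without an additional fixed-point hypothesis, $\mathbf{v}(\mathbf{z})$ need not vanish while $\nabla L(\mathbf{z})$ does, so the cleanest way to handle this case is to state it as a limit claim: in the exact isometric limit both updates are simultaneously trivial, and iteration coincides with a (degenerate) gradient step.

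To close the argument I would contrast these two regimes with the generic one, where $J_f(\mathbf{z})$ is neither close to $I$ nor to $0$; then $(I-J_f)^\top$ mixes $\mathbf{v}$ nontrivially with its image, and iterating $f$ produces genuinely higher-order contributions beyond a single gradient step. This matches the remark that follows the statement: outside the two distinguished regimes, iteration traces nonlinear trajectories that first-order descent on $L$ cannot reproduce, which is precisely why the iterates can bypass spurious critical points of the reconstruction loss.
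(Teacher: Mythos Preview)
Your proof follows the same route as the paper's: compute $\nabla_{\mathbf{z}} L(\mathbf{z}) = 2(J_f(\mathbf{z}) - \mathrm{Id})^{\top}\mathbf{v}(\mathbf{z})$, reduce the proportionality question to whether $(\mathrm{Id} - J_f)^{\top}$ acts as a scalar on $\mathbf{v}$, and then check the two Jacobian regimes. The paper's handling of the isometry case is equally brief---it simply notes that $J_f\approx\mathrm{Id}$ forces $\nabla L\approx 0$ so one is ``at or near a local minimum''---so your explicit flagging of the degeneracy there is, if anything, more careful than the original.
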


\begin{proof}
We begin by expanding the loss:
\[
L(\mathbf{z}) = \| f(\mathbf{z}) - \mathbf{z} \|^2 = \langle f(\mathbf{z}) - \mathbf{z}, f(\mathbf{z}) - \mathbf{z} \rangle.
\]
Let \( \mathbf{v}(\mathbf{z}) := f(\mathbf{z}) - \mathbf{z} \). Then,
\[
L(\mathbf{z}) = \| \mathbf{v}(\mathbf{z}) \|^2.
\]

Now compute the gradient of \( L \):
\begin{align*}
\nabla_{\mathbf{z}} L(\mathbf{z}) 
&= \nabla_{\mathbf{z}} \left( \mathbf{v}(\mathbf{z})^\top \mathbf{v}(\mathbf{z}) \right) \\
&= 2 J_{\mathbf{v}}(\mathbf{z})^\top \mathbf{v}(\mathbf{z}),
\end{align*}
where \( J_{\mathbf{v}}(\mathbf{z}) = J_f(\mathbf{z}) - \mathrm{Id} \) is the Jacobian of the residual field.

So we obtain:
\[
\nabla_{\mathbf{z}} L(\mathbf{z}) = 2 (J_f(\mathbf{z}) - \mathrm{Id})^\top \mathbf{v}(\mathbf{z}).
\]

Therefore, if we ask whether \( \mathbf{v}(\mathbf{z}) \propto -\nabla_{\mathbf{z}} L(\mathbf{z}) \), we need:
\[
\mathbf{v}(\mathbf{z}) \propto - (J_f(\mathbf{z}) - \mathrm{Id})^\top \mathbf{v}(\mathbf{z}),
\]
which simplifies to:
\[
[(J_f(\mathbf{z}) - \mathrm{Id})^\top + \alpha \mathrm{Id}] \mathbf{v}(\mathbf{z}) = 0
\]
for some \( \alpha > 0 \). This holds approximately in the two special cases:
\begin{itemize}
    \item \textbf{Isometry:} If \( J_f(\mathbf{z}) \approx \mathrm{Id} \), then \( \nabla L(\mathbf{z}) \approx 0 \), so \( \mathbf{v}(\mathbf{z}) \) is nearly stationary—i.e., we are at or near a local minimum.
    \item \textbf{Attractor:} If \( \mathbf{z} \approx \mathbf{z}^* \) with \( f(\mathbf{z}^*) = \mathbf{z}^* \) and \( J_f(\mathbf{z}^*) \approx 0 \), then:
    \[
    \nabla L(\mathbf{z}^*) \approx -2 \mathbf{v}(\mathbf{z}^*) = 0,
    \]
    and thus \( \mathbf{z}^* \) is a fixed point and local minimum of the loss.
\end{itemize}

In these two cases cases, the dynamics of \( \mathbf{z}_{t+1} = f(\mathbf{z}_t) \) follow the direction of steepest descent of \( L \), up to scaling. Higher order terms dominate influence the dynamics in the general case.

\end{proof}

\subsection{Attractors connects to  generalization error}
\label{app:attr_gen_err}
We report here a detailed formulation of Proposition~\ref{prop:attractor-generalization} alongside its proof.



\begin{proposition}[Attractors as prototypes for generalization]
\label{prop:attractor-generalization_full}
Let $\mathbf{Z}^\star\subset \Omega\subset \mathbb{R}^d$ be a (finite) dictionary of attractors of $f=E\circ D$ in a neighborhood $\Omega$ of latent space, and let 
$\Pi:\Omega\to \mathbf{Z}^\star$ denote a (measurable) nearest-attractor map 
$\Pi(\mathbf{z}) \in \arg\min_{\mathbf{u}\in \mathbf{Z}^\star}\|\mathbf{z}-\mathbf{u}\|_2$ (with any fixed tie-breaking rule).
If the decoder $D$ is $L_D$-Lipschitz on $\Omega$, then for any test point $\mathbf{x}$ with $\mathbf{z}=E(\mathbf{x})\in \Omega$:
\[
\big\|\mathbf{x}-F(\mathbf{x})\big\|_2^2 
\;\le\; 
\underbrace{\big\|\mathbf{x}-D(\Pi(\mathbf{z}))\big\|_2^2}_{\text{prototype error}}
\;+\;
\underbrace{L_D^2 \,\big\|\mathbf{z}-\Pi(\mathbf{z})\big\|_2^2}_{\text{coverage error}}.
\]
Moreover, if $\mathbf{Z}^\star$ is an $\varepsilon$-cover of $\mathrm{supp}\,q$ (the latent marginal), then for all $\mathbf{x}$ with $E(\mathbf{x})\in \mathrm{supp}\,q$,
\[
\big\|\mathbf{x}-F(\mathbf{x})\big\|_2^2 
\;\le\; 
\big\|\mathbf{x}-D(\Pi(E(\mathbf{x})))\big\|_2^2 \;+\; L_D^2 \varepsilon^2,
\]
and the same bound holds in expectation over $\mathbf{x}\sim p_{\mathrm{test}}$ whenever $E(\mathbf{x})\in \mathrm{supp}\,q$ almost surely.
\end{proposition}

We make the following mild assumptions:
\begin{itemize}
\item[(A1)]
$\mathbf{Z}^*\subset \Omega$ is a (finite) set of attracting fixed points of $f=E\circ D$ contained in an open neighborhood $\Omega\subset\mathbb{R}^d$.
\item[(A2)]
$D$ is $L_D$-Lipschitz on $\Omega$, i.e., $\|D(\mathbf{z})-D(\mathbf{u})\|\le L_D\|\mathbf{z}-\mathbf{u}\|_2$ for all $\mathbf{z},\mathbf{u}\in\Omega$.
\item[(A3)]
$\Pi:\Omega\to \mathbf{Z}^\star$ is any measurable selection of nearest points, e.g. $\Pi(\mathbf{z}) \in \arg\min_{\mathbf{u}\in \mathbf{Z}^\star}\|\mathbf{z}-\mathbf{u}\|_2$ .
\end{itemize}

\paragraph{Proof of Proposition~\ref{prop:attractor-generalization_full}.}
Let $\mathbf{x}$ be any test point with $\mathbf{z}=E(\mathbf{x})\in\Omega$. Add and subtract $D(\Pi(\mathbf{z}))$:
\[
\mathbf{x}-D(E(\mathbf{x})) 
= \underbrace{\mathbf{x}-D(\Pi(\mathbf{z}))}_{\text{prototype error}} 
\;+\; \underbrace{D(\Pi(\mathbf{z}))-D(E(\mathbf{x}))}_{\text{decoder distortion}}.
\]
Taking norms and using  Cauchy-Schwarz inequality gives:
\begin{equation*}
\|\mathbf{x}-D(E(\mathbf{x}))\|_2^2 
\;\le\; \|\mathbf{x}-D(\Pi(\mathbf{z}))\|_2^2 \;+\; \|D(\Pi(\mathbf{z}))-D(E(\mathbf{x}))\|_2^2.
\end{equation*}

By (A2), $\|D(\Pi(\mathbf{z}))-D(E(\mathbf{x}))\| \le L_D \|\Pi(\mathbf{z})-\mathbf{z}\|_2$, which yields:
\begin{equation*}
\|\mathbf{x}-F(\mathbf{x})\|_2^2 \;\le\; \|\mathbf{x}-D(\Pi(\mathbf{z}))\|_2^2 + L_D^2 \|\mathbf{z}-\Pi(\mathbf{z})\|_2^2.
\end{equation*}
This proves the first inequality. If $\mathbf{Z}^\star$ is an $\varepsilon$-cover of $\mathrm{supp}\,q$, then for any $\mathbf{z}\in \mathrm{supp}\,q$ we have $\|\mathbf{z}-\Pi(\mathbf{z})\|\le \varepsilon$, hence
\[
\|\mathbf{x}-F(\mathbf{x})\|_2^2 \;\le\; \|\mathbf{x}-D(\Pi(E(\mathbf{x})))\|_2^2 + L_D^2\,\varepsilon^2,
\]
for all $\mathbf{x}$ with $E(\mathbf{x})\in \mathrm{supp}\,q$. Taking expectations (when $E(\mathbf{x})\in \mathrm{supp}\,q$ almost surely) gives the stated bound.
\hfill$\square$

\paragraph{Remarks.}
(i) The bound decomposes test error into a \emph{prototype term}, $\|\mathbf{x}-D(\Pi(E(\mathbf{x})))\|_2^2$, and a \emph{coverage term}, $L_D^2\|\mathbf{z}-\Pi(\mathbf{z})\|_2^2$.  
(ii) Under the assumptions of Theorem \ref{thm:thm_1}, $\mathbf{Z}^*$ are attractors with basins; then $\Pi$ aligns with the destination of latent dynamics, tying the algebraic bound to the memorization–generalization picture.

\subsection{Attractors in simple networks} \label{app:base_cases}
In this section we characterize attractors in linear and homogeneous networks, showing  that in this simple setting is possible to prove converge speed to attractors of the iteration in Eq~\ref{eq:ode}.

\textbf{Linear maps.} In the linear case, the encoding map $E$ and decoding  $D$ are parametrized by matrices $\mathbf{W}_1 \in \mathbb{R}^{N \times k} $ and $\mathbf{W}_2 \in \mathbb{R}^{k \times N}$ and the only fixed point of the map corresponds to the \emph{origin} if the network is bias free or to a shift of it. The rate of convergence of the iteration in Eq~\ref{eq:ode} is established by the spectrum of $\mathbf{W}_2^T\mathbf{W}_1$, and the iteration is equivalent to shrinking the input in the direction of the eigenvectors with associated eigenvalue $\lambda<1$.  In case the eigenvectors of the trained AEs are aligned with the optimal solution given by the Principal Component Analysis (PCA) decomposition of the data $\mathbf{X}=\mathbf{\Phi}\mathbf{\Lambda}\mathbf{\Phi}^*$, then the latent vector field vanished as the mapping is isometric and no contraction occurs. 

\textbf{Homogeneous maps.}
In the case of homogeneous neural networks, the network satisfies $ F(c \mathbf{x})=c^{\alpha}F(\mathbf{x}) $  with $c \in \mathbb{R} $ for some $\alpha$.  For example, this holds for ReLU networks without biases with $\alpha=1$, which learn a piecewise linear mapping. The input-output mapping can be rewritten as: 
\begin{align}
F(\mathbf{x})=J_F(\mathbf{x})\mathbf{x}
\end{align}
A similar observation was made in for denoising networks in \citep{kadkhodaie2023generalization}, we remark here its generality. 

This equality implies that we can rewrite Equation~\ref{eq:ode} as :
\begin{align}
    \mathbf{z}_{t+1}=J_f (\mathbf{z}_t)\mathbf{z}_t= \sum_i \lambda_i\mathbf{\phi}_i\mathbf{z}_t
\end{align}
Where $\sum_i \lambda_i\mathbf{\phi}_i$ is the eigendecomposition of $J_f(\mathbf{z})$. Since in the proximity of an attractor $ max \lambda(J_F(\mathbf{z}^*)<=1 $, the iterations shrink directions corresponding to the eigenvectors of the Jacobian by the corresponding eigenvalue. This allows us to derive the following result on the speed of convergence.
\begin{proposition}[informal, proof in Appendix~\ref{app:proof_thm_convergence}] \label{thm:convergence}
    The error $e_t=\| \mathbf{z}_t- \mathbf{z}^*\|$ converge exponentially to an $\epsilon$ depending on the spectral norm $\|J_f(\mathbf{z}^*)\|_\sigma$, according to the formula $ \frac{\log(\frac{\epsilon}{\|\mathbf{e}_0 \|})}{\log(\|J_f(\mathbf{z}^*)\|_\sigma) }$ that provides an estimate for the number of iterations $T$ to converge to the attractor.
\end{proposition}

In Figure~\ref{fig:convergence_speed} in the Appendix, we measure how well the convergence formula predicts the measured number of iterations to converge to an attractor, on a fully non linear AE with biases, where the assumptions of the current section are less likely to hold. The error in the estimate will be higher,  when the initial condition $\mathbf{z}_0$ is far from the attractor, as higher order terms dominate the dynamics, and the first order Taylor approximation accumulates more error.

\subsection{Proof of Proposition~\ref{thm:convergence}}
\label{app:proof_thm_convergence}

We report here a detailed formulation of Proposition~\ref{thm:convergence} alongside its proof.

\begin{proposition}
Let \( f : \mathbb{R}^n \to \mathbb{R}^n \) be a differentiable function with a fixed point \( \mathbf{z}^* \in \mathbb{R}^n \) such that \( f(\mathbf{z}^*) = \mathbf{z}^* \). Assume the Jacobian \( J_f(\mathbf{z}^*) \) has spectral norm \( \| J_f(\mathbf{z}^*) \|_\sigma = \rho < 1 \). Then, for initial point \( \mathbf{z}_0 \) sufficiently close to \( \mathbf{z}^* \), the sequence \( \{ \mathbf{z}_t \} \) defined by the iteration \( \mathbf{z}_{t+1} = f(\mathbf{z}_t) \) satisfies:
\[
\| \mathbf{z}_t - \mathbf{z}^* \| \leq \| \mathbf{e}_0 \| \cdot \rho^t,
\]
and the number of iterations \( T \) needed to reach an error \( \| \mathbf{z}_T - \mathbf{z}^* \| \leq \epsilon \) is bounded by:
\[
T \geq \frac{\log(\epsilon / \| \mathbf{e}_0 \|)}{\log(\rho)}.
\]
\end{proposition}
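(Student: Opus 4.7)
The statement is a standard local linearization argument around a fixed point, so the plan is to reduce the nonlinear iteration to its linearization at $\mathbf{z}^*$ and extract a geometric rate from the spectral norm of the Jacobian there. First I would write the error recursion. Let $\mathbf{e}_t := \mathbf{z}_t - \mathbf{z}^*$. Since $f$ is differentiable and $f(\mathbf{z}^*)=\mathbf{z}^*$, a first-order Taylor expansion around $\mathbf{z}^*$ gives
\begin{equation*}
    \mathbf{e}_{t+1} \;=\; f(\mathbf{z}_t) - f(\mathbf{z}^*) \;=\; J_f(\mathbf{z}^*)\,\mathbf{e}_t \;+\; r(\mathbf{e}_t),
\end{equation*}
where the remainder satisfies $\|r(\mathbf{e})\|/\|\mathbf{e}\| \to 0$ as $\|\mathbf{e}\|\to 0$. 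Taking norms and using the definition of the spectral norm yields $\|\mathbf{e}_{t+1}\| \leq \rho\,\|\mathbf{e}_t\| + \|r(\mathbf{e}_t)\|$.

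Next I would control the remainder to absorb it into the linear rate. Fix any $\rho' \in (\rho,1)$ and let $\delta := \rho'-\rho > 0$. By the $o(\|\mathbf{e}\|)$ property of the remainder, there exists a radius $r_0>0$ such that $\|r(\mathbf{e})\|\leq \delta\,\|\mathbf{e}\|$ whenever $\|\mathbf{e}\|\leq r_0$. Assuming $\|\mathbf{e}_0\|\leq r_0$ (this is the ``sufficiently close'' hypothesis), one step of the iteration gives $\|\mathbf{e}_1\|\leq \rho'\,\|\mathbf{e}_0\|\leq \|\mathbf{e}_0\|\leq r_0$, so the ball of radius $r_0$ is forward invariant and the same bound applies at every step. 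Induction then yields
\begin{equation*}
    \|\mathbf{e}_t\| \;\leq\; (\rho')^t\,\|\mathbf{e}_0\|.
\end{equation*}
Since $\rho'$ can be taken arbitrarily close to $\rho$, the asymptotic rate is $\rho$; the bound stated in the proposition with $\rho^t$ corresponds to the linearized dynamics, valid modulo this choice of neighborhood (alternatively, under the stronger hypothesis that $f$ is $\rho$-Lipschitz on a neighborhood of $\mathbf{z}^*$, which is what the paper implicitly assumes via the contractive setting of Section~\ref{sec:method}, the bound holds with $\rho$ directly from $\|\mathbf{e}_{t+1}\|\leq \rho\|\mathbf{e}_t\|$ and induction).

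Finally, for the iteration count, I would solve the threshold condition $\|\mathbf{e}_T\| \leq \epsilon$. From the geometric bound it suffices that $\rho^T\,\|\mathbf{e}_0\| \leq \epsilon$, i.e.\ $T\log \rho \leq \log(\epsilon/\|\mathbf{e}_0\|)$. Because $\rho<1$, $\log \rho < 0$, and dividing flips the inequality to
\begin{equation*}
    T \;\geq\; \frac{\log(\epsilon/\|\mathbf{e}_0\|)}{\log \rho},
\end{equation*}
which is the advertised formula. The main obstacle is the bookkeeping in the second step: justifying that the linear rate $\rho$ is the true local rate rather than only an asymptotic one requires the $\rho'$ trick above, and one must be careful that the trajectory never leaves the neighborhood where the first-order expansion is valid, which is guaranteed by forward invariance since $\rho'<1$.
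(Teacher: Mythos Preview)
Your proposal is correct and follows essentially the same route as the paper: linearize at the fixed point via a first-order Taylor expansion, bound the error recursion by the spectral norm $\rho$, absorb the $o(\|\mathbf{e}\|)$ remainder, obtain geometric decay, and then solve $\rho^T\|\mathbf{e}_0\|\leq\epsilon$ for $T$ (flipping the inequality because $\log\rho<0$). If anything, your version is more careful than the paper's, which simply ``neglects'' the remainder rather than using your $\rho'$ trick and forward-invariance argument to make the induction rigorous.
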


\begin{proof}
Let \( \mathbf{e}_t = \mathbf{z}_t - \mathbf{z}^* \) denote the error at iteration \( t \). We are given that \( f \) is differentiable around \( \mathbf{z}^* \), and that \( f(\mathbf{z}^*) = \mathbf{z}^* \). Applying the first-order Taylor expansion of \( f \) around \( \mathbf{z}^* \), we obtain:
\[
f(\mathbf{z}_t) = f(\mathbf{z}^*) + J_f(\mathbf{z}^*)(\mathbf{z}_t - \mathbf{z}^*) + R(\mathbf{z}_t),
\]
where \( R(\mathbf{z}_t) \) is the Taylor remainder satisfying \( \| R(\mathbf{z}_t) \| = o(\| \mathbf{z}_t - \mathbf{z}^* \|) \). Since \( f(\mathbf{z}^*) = \mathbf{z}^* \), this becomes:
\[
\mathbf{z}_{t+1} = f(\mathbf{z}_t) = \mathbf{z}^* + J_f(\mathbf{z}^*)(\mathbf{z}_t - \mathbf{z}^*) + R(\mathbf{z}_t).
\]
Thus, the error evolves as:
\[
\mathbf{e}_{t+1} = \mathbf{z}_{t+1} - \mathbf{z}^* = J_f(\mathbf{z}^*) \mathbf{e}_t + R(\mathbf{z}_t).
\]
For \( \mathbf{z}_t \) sufficiently close to \( \mathbf{z}^* \), we can neglect the higher-order term \( R(\mathbf{z}_t) \) in comparison to the leading linear term. Hence, the error evolves approximately as:
\[
\| \mathbf{e}_{t+1} \| \leq \| J_f(\mathbf{z}^*) \|_\sigma \cdot \| \mathbf{e}_t \| + o(\| \mathbf{e}_t \|).
\]
By continuity of \( f \), there exists a neighborhood \( \mathcal{U} \) of \( \mathbf{z}^* \) where \( \| J_f(\mathbf{z}) \|_\sigma \leq \rho + \delta < 1 \) for some \( \delta > 0 \). Choosing \( \mathbf{z}_0 \in \mathcal{U} \) and letting \( R(\mathbf{z}_t) = 0 \) (first-order approximation), the error satisfies:
\[
\| \mathbf{e}_t \| \leq \| \mathbf{e}_0 \| \cdot \rho^t.
\]

We now ask: for which \( T \) does \( \| \mathbf{e}_T \| \leq \epsilon \)? We solve:
\[
\| \mathbf{e}_0 \| \cdot \rho^T \leq \epsilon.
\]
Dividing both sides by \( \| \mathbf{e}_0 \| \) and taking logarithms yields:
\[
\rho^T \leq \frac{\epsilon}{\| \mathbf{e}_0 \|} \quad \Rightarrow \quad T \cdot \log(\rho) \leq \log\left( \frac{\epsilon}{\| \mathbf{e}_0 \|} \right).
\]
Since \( \log(\rho) < 0 \), we reverse the inequality when dividing:
\[
T \geq \frac{\log(\epsilon / \| \mathbf{e}_0 \|)}{\log(\rho)}.
\]

This proves that the error converges exponentially and that the number of iterations required to reach error \( \epsilon \) is lower bounded by the expression above.
\end{proof}

\section{Additional experiments}
\label{app:additional_exp}

\subsection{Bias at initialization}
As stated in the method section of the main paper, To empirically measure to what extent model are contractive at initialization we sample $N=1000$ samples from $\mathcal{N}(0,I)$, and we map them trough 12 different vision backbones, initialized randomly across 10 seeds. We measure the ratio between the input variance $\sigma_{in}=1$ and output variance $\sigma_{out}$ and we report in Figure~\ref{fig:variance_activations_init} .
\begin{figure}[h!]
    \centering
    \includegraphics[width=0.4\linewidth]{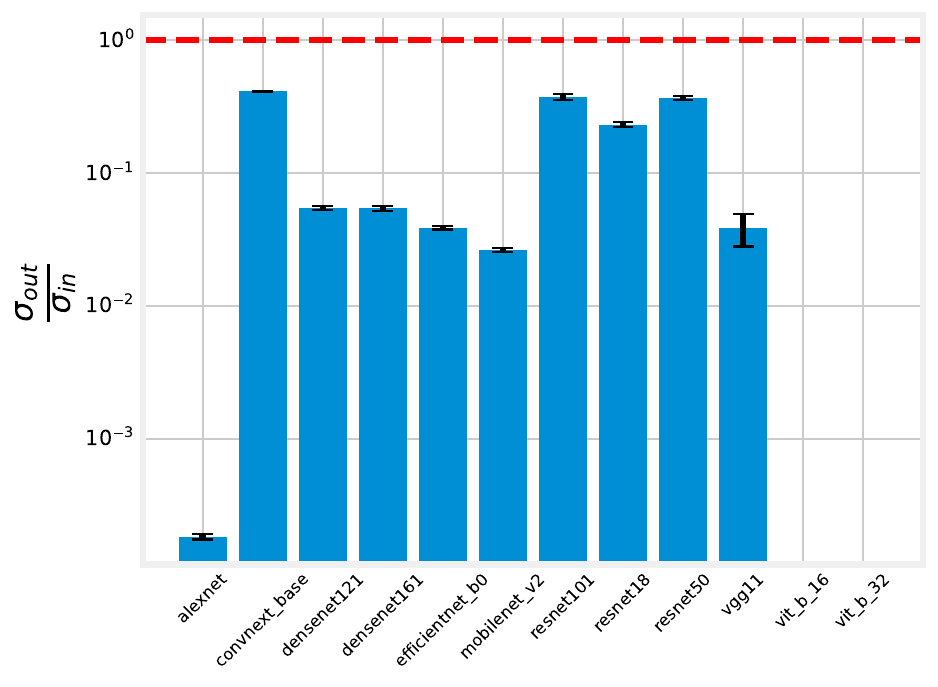}
    \caption{\emph{Contraction at initialization} Variance preserving ratio at initialization of torchvision models: all models considered have a ratio $<1$, indicating that the map at initialization is contractive.}

    \label{fig:variance_activations_init}
\end{figure}
\subsection{Qualitative results: data-free weight probing}
We report additional qualitative results from the data free weight probing experiment in section~\ref{sec:datafree}, in Figures~\ref{fig:qualitative_lion},\ref{fig:qualitative_imnet},\ref{fig:qualitative_camelyon}, observing that the reconstructions from attractors as a functions of the number of atoms used, are superior with respect to the orthogonal basis baseline. 
\begin{figure}[h!]
 \begin{subfigure}[b]{0.32\linewidth}    
\includegraphics[width=\linewidth]{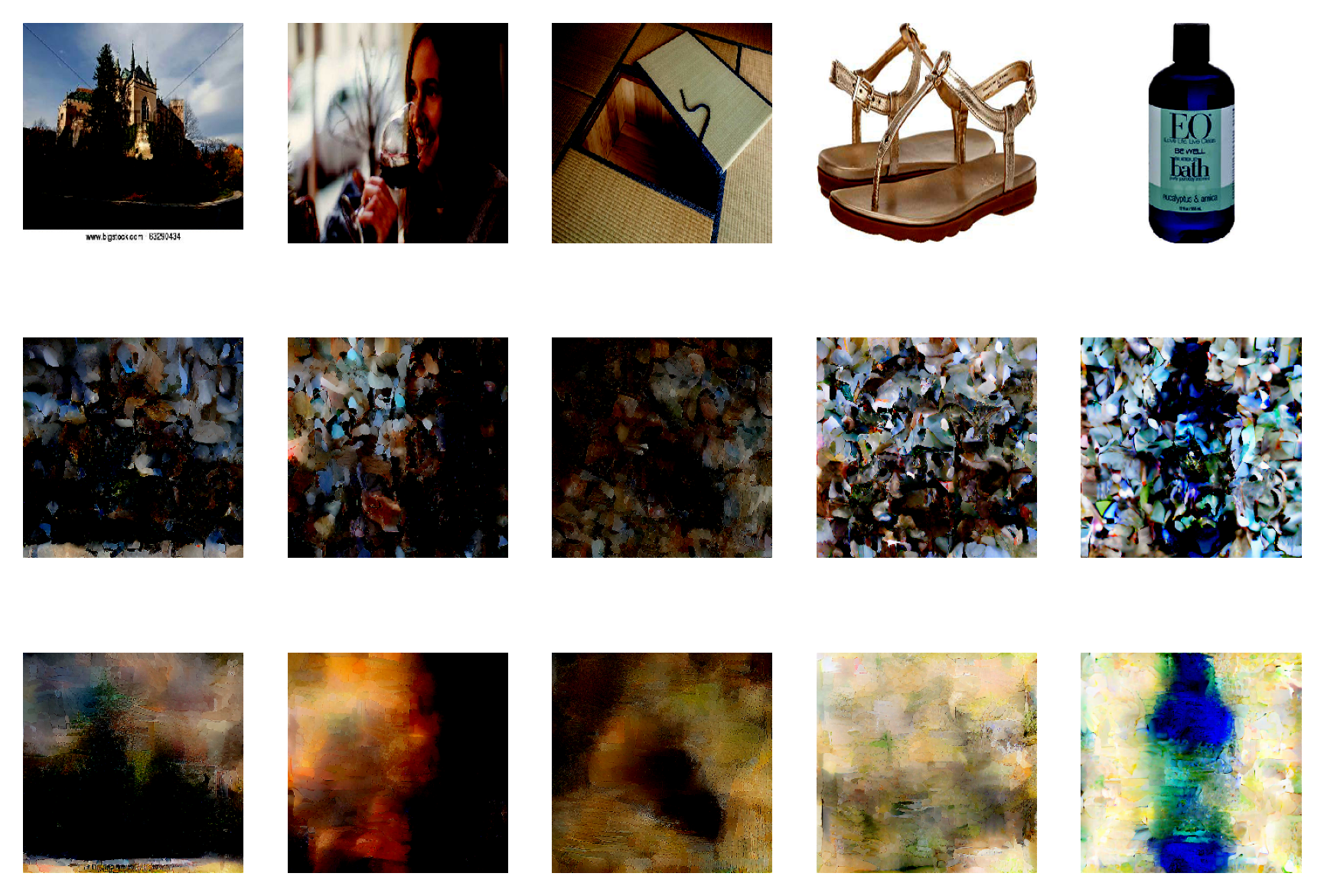}
 \subcaption{256 atoms}
 \end{subfigure}
  \begin{subfigure}[b]{0.32\linewidth}    
\includegraphics[width=\linewidth]{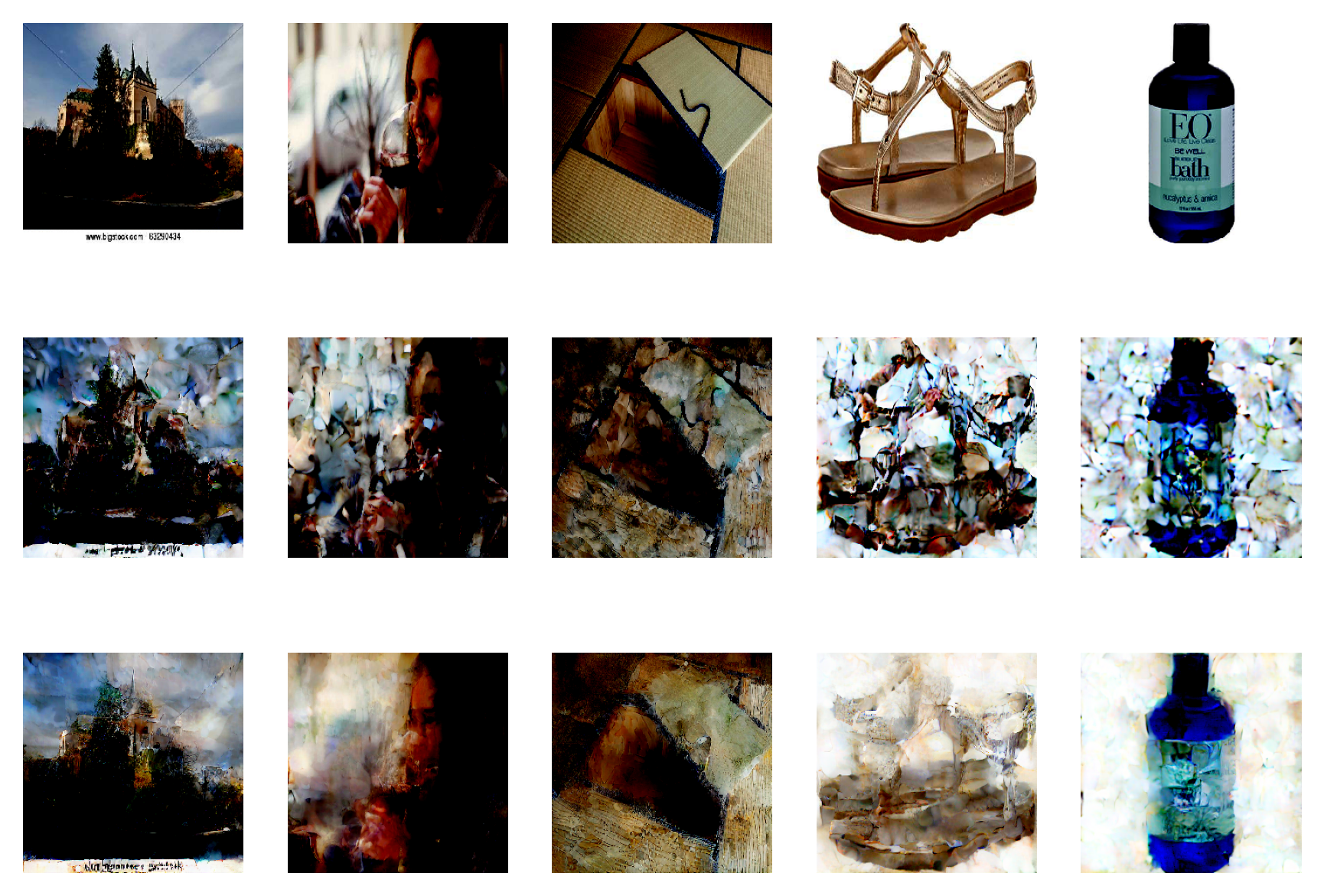}
 \subcaption{1024 atoms}
 \end{subfigure}
  \begin{subfigure}[b]{0.32\linewidth}    
\includegraphics[width=\linewidth]{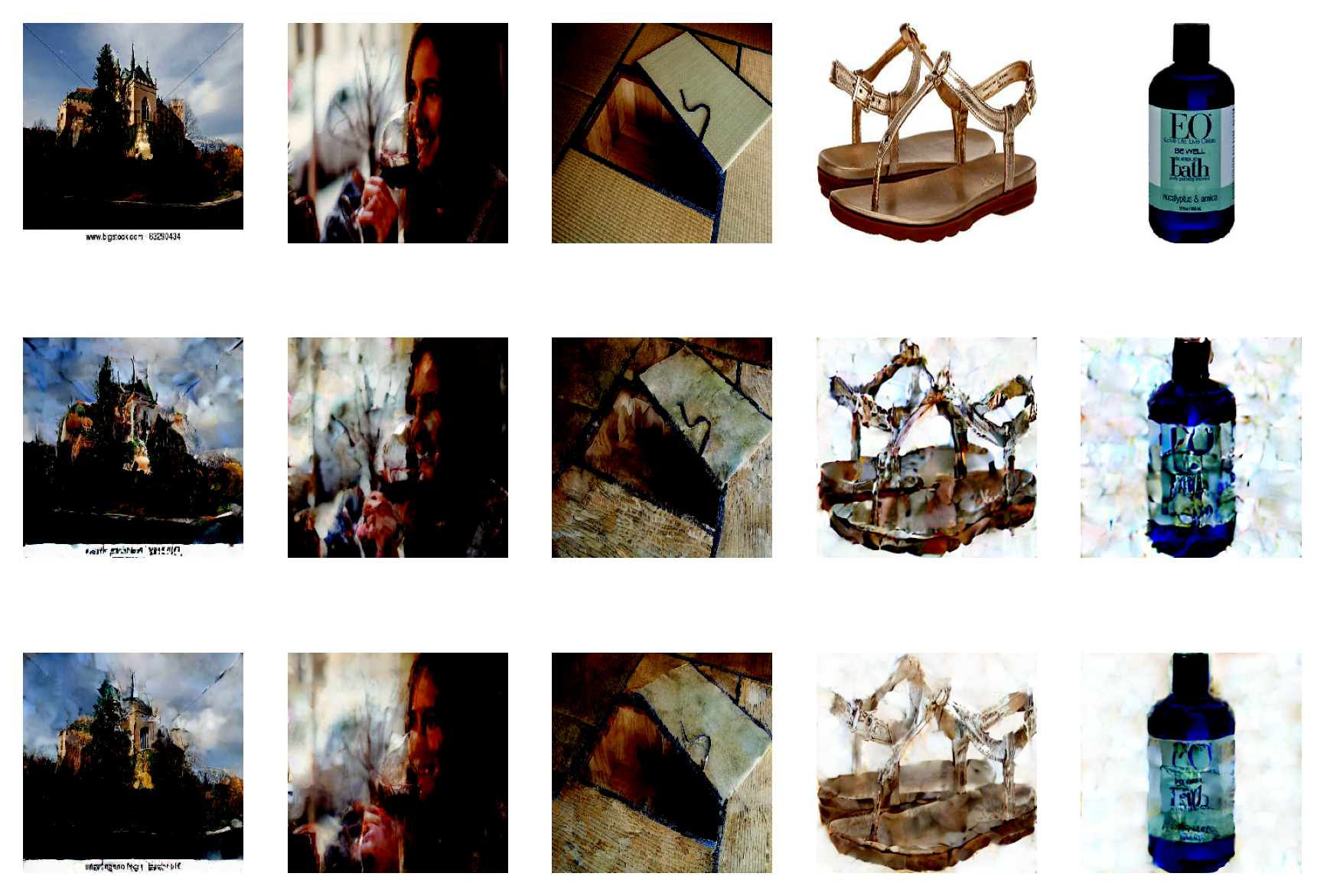}
 \subcaption{2048 atoms}
 \end{subfigure}
 \caption{Visualization of reconstructions from the data-free sample recovery experiment of Figure~\ref{fig:noise_probe}: Visualizing on \texttt{Laion2B} reconstructions of five random samples. First row: input samples; second row: reconstructions from orthogonal basis; third row reconstructions from attractors of noise.
 }
 \label{fig:qualitative_lion}
\end{figure}

\begin{figure}[h!]
 \begin{subfigure}[b]{0.32\linewidth}    
\includegraphics[width=\linewidth]{figures/images_ood/ImageNet_reconstructed_images_256.png}
 \subcaption{256 atoms}
 \end{subfigure}
  \begin{subfigure}[b]{0.32\linewidth}    
\includegraphics[width=\linewidth]{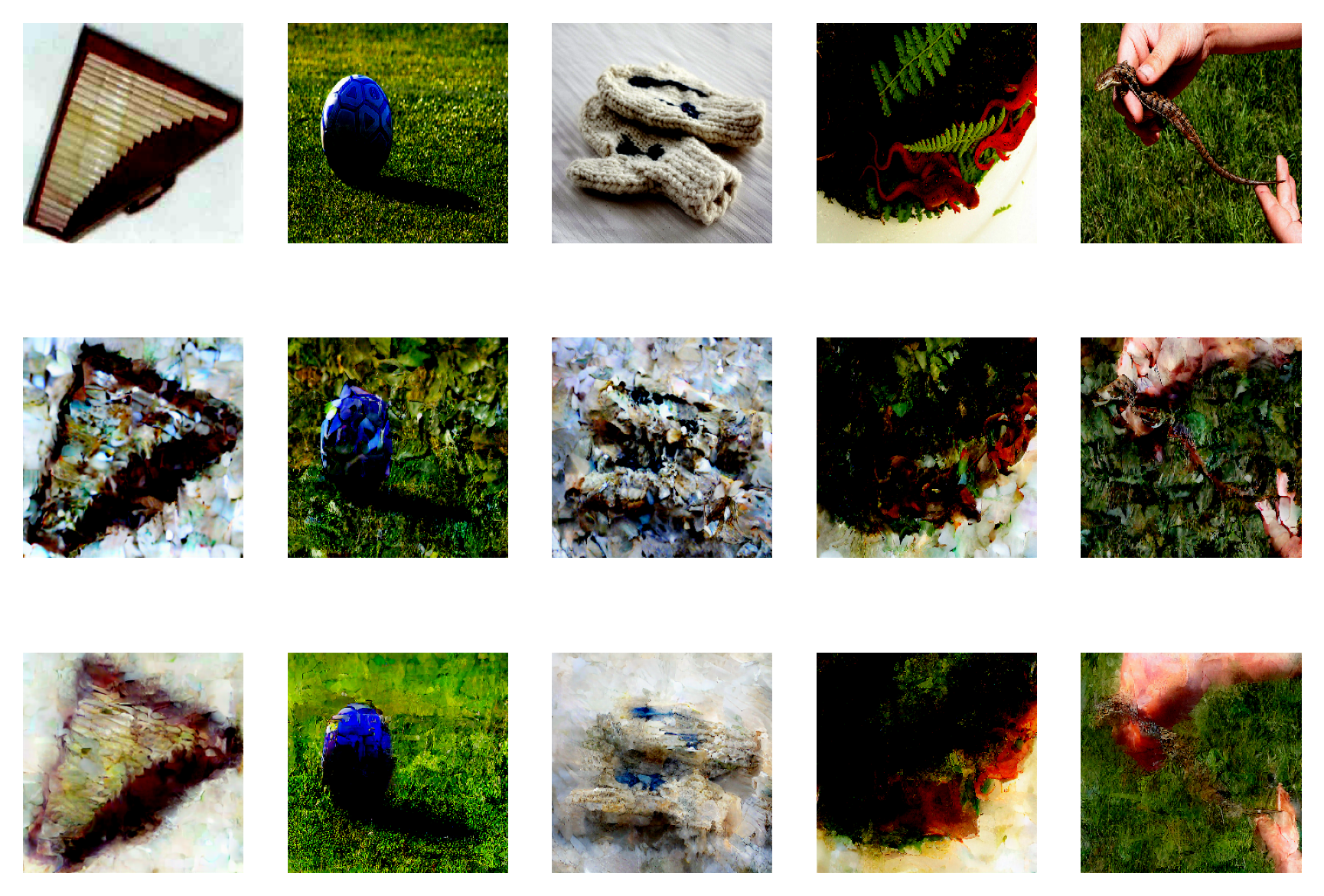}
 \subcaption{1024 atoms}
 \end{subfigure}
  \begin{subfigure}[b]{0.32\linewidth}    
\includegraphics[width=\linewidth]{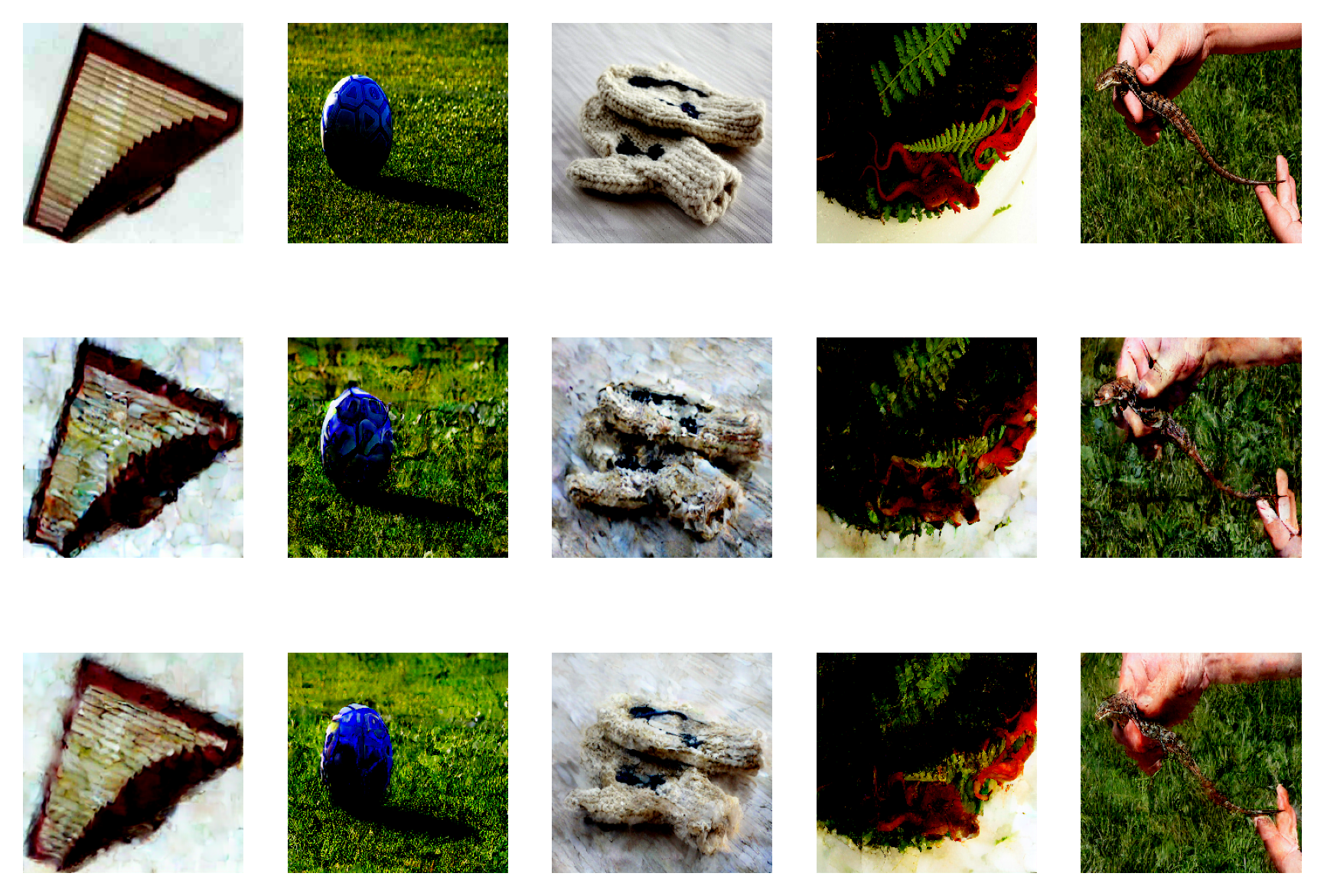}
 \subcaption{2048 atoms}
 \end{subfigure}
 \caption{Visualization of reconstructions from the data-free sample recovery experiment of Figure~\ref{fig:noise_probe}: Visualizing on \texttt{Imagenet1k} reconstructions of five random samples. First row: input samples; second row: reconstructions from orthogonal basis; third row reconstructions from attractors of noise.
 }
 \label{fig:qualitative_imnet}
\end{figure}

\begin{figure}[h!]
 \begin{subfigure}[b]{0.32\linewidth}    
\includegraphics[width=\linewidth]{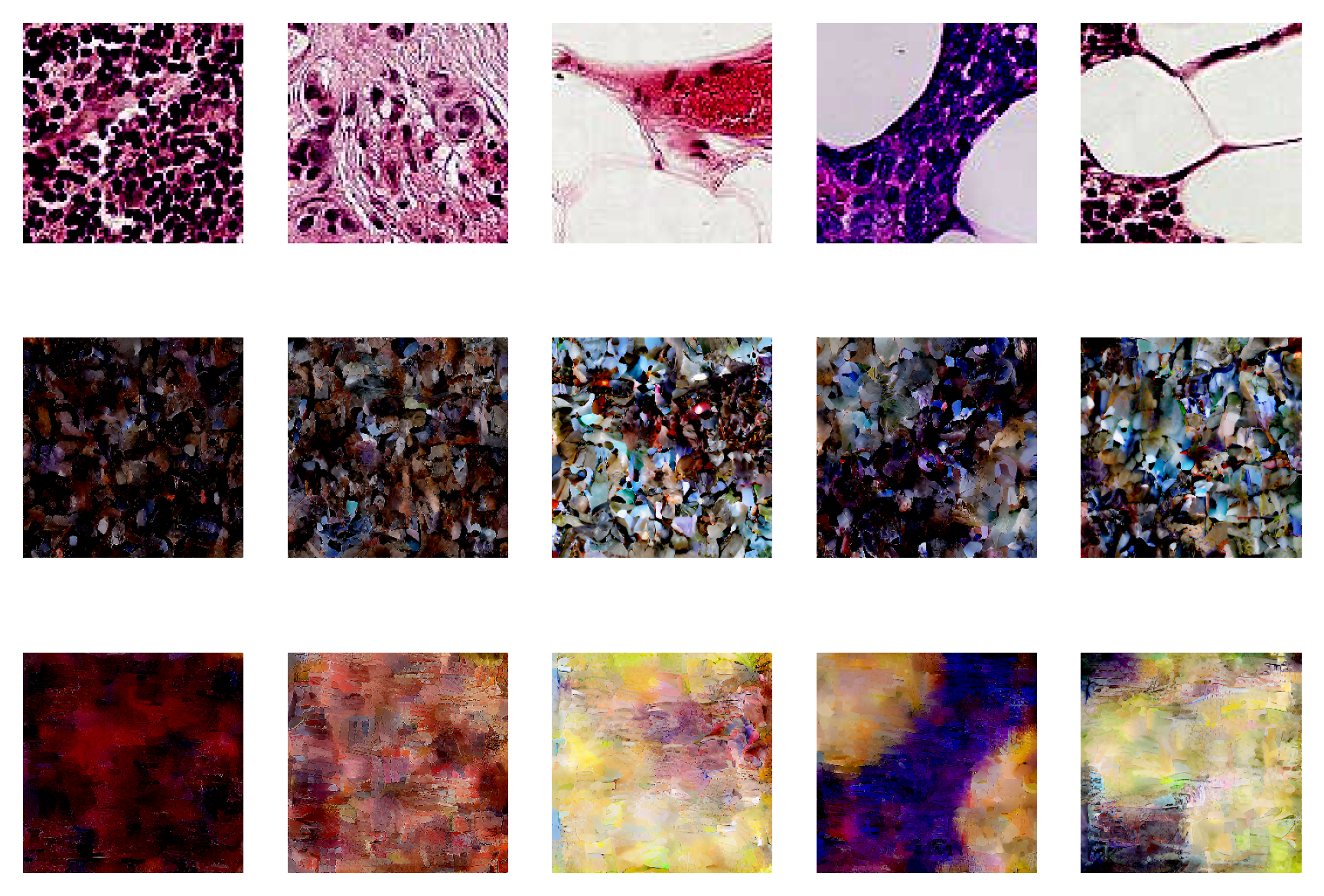}
 \subcaption{256 atoms}
 \end{subfigure}
  \begin{subfigure}[b]{0.32\linewidth}    
\includegraphics[width=\linewidth]{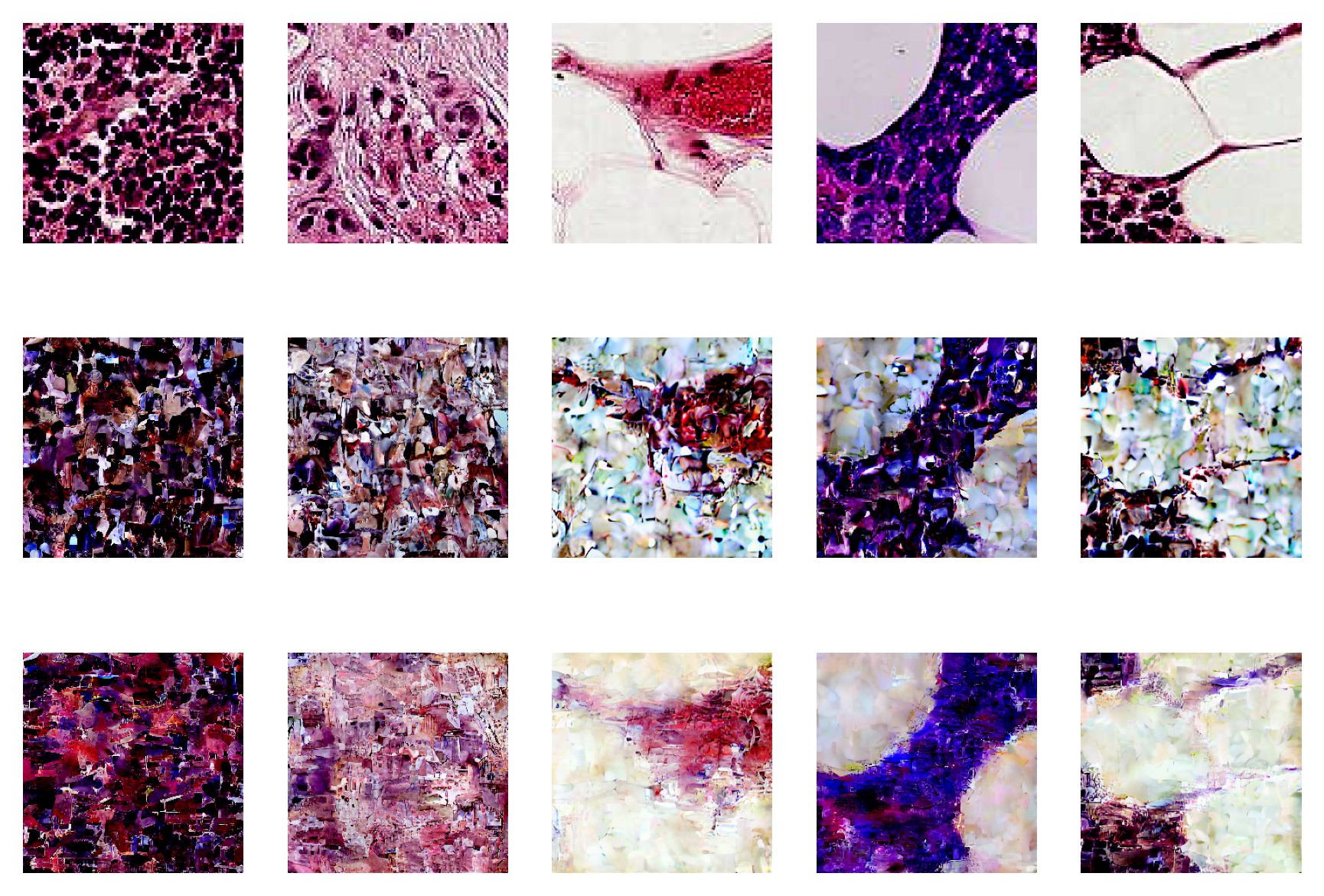}
 \subcaption{1024 atoms}
 \end{subfigure}
  \begin{subfigure}[b]{0.32\linewidth}    
\includegraphics[width=\linewidth]{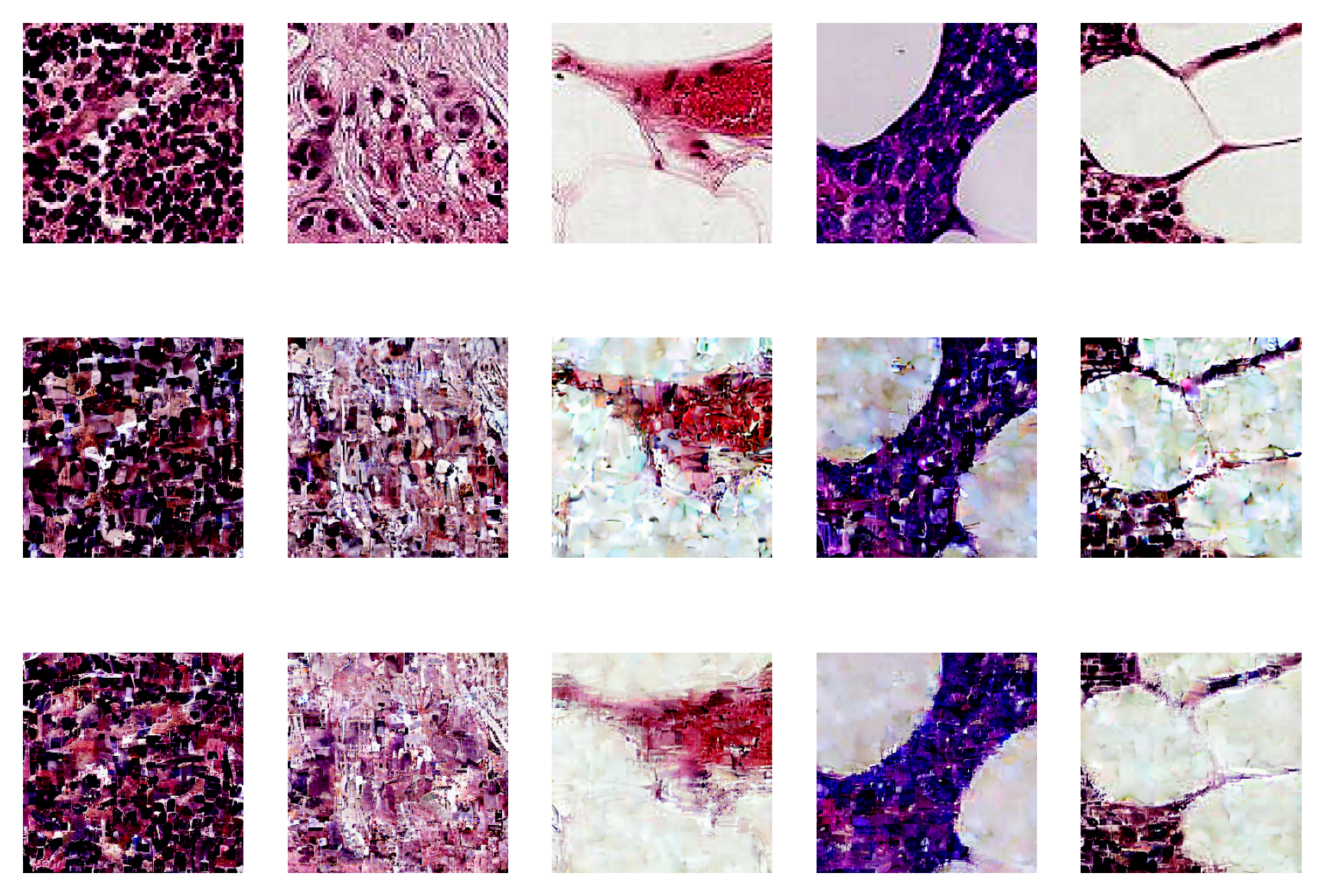}
 \subcaption{2048 atoms}
 \end{subfigure}
 \caption{Visualization of reconstructions from the data-free sample recovery experiment of Figure~\ref{fig:noise_probe}: Visualizing on \texttt{Camelyon17} reconstructions of five random samples. First row: input samples; second row: reconstructions from orthogonal basis; third row reconstructions from attractors of noise.
 }
 \label{fig:qualitative_camelyon}
\end{figure}

\subsection{Additional results on latent trajectories}

In Table~\ref{tab:ood_} we extend results of the experiment in Fig.~\ref{fig:ood_vitmae}, adding two additional baselines as OOD detection score: Mahanabolis distance to the on in-distribution features and reconstruction error. In both cases statistics using the latent trajectories are more discriminative than using scores based on features. 
\begin{table}[h]
\centering
\resizebox{\linewidth}{!}{
\begin{tabular}{lcccccccc}
\toprule
 & \multicolumn{2}{c}{\textit{SUN397}} & \multicolumn{2}{c}{\textit{Places365}} & \multicolumn{2}{c}{\textit{Texture}} & \multicolumn{2}{c}{\textit{iNaturalist}} \\
\rowcolor[gray]{0.9}
Method & FPR $\downarrow$ & AUROC $\uparrow$ & FPR $\downarrow$ & AUROC $\uparrow$ & FPR $\downarrow$ & AUROC $\uparrow$ & FPR $\downarrow$ & AUROC $\uparrow$ \\
\midrule
\textit{KNN} & 100.00 & 42.60 & 100.00 & 32.40 & 34.50 & 89.40 & 86.40 & 68.60 \\
\textit{Mahalanobis} & 82.00 & 58.00 & 88.00 & 45.00 & 43.00 & 90.00 & 31.00 & 87.00 \\
\textit{Reconstruction} & 91.80 & 49.50 & 91.60 & 50.90 & 98.40 & 49.10 & 99.20 & 23.60 \\
\textit{d(Attractors)} & \textbf{29.60} & \textbf{91.20} & \textbf{29.90} & \textbf{91.00} & \textbf{25.90} & \textbf{92.60} & \textbf{29.90} & \textbf{91.30} \\
\bottomrule
\end{tabular}}
\caption{\emph{FPR and AUROC scores extended}:  results of the experiment in Figure~\ref{fig:ood_vitmae} extended, adding reconstruction error baseline and Mahalanobis distance on the features space.}
\label{tab:ood_}
\end{table}%

\subsubsection{Histograms OOD detection}
We report in Figure~\ref{fig:histograms_ood} additional histograms of In-Distribution vs Out-Of-Distribution scores, corresponding to the experiment in Section~\ref{sec:traj_ood}.
\begin{figure}[h!]
\centering
\begin{subfigure}[b]{0.23\linewidth}    
\includegraphics[width=\linewidth]{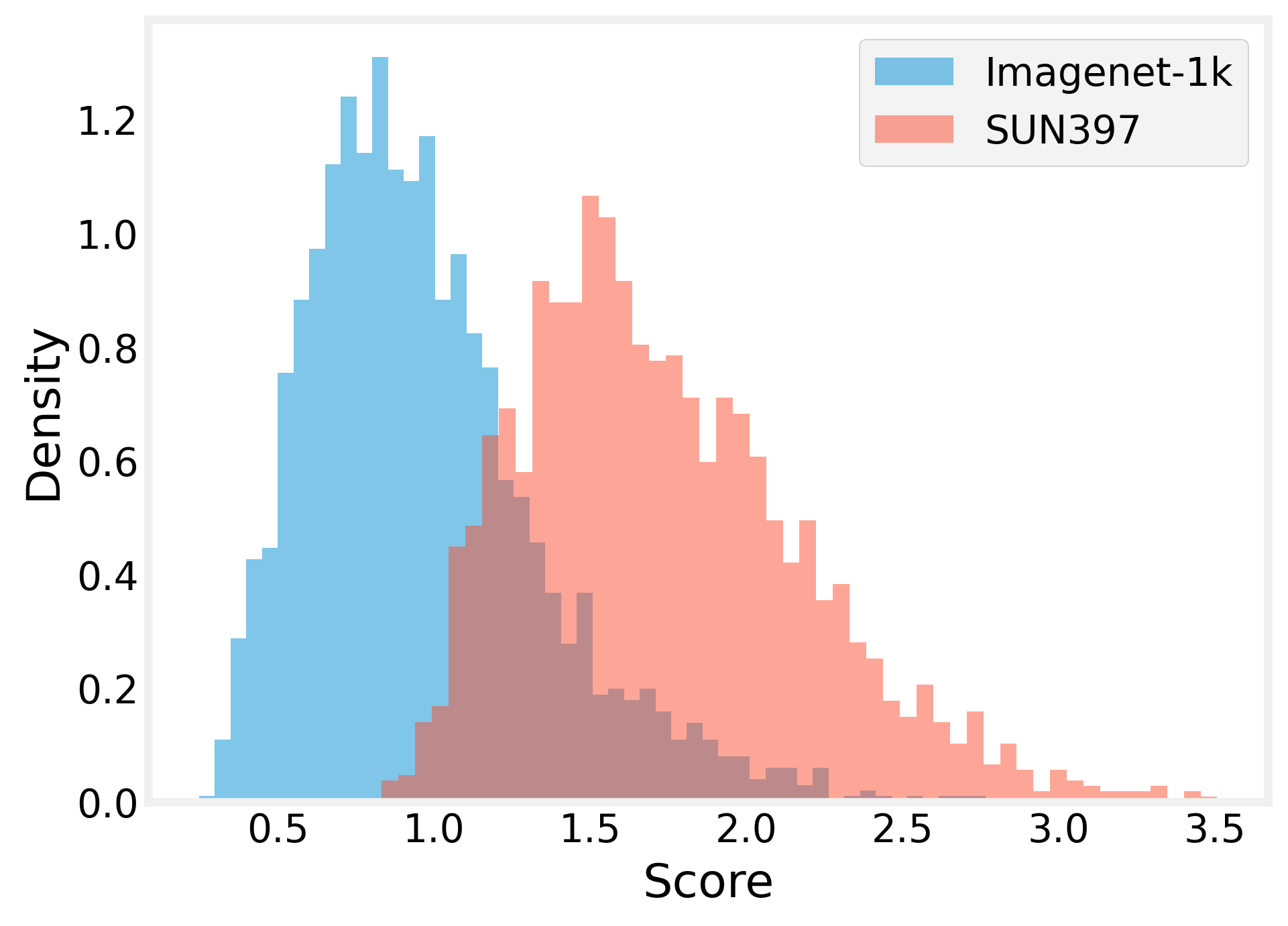}
 \end{subfigure}
  \begin{subfigure}[b]{0.23\linewidth}    
\includegraphics[width=\linewidth]{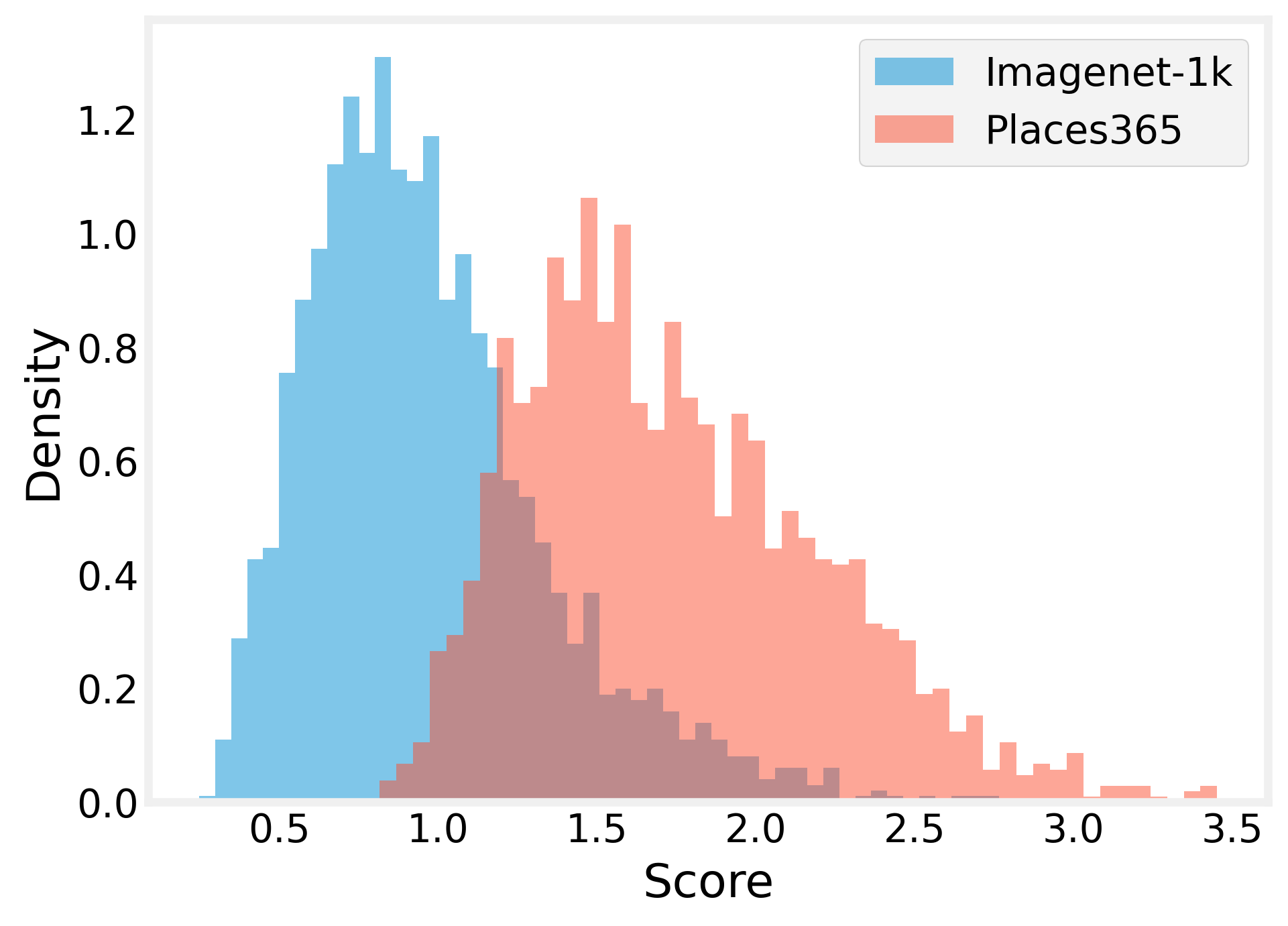}
 \end{subfigure}
   \begin{subfigure}[b]{0.23\linewidth} 
\includegraphics[width=\linewidth]{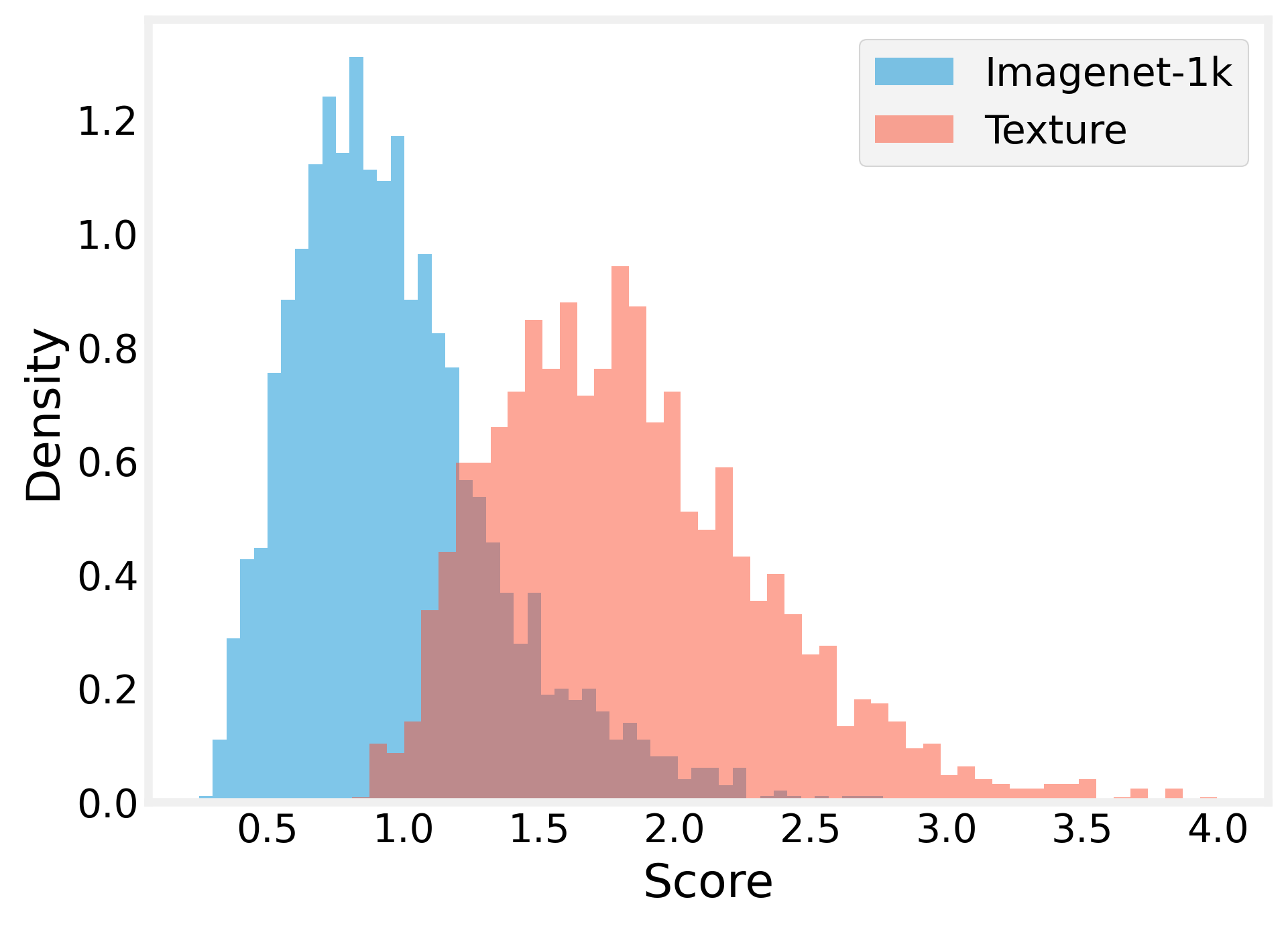}
 \end{subfigure}
\begin{subfigure}[b]{0.23\linewidth}    
\includegraphics[width=\linewidth]{figures/ood/inaturalist_.png}
 \end{subfigure} \\
  \begin{subfigure}[b]{0.23\linewidth}    
\includegraphics[width=\linewidth]{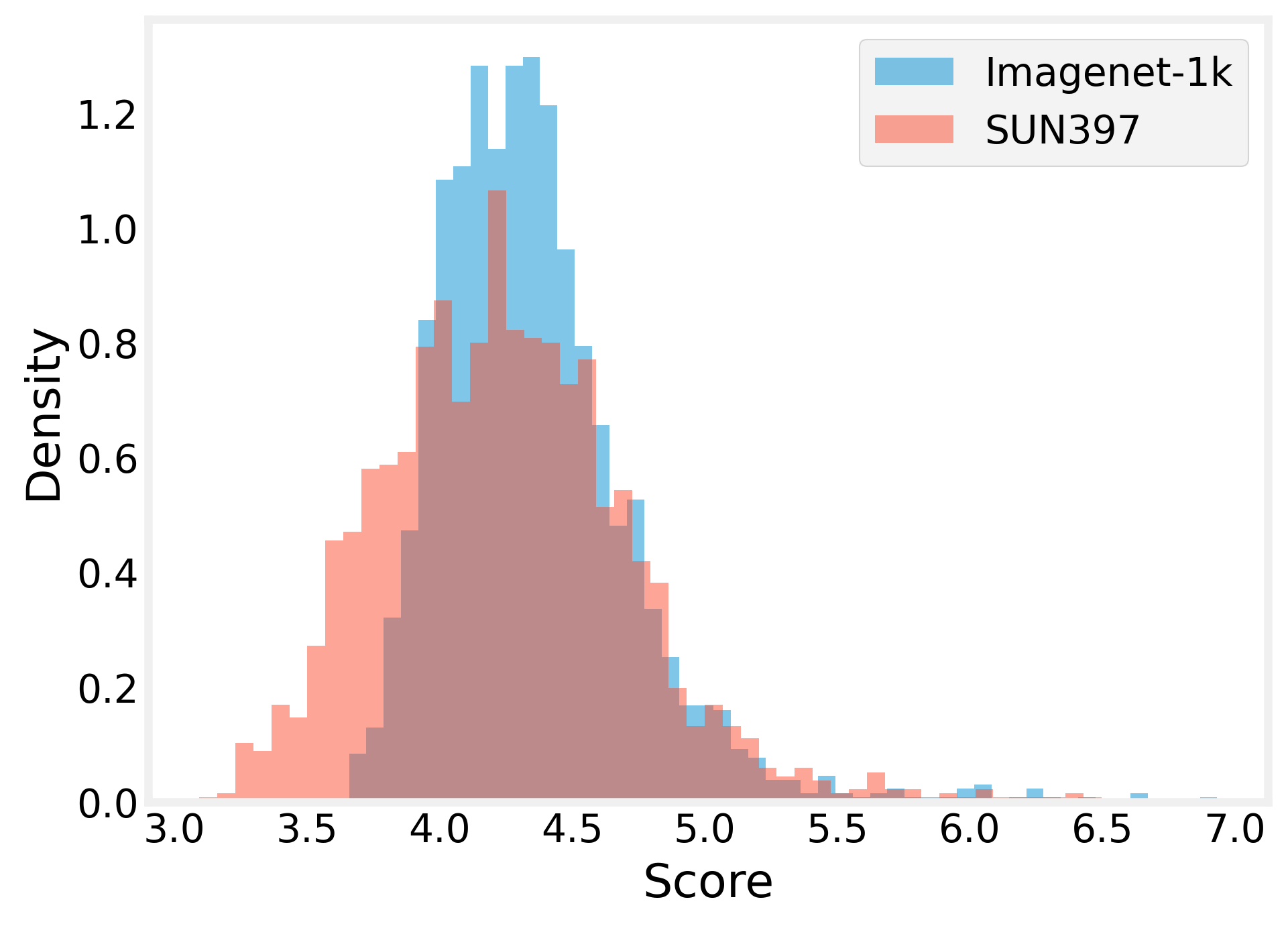}
 \end{subfigure}
  \begin{subfigure}[b]{0.23\linewidth}    
\includegraphics[width=\linewidth]{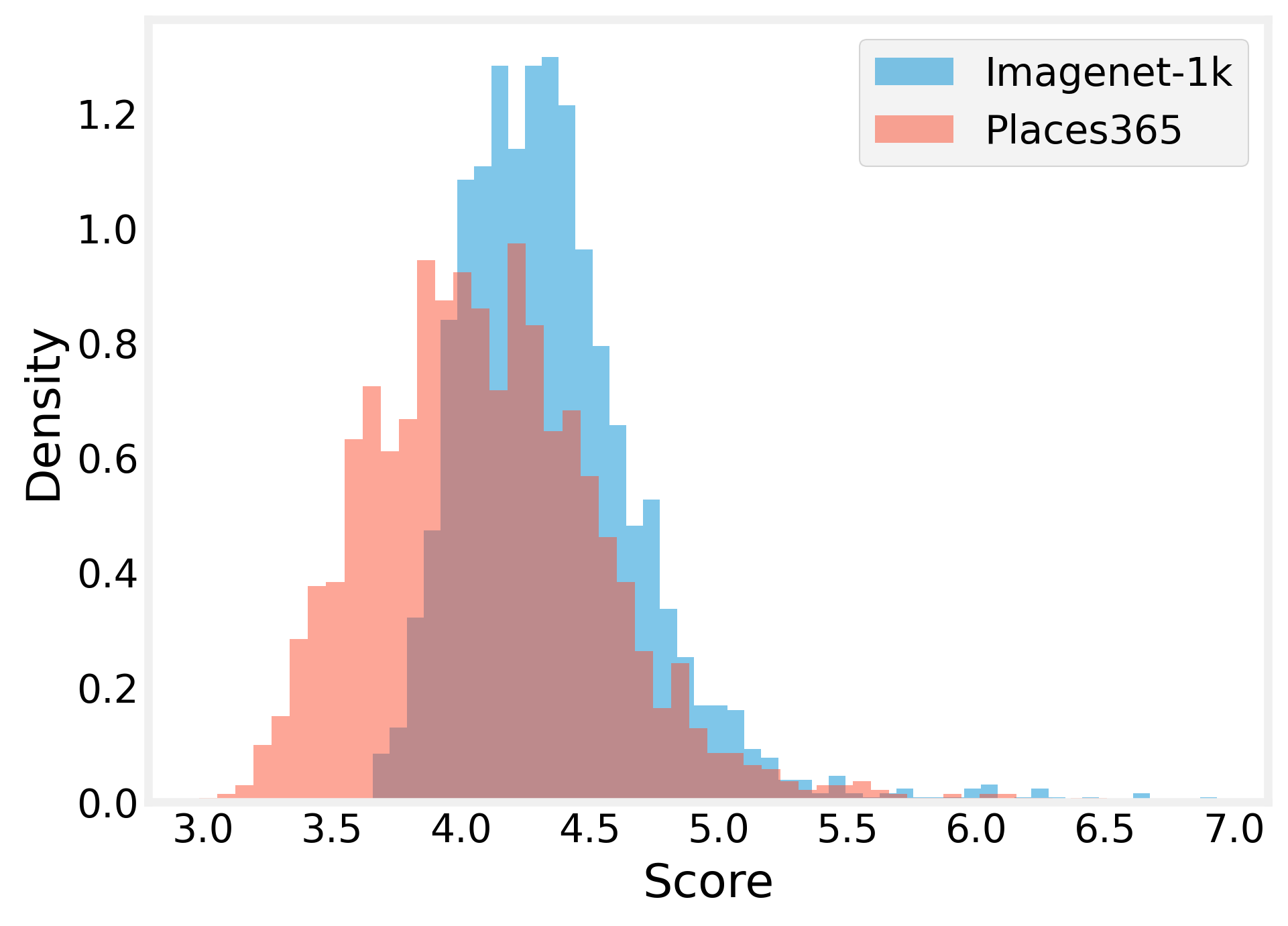}
 \end{subfigure}
   \begin{subfigure}[b]{0.23\linewidth}    
\includegraphics[width=\linewidth]{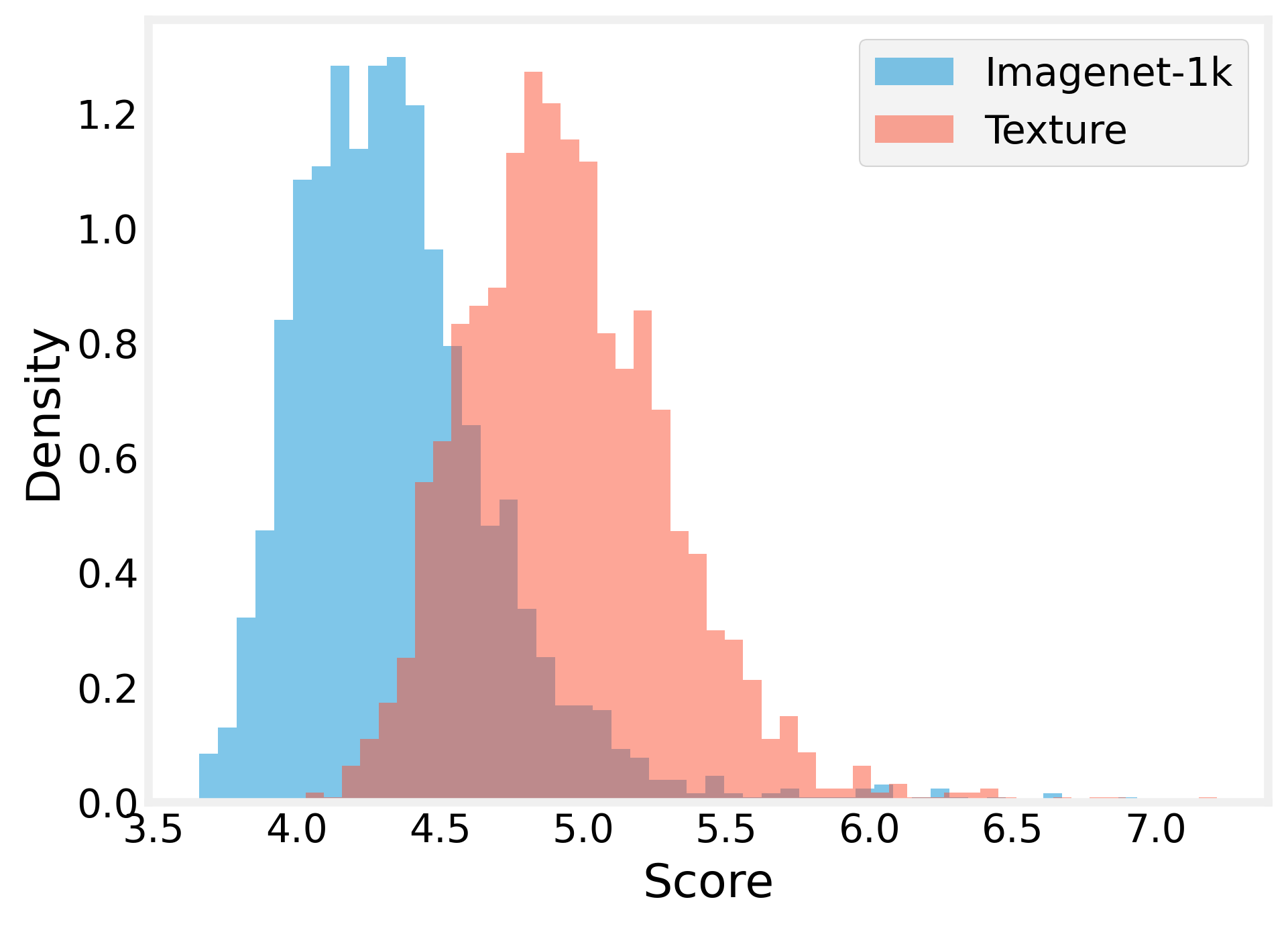}
 \end{subfigure}
\begin{subfigure}[b]{0.23\linewidth}    
\includegraphics[width=\linewidth]{figures/ood/inaturalist_baseline.png}
\end{subfigure}
\caption{Histograms from OOD detections }
\label{fig:histograms_ood}

\end{figure}

\subsection{Analysis speed of convergence}
In figure~\ref{fig:convergence_speed} we report an empirical validation of proposition~\ref{thm:convergence}, for a 2 dimensional bottleneck convolution autoencoder trained on \texttt{MNIST}, by sampling points in the training set and measuring their convergence speed in terms of number of iteration. We remark that assumptions for the proposition don't hold for this model, as the farer an initial condition is from an attractor the more higher order terms will dominate in the dynamics, making the evaluation of the spectrum of the Jacobian at the attractor insufficient.  We observe nevertheless, that the convergence speed bounds still correlates, providing a looser estimate for the number of iteration needed to converge towards an attractors with error $\epsilon$.
\begin{figure}
    \centering
    \includegraphics[width=0.5\linewidth]{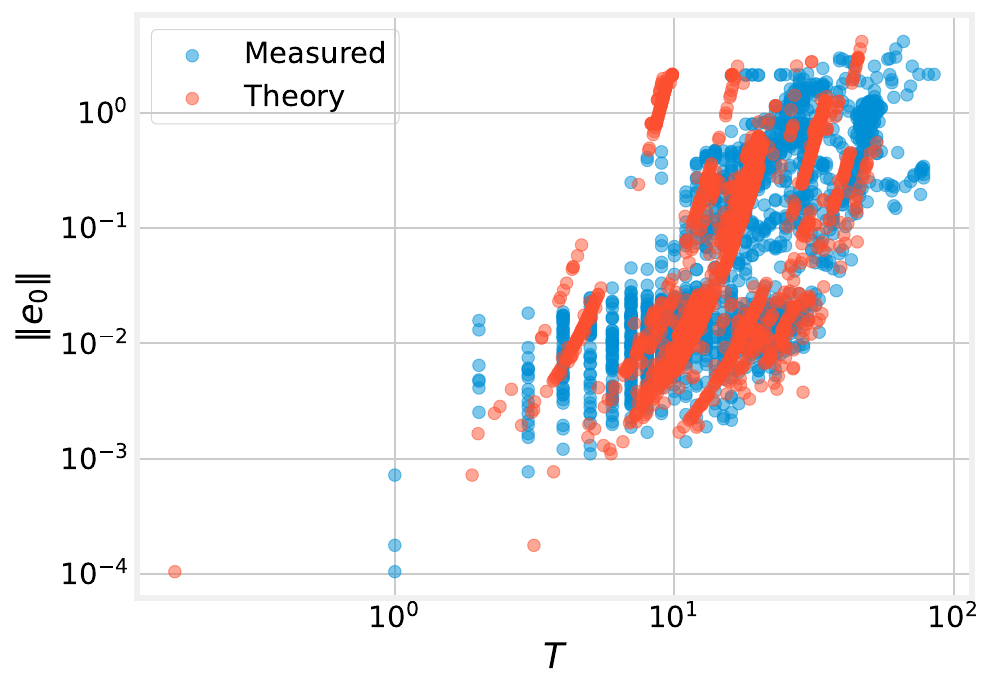}
    \caption{\emph{Converge speed analysis} We plot the convergence speed in terms of number of iterations vs the error (distance to the attractors) of the initial condition on a convolutional autoencoders trained on MNIST. The estimated convergence has a Pearson correlation coefficient of 0.67 to the true number of iterations to converge. Higher error in the estimates occur for farer initial conditions. }
    \label{fig:convergence_speed}
\end{figure}

\subsubsection{Convergence rates in pretrained models} 

\rebuttal{In Figure~\ref{fig:vitmae_convergence} we plot the rate of convergence in terms of latent residuals norms of the latent trajectories in ViT MAE autoencoders from  $2000$ samples drawn from the \texttt{Imagenet} dataset. We plot (a) the latent residual norm as function of the number od the iteration, showing that it approaches zero exponentially fast; (b) the norm of the embedding at each iteration. showing that they are bounded and the trajectory don't diverge.}

\begin{figure}[h]
    \centering
    \begin{subfigure}{0.4\linewidth}\includegraphics[width=\linewidth]{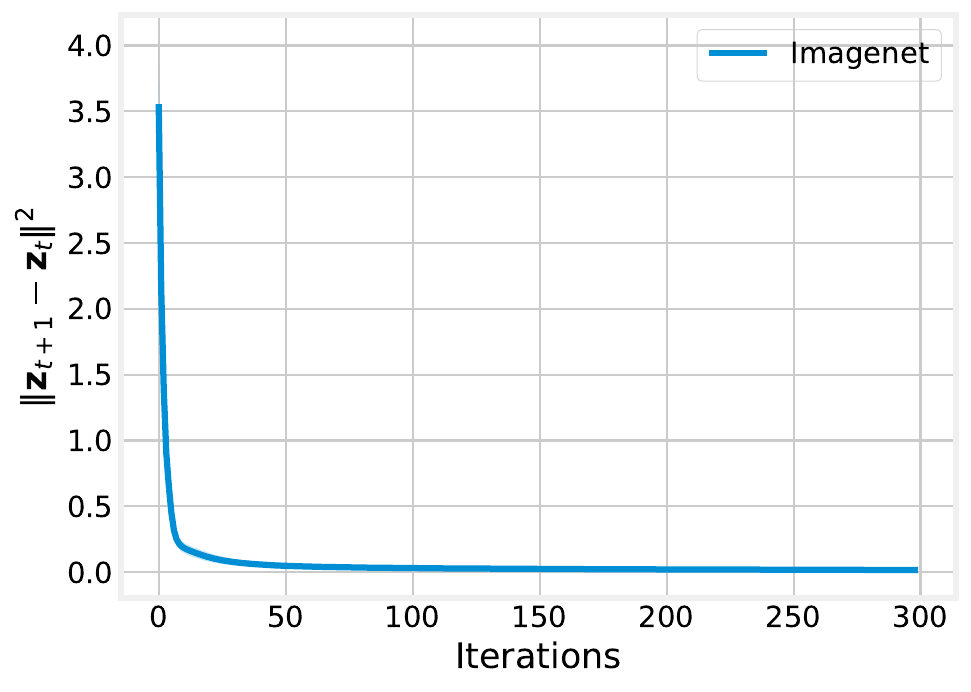}
    \caption{Latent residuals}
    \end{subfigure}
    \begin{subfigure}{0.4\linewidth}\includegraphics[width=\linewidth]{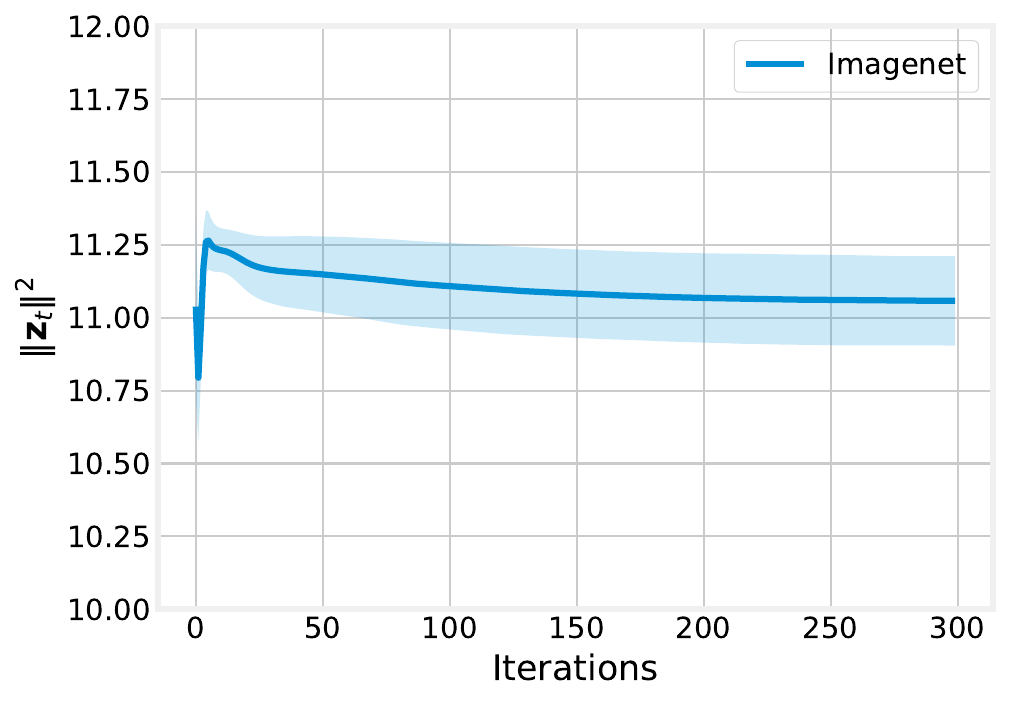}
    \caption{Embedding norms}
    \end{subfigure}
 \caption{\rebuttal{\emph{Latent trajectories in masked autoencoders are contractive.}: we plot norms of the latent residuals across iterations in a ViTMAE model (a) and norms of embeddings across iterations (b).}}\label{fig:vitmae_convergence}
\end{figure}

\subsection{Rank of attractor matrix}
In Figure~\ref{fig:rank_generalization} we report an alternative measure of generalization of the attractors computed in the experiment in Figure~\ref{fig:memVsgen} and  in Figure~\ref{fig:dynamics}. Specifically, we consider the matrix $\mathbf{X}^*=D(\mathbf{Z})^*$, and we compute its singular value decomposition $\mathbf{U}^*diag(\mathbf{s}^*)\mathbf{V}^*=\mathbf{X}^*$. We define the generalization entropy as the number of eigenvectors needed to explain $90\%$ of the variance of the matrix. We report this measure for both the models employed in the experiment in Figure\ref{fig:memVsgen} and in Figure~\ref{fig:dynamics}, showing that attractors are more expressive as the model generalized better, as well during training. 

\begin{figure}[h]
    \centering
    \begin{subfigure}{0.4\linewidth}\includegraphics[width=\linewidth]{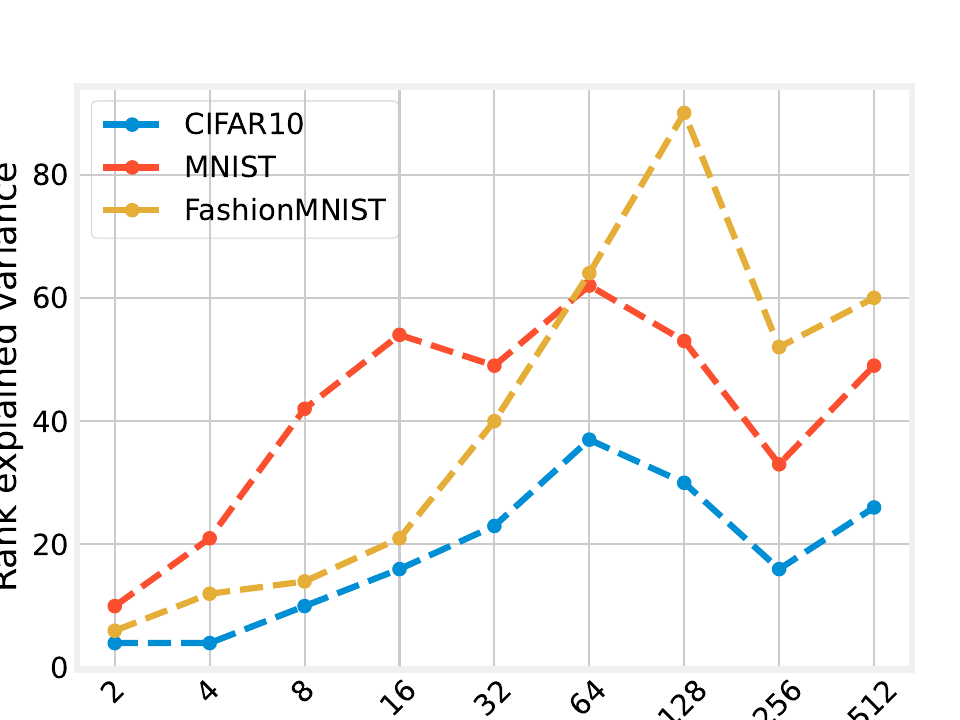}
    \end{subfigure}
    \begin{subfigure}{0.4\linewidth}\includegraphics[width=\linewidth]{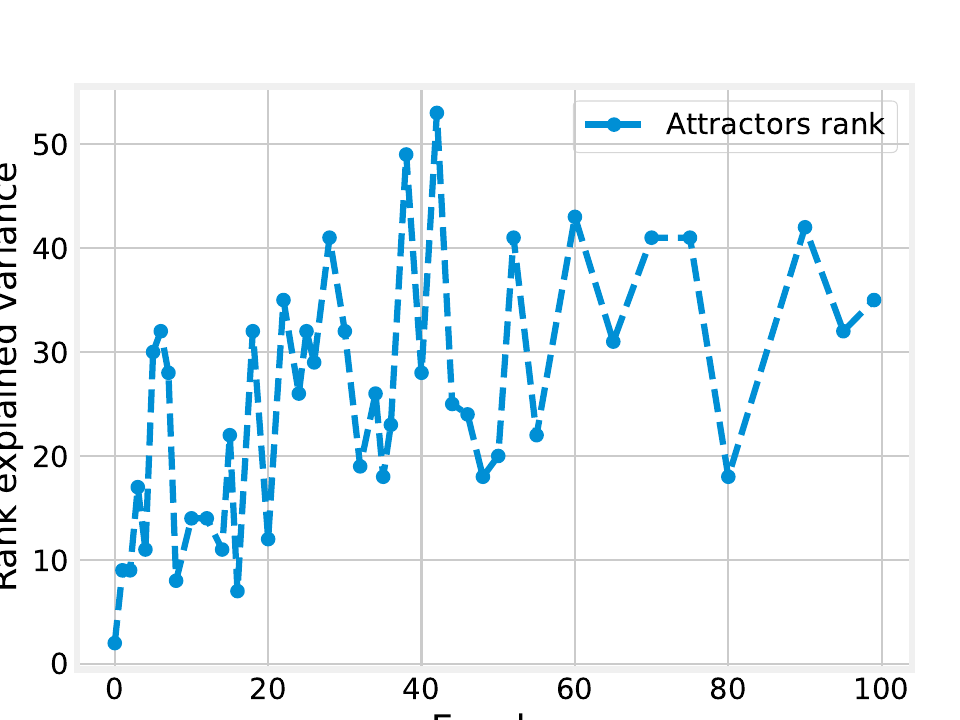}
    \end{subfigure}
 \caption{\emph{Generalization increase ranks of attractors matrix}: entropy rank as a function of the bottleneck dimension (\emph{left}) and during training, as a function of the number of epochs (\emph{right}). Transitioning from memorization to generalization }\label{fig:rank_generalization}
\end{figure}

\subsection{Memorization overfitting regime}
In this section, we test the memorization capabilities of models in different (strong) overparametrization regimes, similarly as tested in \citep{kadkhodaie2023generalization} for diffusion models and in \citep{radhakrishnan2020overparameterized,zhang2019identity} for the extremely overparametrized case (network trained on few samples). We remark that this case corresponds to an overfitting regime of the network, as opposed as underfitting (over-regularized) regime showed in the main paper. We train a convolutional autoencoder on subsamples of the \texttt{CIFAR,MNIST,FashionMNIST} datasets, with bottleneck dimension $k=128$ and weight decay $1e-4$. In Figure~\ref{fig:mem_overfit} we plot the memorization coefficient as a function of the dataset size, showing that networks trained on less data( strong overparametrization) tend to memorize more the training data. 

\begin{figure}[h]
    \centering
\includegraphics[width=0.4\linewidth]{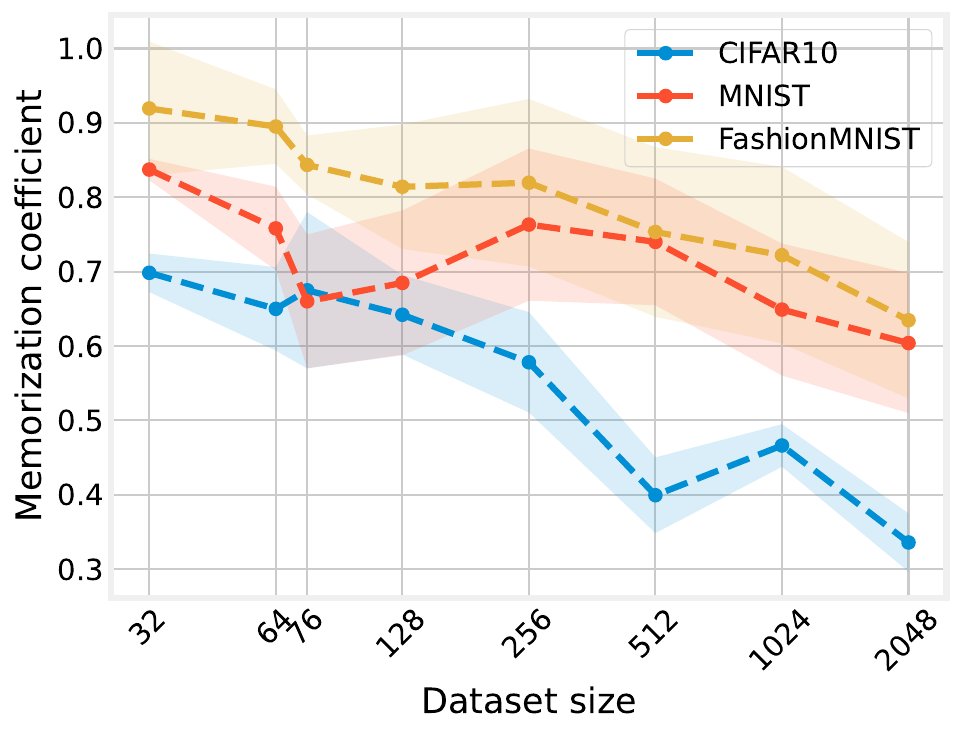}
 \caption{\emph{Strong overparametrization favors memorization}: Memorization coefficient as a function of the dataset size on \texttt{CIFAR,MNIST,FashionMNIST}, showing that networks trained on less data (\emph{strongly overparametrized}) tend to memorize more the training data. }\label{fig:mem_overfit}
\end{figure}

\subsection{Data-free weight probing: additional results}
We replicate the experiment in~\ref{sec:datafree}, on the larger XL version of the Stable diffusion architecture \citep{rombach2022high}, observing that attractors from noise still form an informative dictionary of signals which scales better w.r.t, to a random orthogonal basis. 
\begin{figure}
    \centering
    \includegraphics[width=0.7\linewidth]{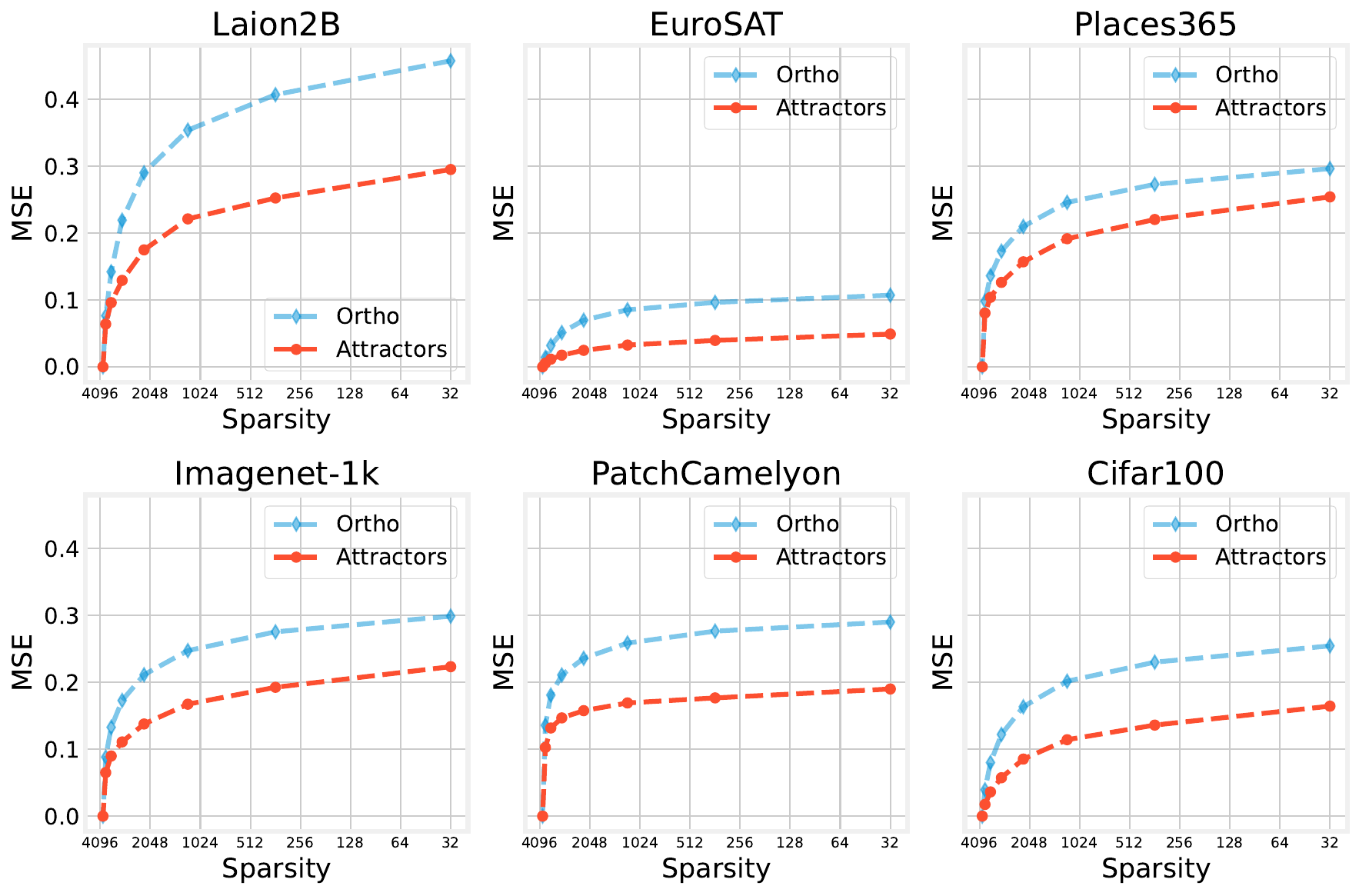}
    \caption{\emph{Data-free weight probing Stable diffusion XL AE }: the results on the larger version of the table diffusion autoencoder confirm that attractors from noise form an informative dictionary of signals to reconstruct different datasets.}
    \label{fig:sd-large-wp}
\end{figure}

\subsection{Beyond autoencoders model} \label{app:beyond}

\rebuttal{In this section we preliminary explore the extent to which latent vector exist beyond pretrained autoencoder models.}

\subsubsection{Latent vector fields in self supervised models}
\rebuttal{We first consider the case discriminative self-supervised (SSL) models such as DINOv2 \cite{oquab2023dinov2} and SigLIP2 \cite{tschannen2025siglip}. A straightforward idea to extend our framework to encoder-only models, is the one of training a decoder on top of the frozen encoder model. To this end we employ the decoders trained in \cite{zheng2025diffusion}, which provide pretrained decoders models for DINOv2 and SigLIP2.}

\rebuttal{To assess how the latent vector field behaves, we sample 500 samples from the \texttt{Imagenet1k} training dataset, and compute 300 iterations of Eq~\ref{eq:ode}. We monitor (a) the norm of the residuals in the output space (i.e. reconstruction error with respect to previous iteration) (b) the norm of the residuals in the latent space (i.e. the elements vector field at current time step) (c) the average norm of the embeddings at every step.}
\rebuttal{
We show the results in Figure~\ref{fig:RAE_results} observe that the reconstruction residuals and the latent space residuals both monotonically decrease at the beginning, with the former approaching 0, demonstrating a contraction behavior and then stabilize.
This together with the observation that the norm of the embeddings is bounded and does not increase across iterations makes us conclude that the latent vector field exists, does not diverge, and is well behaved. The latent space residuals stabilizing around iteration 150 highlight that the eigenvalues of $J_f(\bm{z}_t)$ are approaching $1$, indicating either that the iteration has reached an attractor region (as opposed to a point) or that the the map is still contractive but with a constant very close to 1.} 

\begin{figure}[h]
    \centering
    \begin{subfigure}{0.32\linewidth}\includegraphics[width=\linewidth]{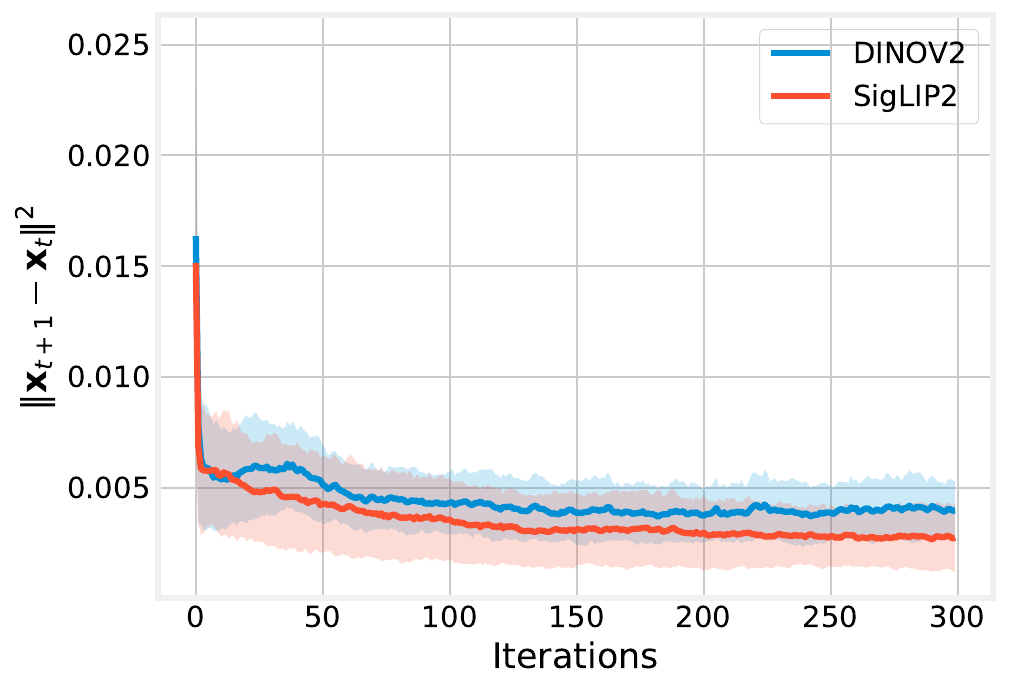}
    \caption{Output space residuals}
    \end{subfigure}
    \begin{subfigure}{0.32\linewidth}\includegraphics[width=\linewidth]{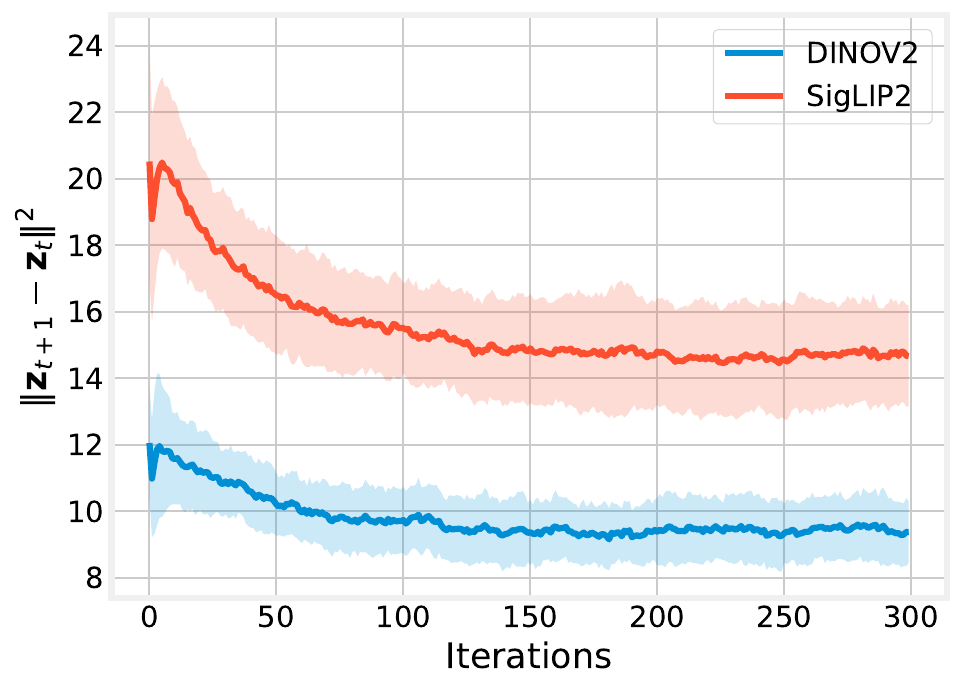}
    \caption{Latent space residuals}
    \end{subfigure}
    \begin{subfigure}{0.32\linewidth}\includegraphics[width=\linewidth]{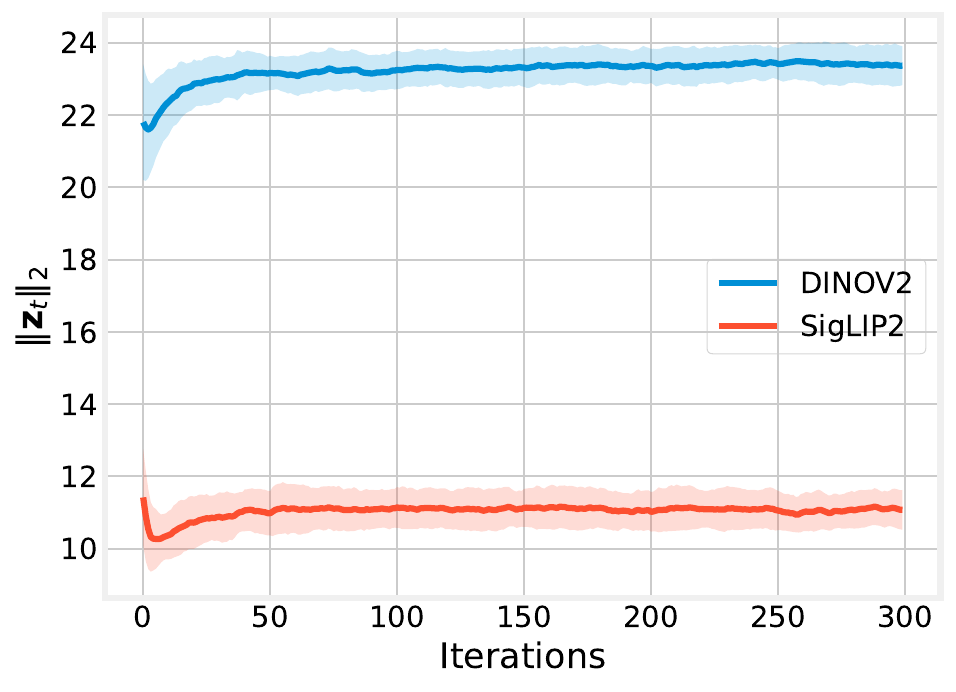}
    \caption{Embeddings norms}
\end{subfigure}
 \caption{\rebuttal{\emph{SSL models induce well-behaved latent vector fields}: For latent space iterations in DINOv2 and SigLIP2, we plot norm of residuals in output space (a);latent residuals norms (b); and norms of embeddings across iterations (c).}}\label{fig:RAE_results}
\end{figure}

\subsubsection{Latent vectors fields in LLMs}

\rebuttal{We now consider the case of next token predictors models, in particular  case light weight large language models.  In this case we consider the input-output mapping in representation space, by iterating representation computed after the embedding layer to the final layer before the softmax, therefore $f$ corresponds to the entire residual stream of the model. As models we consider Qwen3-0.6 B \cite{yang2025qwen3} and Smol-LM2 \cite{allal2025smollm2}. To test whether the latent vectors field are we sample $1000$ sample from the \texttt{Pile} dataset \cite{gao2020pile} and iterate Eq.~\ref{eq:ode} for 300 iteration monitoring residuals in the latent space (final layer) (a) and the norm of the final layer embeddings (b). In Figure~\ref{fig:LLM_results} we plot the results, observing that both models show contractive behavior (a) ithout diverging, given the bounded norms (b). }

\rebuttal{Taken together these results give us preliminary evidence that latent vector fields exist and are well defined beyond the case of autoencoder model, but we stress that further analysis should be performed in order to study them, and analyze their attractor modes whenever they exists. Also the impact of different architectural elements, for example for LLMs the role of positional embeddings, (e.g. rotary position embeddings \cite{su2024roformer} as opposed to absolute ones) pose questions on the resulting dynamics. We believe that studying properties of the latent vector field induced by models beyond autoencoders hold promise to understand mechanistically the properties of the models, and we find promising to study recent phenomenon such as latent space reasoning \cite{zhu2025survey}.}

\begin{figure}[h]
    \centering
    \begin{subfigure}{0.4\linewidth}\includegraphics[width=\linewidth]{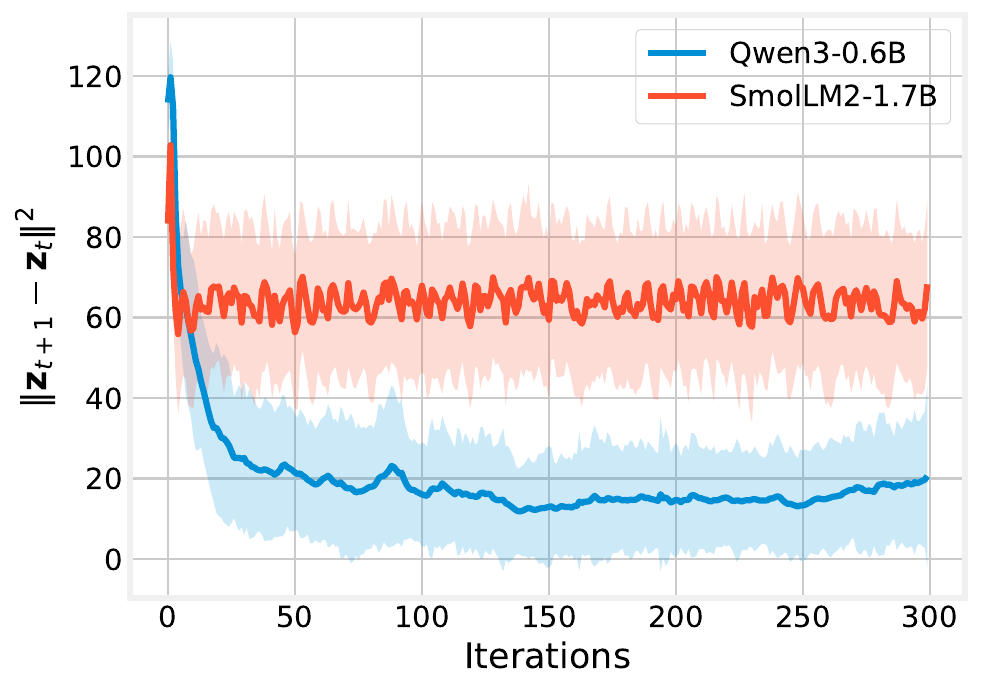}
    \caption{Latent space residuals}
    \end{subfigure}
    \begin{subfigure}{0.4\linewidth}\includegraphics[width=\linewidth]{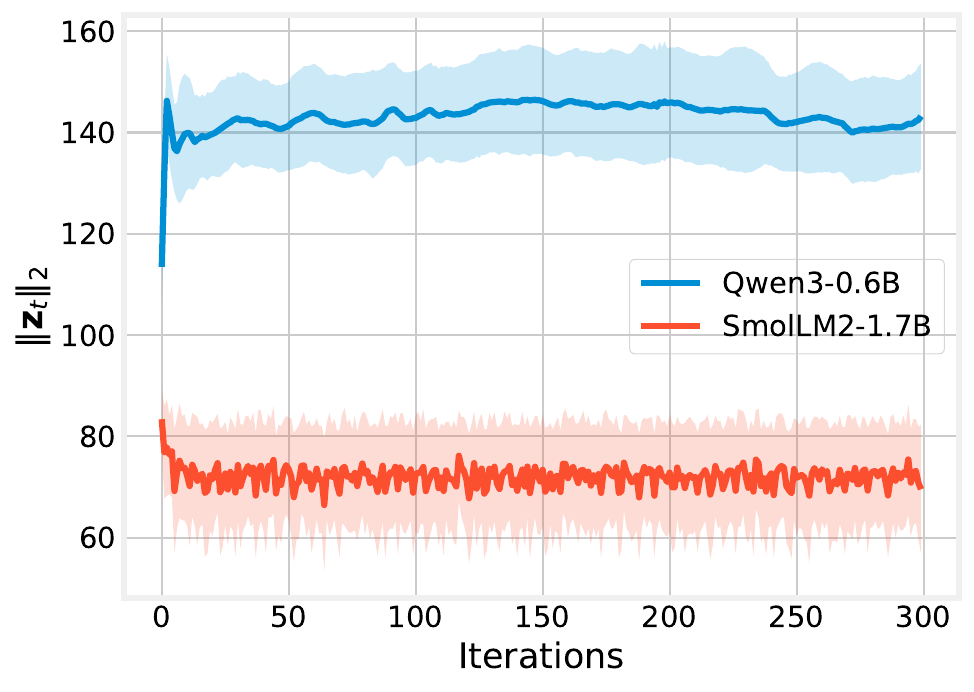}
    \caption{Embeddings norms}
\end{subfigure}
 \caption{\rebuttal{\emph{LLM models can induce well-behaved latent vector fields}: For latent space iterations in Qwen3-0.6B and SmolLM2-1.7b, we plot the latent residuals norms(a); and norms of embeddings across iterations (b).}}\label{fig:LLM_results}
\end{figure}

 \subsection{Qualitative visualization of attractors}

In this section we provide more visualizations from decoded attractors other than the ones in Figure~\ref{fig:memVsgen} (c). 
In Figure~\ref{fig:attractors_qualitative} we report examples of decoded attractors from the latent space of the Stable Diffusion autoencoder. Attractors don't show semantically meaningful patterns but rather geometrical and texture patterns,resembling elements of a basis, similarly to the last rows of Figure~\ref{fig:memVsgen} (c). 
\begin{figure}[h]
    \centering
\includegraphics[width=0.8\linewidth]{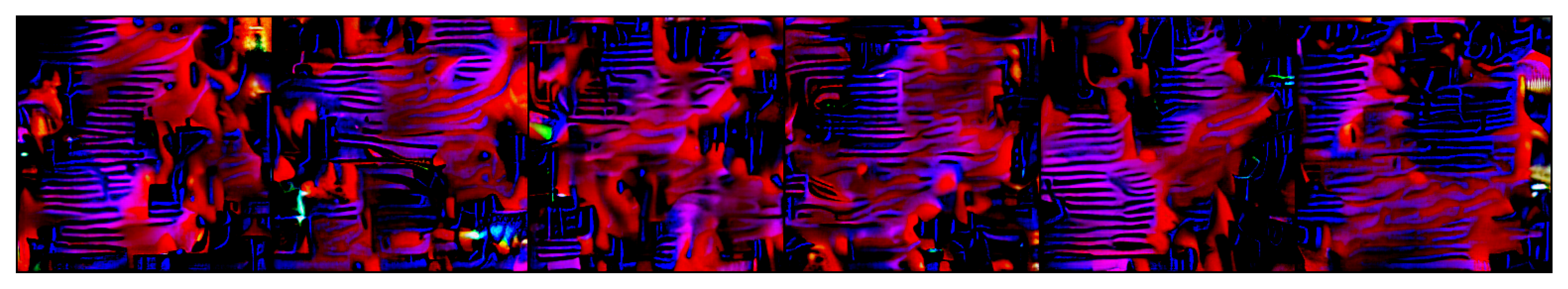}
 \caption{\rebuttal{\emph{Visualization of decoded attractors from Stable Diffusion AE.}}}\label{fig:attractors_qualitative}
\end{figure}

\begin{table}[h]
    \centering
    \caption{Architecture of the AutoencoderKL in Stable Diffusion}
    \label{tab:autoencoderKL}
    \resizebox{0.5\linewidth}{!}{

    \begin{tabular}{llc}
    \rowcolor{gray!20}
        \toprule
        \textbf{Layer} & \textbf{Details} & \textbf{Output Shape} \\
        \midrule
        \multicolumn{3}{c}{\textbf{Encoder}} \\
        \midrule
        Input & $3 \times 256 \times 256$ image & $3 \times 256 \times 256$ \\
        Conv2D & $3 \to 128$, $3\times3$, stride 1 & $128 \times 256 \times 256$ \\
        ResNet Block & $128$ channels & $128 \times 256 \times 256$ \\
        Conv2D & $128 \to 256$, $3\times3$, stride 2 & $256 \times 128 \times 128$ \\
        ResNet Block & $256$ channels & $256 \times 128 \times 128$ \\
        Conv2D & $256 \to 512$, $3\times3$, stride 2 & $512 \times 64 \times 64$ \\
        ResNet Block & $512$ channels & $512 \times 64 \times 64$ \\
        Conv2D & $512 \to 512$, $3\times3$, stride 2 & $512 \times 32 \times 32$ \\
        ResNet Block & $512$ channels & $512 \times 32 \times 32$ \\
        Conv2D & $512 \to 512$, $3\times3$, stride 2 & $512 \times 16 \times 16$ \\
        ResNet Block & $512$ channels & $512 \times 16 \times 16$ \\
        Conv2D & $512 \to 512$, $3\times3$, stride 2 & $512 \times 8 \times 8$ \\
        ResNet Block & $512$ channels & $512 \times 8 \times 8$ \\
        \midrule
        Mean and Log Variance & $512 \to 4$, $1\times1$ & $4 \times 8 \times 8$ \\
        Sampling & Reparameterization trick & $4 \times 8 \times 8$ \\
        \midrule
        \multicolumn{3}{c}{\textbf{Decoder}} \\
        \midrule
        Conv2D & $4 \to 512$, $3\times3$, stride 1 & $512 \times 8 \times 8$ \\
        ResNet Block & $512$ channels & $512 \times 8 \times 8$ \\
        Upsample & Scale factor 2 & $512 \times 16 \times 16$ \\
        ResNet Block & $512$ channels & $512 \times 16 \times 16$ \\
        Upsample & Scale factor 2 & $512 \times 32 \times 32$ \\
        ResNet Block & $512$ channels & $512 \times 32 \times 32$ \\
        Upsample & Scale factor 2 & $512 \times 64 \times 64$ \\
        ResNet Block & $512$ channels & $512 \times 64 \times 64$ \\
        Upsample & Scale factor 2 & $256 \times 128 \times 128$ \\
        ResNet Block & $256$ channels & $256 \times 128 \times 128$ \\
        Upsample & Scale factor 2 & $128 \times 256 \times 256$ \\
        ResNet Block & $128$ channels & $128 \times 256 \times 256$ \\
        Conv2D & $128 \to 3$, $3\times3$, stride 1 & $3 \times 256 \times 256$ \\
        \midrule
        \multicolumn{3}{c}{\textbf{Output: Reconstructed Image}} \\
        \bottomrule
        \label{tab:arch_autoencoderKL}

    \end{tabular}}
\end{table}

\newpage

\section{Regularized autoencoder enforce contractiveness}

In Table~\ref{tab:autoencoders-contractive}, we list many AE variants including denoising AEs (DAEs)~\citep{vincent2008extracting}, sparse AEs (SAEs)~\citep{ng2011sparse}, variational AEs  (VAEs) ~\citep{kingma2013auto} and other variants~\citep{rifai2011contractive,alain2014regularized,gao2024scaling} and show how their objectives enforce local contractive solutions around training points.

We provide below a precise connection for the case of weight decay and feed-forward networks. 

\paragraph{Example: weight decay}

\rebuttal{
Consider an $L$-layer feed-forward autoencoder $F_\theta = D_\theta \circ E_\theta$, where each layer is defined as $h_{\ell} = \phi_{\ell}(W_{\ell} h_{\ell-1} + b_\ell)$ and the activations $\phi_\ell$ are 1-Lipschitz (e.g., ReLU, GELU, SiLU, \textit{tanh}). The overall map can be written as
\[
F_\theta(x) = W_L \phi_{L-1}\!\big(W_{L-1} \cdots \phi_1(W_1 x)\big).
\]
Let $D_\ell(x) = \mathrm{diag}(\phi_\ell'(W_\ell h_{\ell-1}(x)))$ be the diagonal Jacobian of the activation at layer $\ell$. By the chain rule,
\[
J_\theta(x) = W_L D_{L-1}(x) W_{L-1} \cdots D_1(x) W_1.
\]
Since each activation is 1-Lipschitz, $\|D_\ell(x)\|_2 \le 1$ for all $\ell$, and therefore
\[
\|J_\theta(x)\|_2 \le \prod_{\ell=1}^L \|W_\ell\|_2.
\]
A sufficient condition for local contraction is thus
\[
\prod_{\ell=1}^L \|W_\ell\|_2 < 1 \quad \Rightarrow \quad \|J_\theta(x)\|_2 < 1.
\]
Training with weight decay minimizes $\mathcal{L}(\theta) + \lambda \sum_{\ell=1}^L \|W_\ell\|_F^2$, and since $\|W_\ell\|_2 \le \|W_\ell\|_F$, this suppresses the product of spectral norms appearing above:
\[
\prod_{\ell=1}^L \|W_\ell\|_2 \le \prod_{\ell=1}^L \|W_\ell\|_F,
\]
biasing the solution toward $\|J_\theta(x)\|_2 < 1$ even without an explicit Jacobian penalty. 
}

\begin{table}[h!]
\centering
\small
\resizebox{0.99\linewidth}{!}{
\begin{tabular}{@{}p{3.8cm}p{4.6cm}p{4.8cm}p{4.8cm}@{}}
\rowcolor{gray!20}\toprule
\textbf{Autoencoder} &
\textbf{Regularizer \( \mathcal{R}_\theta(\mathbf{x}) \)} &
\textbf{Description} &
\textbf{Effect on Contractiveness} \\
\midrule

\textbf{Standard AE (rank \( \leq k \))} &
-- (bottleneck dimension \( k \)) &
No explicit regularization, but dimensionality constraint induces compression &
Bottleneck limits Jacobian rank: \( \mathrm{rank}(J_{f_\theta}(\mathbf{x})) \leq k \) \\

\addlinespace[0.5em]

\textbf{Plain AE with Weight Decay} &
\( \|\mathbf{W}_E\|_F^2 + \|\mathbf{W}_D\|_F^2 \) &
L2 penalty on encoder and decoder weights &
Reduces \( \|J_{f_\theta}(\mathbf{x})\| \leq L_\sigma^2 \|\mathbf{W}_D\|\|\mathbf{W}_E\| \) by shrinking weight norms \\

\addlinespace[0.5em]

\textbf{Deep AE with Weight Decay}  &
\( \sum_{\ell} \|\mathbf{W}_E^{(\ell)}\|_F^2 + \|\mathbf{W}_D^{(\ell)}\|_F^2 \) &
Weight decay across multiple layers of deep encoder/decoder &
Layerwise shrinkage enforces smoother, more contractive composition \( J_{f_\theta}(\mathbf{x}) \) \\

\addlinespace[0.5em]

\textbf{R-Contractive AE} \citep{alain2013gated} &
\( \|J_{f_\theta}(\mathbf{x})\|_F^2 \) &
Penalizes Jacobian of the full map \( f_\theta = D_\theta \circ E_\theta \) &
Encourages stability of reconstructions: \( f_\theta(\mathbf{x}) \approx f_\theta(\mathbf{x} + \delta) \) \\

\addlinespace[0.5em]

\textbf{Contractive AE} \citep{rifai2011contractive} &
\( \|J_{E_\theta}(\mathbf{x})\|_F^2 = \sum_{i=1}^{k} \|J_{E_i}(\mathbf{x})\|^2 \) &
Penalizes encoder Jacobian norm &
Explicitly enforces local flatness of encoder \\

\addlinespace[0.5em]

\textbf{Sparse AE (KL, sigmoid)} \citep{ng2011sparse} &
\( \sum_{i=1}^k \mathrm{KL}(\rho \,\|\, \hat{\rho}_i), \quad \hat{\rho}_i = \mathbb{E}_{\mathbf{x}}[E_i(\mathbf{x})] \) &
Enforces low average activation under sigmoid &
Saturated units \( \Rightarrow J_{E_i}(\mathbf{x}) \approx 0 \Rightarrow \|J_{E_\theta}(\mathbf{x})\| \) small \\

\addlinespace[0.5em]

\textbf{Sparse AE (L1, ReLU)} \citep{ng2011sparse} &
\( \|E_\theta(\mathbf{x})\|_1 \) &
Promotes sparsity of ReLU activations &
Inactive ReLU units have zero derivatives \( \Rightarrow \) sparse Jacobian \( J_{E_\theta}(\mathbf{x}) \) \\

\addlinespace[0.5em]

\textbf{Denoising AE} \citep{vincent2008extracting,alain2014regularized} &
\( \mathbb{E}_{\tilde{\mathbf{x}} \sim q(\tilde{\mathbf{x}}|\mathbf{x})} \|\mathbf{x} - f_\theta(\tilde{\mathbf{x}})\|^2 \) &
Reconstructs from noisy input &
For small noise: \( f_\theta(\mathbf{x}) - \mathbf{x} \approx \sigma^2 \nabla \log p(\mathbf{x}) \), a contractive vector field \( J_{f_\theta}(\mathbf{x}) \) \\

\addlinespace[0.5em]

\textbf{Variational AE (VAE)}\citep{kingma2013auto} &
\( \mathrm{KL}(q_\theta(\mathbf{z}|\mathbf{x}) \,\|\, p(\mathbf{z})) \), where \( q(\mathbf{z}|\mathbf{x}) = \mathcal{N}(\mu(\mathbf{x}), \Sigma(\mathbf{x})) \) &
Encourages latent distribution to match prior &
Smooths \( \mu(\mathbf{x}), \Sigma(\mathbf{x}) \); penalizes sharp variations in encoder \( J_{\mu}(\mathbf{x}), J_{\Sigma}(\mathbf{x}) \) \\

\addlinespace[0.5em]

\textbf{Masked AE (MAE)}\citep{he2022masked} &
\( \mathbb{E}_{M} \left[ \| M \odot (\mathbf{x} - f_\theta(M \odot \mathbf{x})) \|^2 \right] \) &
Learns to reconstruct from partial input &
Promotes invariance to missing entries \( \Rightarrow \|J_{f_\theta}(\mathbf{x})\| \) low \\

\bottomrule
\end{tabular}
}
\caption{Unified formulation of autoencoder variants as minimizing reconstruction error plus a regularizer that encourages local contractiveness.}
\label{tab:autoencoders-contractive}
\end{table}

\section{Additional implementation details}

\begin{table}[h]
    \centering
    \caption{Architecture of the autoencoder employed in experiments on \texttt{CIFAR, MNIST, FashionMNIST} datasets}
    \label{tab:conv_autoencoder_transpose}
    \resizebox{0.6\linewidth}{!}{
    \begin{tabular}{llc}
    \rowcolor{gray!20}
        \toprule
        \textbf{Layer} & \textbf{Details} & \textbf{Output Shape} \\
        \midrule
        \multicolumn{3}{c}{\textbf{Encoder}} \\
        \midrule
        Input & $C \times H \times W$ image & $C \times H \times W$ \\
        Conv2D & $C \to d$, $3 \times 3$, stride 2, pad 1 & $d \times \frac{H}{2} \times \frac{W}{2}$ \\
        Conv2D & $d \to 2d$, $3 \times 3$, stride 2, pad 1 & $2d \times \frac{H}{4} \times \frac{W}{4}$ \\
        Conv2D & $2d \to 4d$, $3 \times 3$, stride 2, pad 1 & $4d \times \frac{H}{8} \times \frac{W}{8}$ \\
        Conv2D & $4d \to 8d$, $3 \times 3$, stride 2, pad 1 & $8d \times \frac{H}{16} \times \frac{W}{16}$ \\
        Flatten & --- & $8d \cdot \frac{H}{16} \cdot \frac{W}{16}$ \\
        Bottleneck & Linear layer or projection & $z$ (latent code) \\
        \midrule
        \multicolumn{3}{c}{\textbf{Decoder}} \\
        \midrule
        Unflatten & $z \to 8d \times \frac{H}{16} \times \frac{W}{16}$ & $8d \times \frac{H}{16} \times \frac{W}{16}$ \\
        ConvTranspose2D & $8d \to 4d$, $3 \times 3$, stride 2, pad 1, output pad 1 & $4d \times \frac{H}{8} \times \frac{W}{8}$ \\
        ConvTranspose2D & $4d \to 2d$, $3 \times 3$, stride 2, pad 1, output pad 1 & $2d \times \frac{H}{4} \times \frac{W}{4}$ \\
        ConvTranspose2D & $2d \to d$, $3 \times 3$, stride 2, pad 1, output pad 1 & $d \times \frac{H}{2} \times \frac{W}{2}$ \\
        ConvTranspose2D & $d \to C$, $3 \times 3$, stride 2, pad 1, output pad 1 & $C \times H \times W$ \\
        \midrule
        \multicolumn{3}{c}{\textbf{Output: Reconstructed Image}} \\
        \bottomrule
    \end{tabular}}
    \label{tab:autoencoderMNIST}
\end{table}

\label{app:implementation}
For the experiments in Figures~\ref{fig:memVsgen},~\ref{fig:dynamics},~\ref{fig:mem_overfit} we considered convolutional autoencoders with architecture specified in Table~\ref{tab:autoencoderMNIST}. We train the models with Adam \citep{kingma2014adam} with learning rate $5e-4$, linear step learning rate scheduler, and weight decay $1e-4$ for $500$ epochs.
For the experiements in section~\ref{sec:datafree}, we use the pretrained autoencoder of $\citep{rombach2022high}$. We report in Table $\ref{tab:arch_autoencoderKL}$ the architectural details and we refer to the paper for training details.
For the experiments in section~\ref{sec:traj_ood}, we use the pretrained autoencoder of $\citep{he2022masked}$, considering the base model. We refer to the paper for training details.
For computing attractors in experiments in Figures~\ref{fig:memVsgen},\ref{fig:dynamics},~\ref{fig:mem_overfit}, we compute the iterations $\mathbf{z}_{t+1}=f(\mathbf{z}_{t})$ until $\| f(\mathbf{z}_{t+1}) - f(\mathbf{z}_{t}) \|_2^2< 1e-6$ or reaching $t=3000$.
Similarly in experiments in Sections~\ref{sec:datafree},~\ref{sec:traj_ood}, we compute the iterations $\mathbf{z}_{t+1}=f(\mathbf{z}_{t})$ until $\| f(\mathbf{z}_{t+1}) - f(\mathbf{z}_{t}) \|_2^2< 1e-5$ or reaching $t=500$.
All experiments are performed on a GPU NVIDIA 3080TI in Python code, using the PyTorch library \citep{paszke2019pytorchimperativestylehighperformance}.


\clearpage
\newpage

\end{document}